\def\eqref#1{equation~\ref{#1}}
\def\1{\bm{1}}
\def\ve{{\bm{e}}}
\def\vv{{\bm{v}}}
\def\vw{{\bm{w}}}
\def\vx{{\bm{x}}}
\def\vz{{\bm{z}}}
\def\mI{{\bm{I}}}
\DeclareMathAlphabet{\mathsfit}{\encodingdefault}{\sfdefault}{m}{sl}
\SetMathAlphabet{\mathsfit}{bold}{\encodingdefault}{\sfdefault}{bx}{n}
\def\gD{{\mathcal{D}}}
\def\gF{{\mathcal{F}}}
\def\gH{{\mathcal{H}}}
\def\gN{{\mathcal{N}}}
\def\gO{{\mathcal{O}}}
\def\gP{{\mathcal{P}}}
\def\gR{{\mathcal{R}}}
\def\gX{{\mathcal{X}}}
\def\gY{{\mathcal{Y}}}
\def\sI{{\mathbb{I}}}
\def\sN{{\mathbb{N}}}
\def\sP{{\mathbb{P}}}
\def\sR{{\mathbb{R}}}
\newcommand{\E}{\mathbb{E}}
\DeclareMathOperator*{\argmax}{arg\,max}
\DeclareMathOperator*{\argmin}{arg\,min}
\DeclareMathOperator{\sign}{sign}
\useunder{\uline}{\ul}{}
\newcolumntype{Y}{X[c]}
\definecolor{darkblue}{rgb}{0.0,0.0,0.65}
\definecolor{darkred}{rgb}{0.65,0.0,0.0}
\definecolor{darkgreen}{rgb}{0.0,0.5,0.0}
\definecolor{tab:blue}{RGB}{31,119,180}  
\definecolor{tab:red}{RGB}{214,39,40}  
\definecolor{tab:green}{RGB}{44,160,44}  
\definecolor{tab:orange}{RGB}{255,127,14}  
\theoremstyle{plain}
\newtheorem{theorem}{Theorem}
\newtheorem{proposition}{Proposition}
\newtheorem{lemma}{Lemma}
\newtheorem{corollary}{Corollary}
\newtheorem{assumption}{Assumption}
\newtheorem{remark}{Remark}
\theoremstyle{definition}
\newtheorem{definition}{Definition}
 \newcommand\mcc[1]{\multicolumn{1}{c|}{#1}}
  \newcommand\mccc[1]{\multicolumn{1}{c}{#1}}
\DeclareMathOperator*{\sgn}{sgn}
\DeclareMathSymbol{\shortminus}{\mathbin}{AMSa}{"39}
\newcommand{\tran}{^{\mkern-1.5mu\mathsf{T}}}
\newcommand{\nth}{\textsuperscript{th}\xspace}
\title{Querying Easily Flip-flopped Samples\\ for Deep Active Learning}
\author{Seong Jin Cho$^{1,2}$, \ Gwangsu Kim$^3$, \ Junghyun Lee$^4$, \ Jinwoo Shin$^4$, \ Chang D. Yoo$^1$\thanks{Corresponding author}
\\
$^1$Department of Electrical Engineering, KAIST,
$^2$Korea Institute of Oriental Medicine,\\
$^3$Department of Statistics, Jeonbuk National University,
$^4$Kim Jaechul Graduate School of AI, KAIST \\
\texttt{\{ipcng00,jh\_lee00,jinwoos,cd\_yoo\}@kaist.ac.kr, s88012@jbnu.ac.kr}
}
\begin{document}

\maketitle

\begin{abstract}

Active learning, a paradigm within machine learning, aims to select and query unlabeled data to enhance model performance strategically. A crucial selection strategy leverages the model's predictive uncertainty, reflecting the informativeness of a data point. While the sample's distance to the decision boundary intuitively measures predictive uncertainty, its computation becomes intractable for complex decision boundaries formed in multiclass classification tasks. This paper introduces the {\it least disagree metric} (LDM), the smallest probability of predicted label disagreement. We propose an asymptotically consistent estimator for LDM under mild assumptions. The estimator boasts computational efficiency and straightforward implementation for deep learning models using parameter perturbation. The LDM-based active learning algorithm queries unlabeled data with the smallest LDM, achieving state-of-the-art {\it overall} performance across various datasets and deep architectures, as demonstrated by the experimental results.

\end{abstract}

\section{Introduction}
\label{sec:introduction}
Machine learning frequently necessitates the laborious annotation of vast and readily available unlabeled datasets.
Active learning (AL)~\citep{cohn1996active} addresses this challenge by strategically selecting the most informative unlabeled samples.
Within AL algorithms, uncertainty-based sampling~\citep{ash2020Deep, zhao2021uncertainty, woo2023active} enjoys widespread adoption due to its simplicity and computational efficiency.
This approach prioritizes selecting unlabeled samples with the greatest prediction difficulty~\citep{settles2009active,yang2015multi,sharma2017evidence}.
The core objective of uncertainty-based sampling lies in quantifying the uncertainty associated with each unlabeled sample for a given predictor.
A plethora of uncertainty measures and corresponding AL algorithms have been proposed~\citep {nguyen2022measure,houlsby2011bayesian,jungs2023imple}; see Appendix~\ref{sec:related_work} for a more comprehensive overview of the related work.
However, many suffer from scalability and interpretability limitations, often relying on heuristic approximations devoid of robust theoretical foundations.
This paper presents a novel approach centered on the distance between samples and the decision boundary.

Intuitively, samples closest to the decision boundary are considered most uncertain~\citep{kremer2014active,ducoffe2018adversarial}. 
Theoretically, selecting unlabeled samples with the smallest margin in binary classification with linear separators demonstrably leads to exponential performance gains compared to random sampling~\citep{balcan2007margin,kpotufe2022nuances}. 
However, determining the sample-to-decision boundary distance is computationally infeasible in most cases, particularly for multiclass predictors based on deep neural networks.
While various approximate measures have been proposed to identify such closest samples~\citep{ducoffe2018adversarial, moosavi2016deepfool, mickisch2020understanding}, most lack substantial justification.

This paper proposes a paradigm shift in the realm of closeness measures.
We introduce the least disagree metric (LDM), which quantifies a sample's closeness to the decision boundary by deviating from the conventional Euclidean-based distance.
Samples identified as closest based on this metric harbor the highest prediction uncertainty and represent the most informative data points.
Our work presents the following key contributions:
\begin{itemize}
    \item Definition of the LDM as a measure of a sample's closeness to the decision boundary.
	
    \item Introduction of an asymptotically consistent LDM estimator under mild assumptions and a straightforward algorithm for its empirical evaluation, inspired by theoretical analysis. 
    This algorithm leverages Gaussian perturbation centered around the hypothesis learned by stochastic gradient descent (SGD).
    
    \item proposition of an LDM-based AL algorithm (LDM-S) showcasing state-of-the-art overall performance across six benchmark image datasets and three OpenML datasets, tested over various deep architectures.
\end{itemize}

\section{Least Disagree Metric (LDM)}
\label{sec:ldm}
This section formally defines the concept of the {\it least disagree metric} (LDM) and proposes an asymptotically consistent estimator for its calculation.
Drawing inspiration from a Bayesian perspective, we present a practical algorithm for the empirical evaluation of the LDM.

\subsection{Definition of LDM}
\label{subsec:ldm_definition}

Let $\mathcal{X}$ and $\mathcal{Y}$ be the instance and label spaces, respectively, with $|\gY| \geq 2$, and $\mathcal{H}$ be the hypothesis space, encompassing all possible functions $h: \mathcal{X} \rightarrow \mathcal{Y}$.
Let $\mathcal{D}$ be the joint distribution over $\mathcal{X} \times \mathcal{Y}$, and $\mathcal{D}_{\mathcal{X}}$ be the marginal distribution over instances only.
The LDM is inspired by the {\it disagree metric} introduced by Hanneke~\citep{hanneke2014theory}.
The disagree metric between two hypotheses $h_1$ and $h_2$ is defined as follows: 
\begin{equation}
    \label{eqn:rho}
	\rho(h_1, h_2) := \mathbb{P}_{X \sim \mathcal{D}_{\mathcal{X}}} [h_1(X) \ne h_2(X)]
\end{equation} 
where $\mathbb{P}_{X \sim \mathcal{D}_{\mathcal{X}}}$ is the probability measure on $\mathcal{X}$ induced by $\mathcal{D}_{\mathcal{X}}$.
For a given hypothesis $g \in \mathcal{H}$ and $\vx_0 \in \mathcal{X}$, let $\mathcal{H}^{g, \vx_0} := \{h \in \mathcal{H} \mid h(\vx_0) \ne g(\vx_0) \}$ be the set of hypotheses disagreeing with $g$ in their prediction for $\vx_0$.
Based on the above set and disagree metric, the LDM of a sample to a given hypothesis is defined as follows:
\begin{definition}
	For given $g \in \mathcal{H}$ and $\vx_0 \in \mathcal{X}$, the {\bf least disagree metric (LDM)} is defined as
	\begin{equation}
    \label{eqn:LDM}
		L (g, \vx_0) := \inf_{h \in \mathcal{H}^{g, \vx_0} }  \rho(h, g).
	\end{equation}
\end{definition}

Throughout the paper, we assume all the hypotheses belong to a parametric family, and $h, g$ are identified by $\vw, \vv \in \sR^p$, respectively.

\begin{wrapfigure}{r}{0.5\linewidth}
    \vspace{-12px}
    \includegraphics[width=\linewidth]{./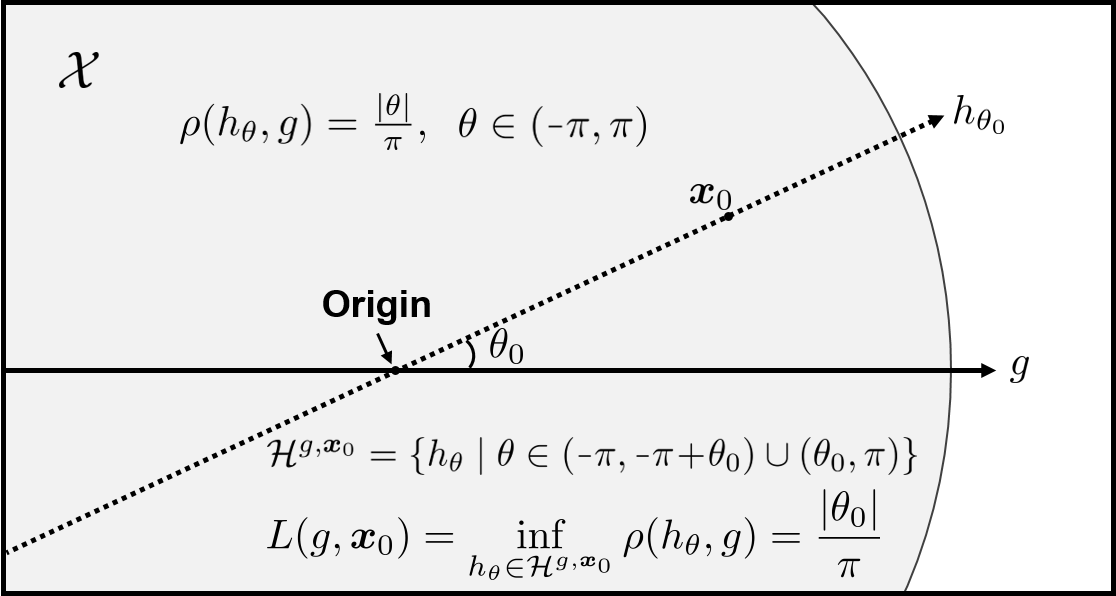}
    \caption{An example of LDM of $\vx_0$ for given $g$ in binary classification with the linear classifier. Here $\vx$ is uniformly distributed on $\mathcal{X} \subset \mathbb{R}^2$. The $h_{\theta}$ disagrees with $g$ for $\vx_0$ when $\theta \! < \! \shortminus\pi \!  + \! \theta_0$ or $\theta_0 \! < \! \theta$, thus $L(g, \vx_0) \! = \! \inf_{h_{\theta} \in \mathcal{H}^{g, \vx_0}} \rho (h_{\theta}, g) = \frac{|\theta_0|} {\pi}$.}
    \vspace{0px}
    \label{fig:ldm}
\end{wrapfigure}

To illustrate the LDM concept, we consider a two-dimensional binary classification with a set of linear classifiers defined as 
\begin{equation*}
    \gH = \{ h: h(\vx) = \sign(\vx\tran \vw), \vw \in \sR^2\}
\end{equation*}
where $\vx$ is uniformly distributed on $\gX = \{\vx: \Vert \vx \Vert \le 1 \} \subset \sR^2$.
Let $h_{\theta}$ be a hypothesis corresponding to a decision boundary forming an angle $\theta \in (\shortminus \pi, \pi)$ (in radians) with the decision boundary of $g$, then we have that $\rho (h_{\theta}, g) = \frac{|\theta|}{\pi}$.
For given $g$ and $\vx_0$, let $\theta_0 \in (0, \frac{\pi}{2})$ be the angle between the decision boundary of $g$ and the line passing through $\vx_0$, then $\mathcal{H}^{g, \vx_0} = \{ h_{\theta} \mid \theta \in (\shortminus \pi, \shortminus \pi \! + \! \theta_0) \cup (\theta_0, \pi)  \}$; see Figure~\ref{fig:ldm}.
Thus, $L(g, \vx_0) = \inf_{h_{\theta} \in \mathcal{H}^{g, \vx_0}} \rho (h_{\theta}, g) = \frac{|\theta_0|} {\pi}$.

Conceptually, a sample with a smaller LDM signifies that its predicted label can be more easily flip-flopped by even a minimal perturbation to the decision boundary.
Suppose we draw a hypothesis $h$ with its parameters sampled from a Gaussian distribution centered at the parameters of $g$ (i.e., $\vw \sim \mathcal{N} (\vv, \mathbf{I} \sigma^2)$), then it is expected that
\begin{equation*}
	L(g, \vx_1) < L(g, \vx_2) \Leftrightarrow \mathbb{P}_\vw [h(\vx_1) \ne g(\vx_1)] > \mathbb{P}_\vw [h(\vx_2) \ne g(\vx_2)].
\end{equation*}
The sample with the smallest LDM is the most uncertain and, thus, most informative.
The theoretical justification for this intuition is established specifically for two-dimensional binary classification with linear separators.
Additionally, empirical verification is provided on benchmark datasets utilizing deep networks (see Appendix~\ref{app:uncertainty}).

\subsection{An Asymptotically Consistent Estimator of LDM}
\label{subsec:brute-force}
In most cases, LDM is not computable for the following two reasons: 1) $\rho$ is generally intractable, especially when $\mathcal{D}_{\mathcal{X}}$ and $h$ are both complicated, e.g., neural networks over real-world image datasets, and 2) one needs to take an infimum over $\mathcal{H}$, which is usually an infinite set.

To address these issues, we propose an estimator for the LDM based on the following two approximations: 1) $\mathbb{P}$ in the definition of $\rho$ is replaced by an empirical probability based on $M$ samples (Monte-Carlo method), and 2) $\gH$ is replaced by a finite hypothesis set $\gH_N$ of cardinality $N$.
Our estimator, denoted by $L_{N,M} (g, \bm{x}_0)$, is defined as follows: 
\begin{equation}
    L_{N,M} (g, \bm{x}_0) := \inf_{h \in \mathcal{H}_N^{g, \bm{x}_0} } \left\{ \rho_{_M} (h, g) \triangleq \frac{1}{M} \sum_{i=1}^{M}  \mathbb{I} \big[ h(X_i) \ne g ( X_i )  \big] \right\},
\end{equation}
where $\mathcal{H}_N^{g, \bm{x}_0} := \left\{ h \in \gH_N \mid h \in \gH^{g, \vx_0} \right\}$, $\mathbb{I}[\cdot]$ is an indicator function, and $ X_1, \ldots, X_M \overset{i.i.d.}{\sim} \gD_\gX$.
Here, $M$ is the number of Monte Carlo samples for approximating $\rho$, and $N$ is the number of sampled hypotheses for approximating $L$.

A critical property of an estimator is (asymptotic) consistency, i.e., in our case, we want the LDM estimator to converge in probability to the true LDM as $M$ and $N$ increase indefinitely.

We start with two assumptions on the hypothesis space $\gH$ and the disagree metric $\rho$:
\begin{assumption}
\label{assumption:H}
    $\gH$ is a Polish space with metric $d_\gH(\cdot, \cdot)$.
\end{assumption}
This assumption allows us to avoid any complications\footnote{The usual measurability and useful properties may not hold for non-separable spaces (e.g., Skorohod space).} that may arise from uncountability, especially as we will consider a probability measure over $\mathcal{H}$; see Chapter 1.1 of \cite{vaart1996weak}.
\begin{assumption}
\label{assumption:Lipschitz}
    $\rho(\cdot, \cdot)$ is $B$-Lipschitz for some $B > 0$, i.e., $\rho(h, g) \leq B d_\gH(h, g), \ \forall h, g \in \gH$.
\end{assumption}
If $\rho$ is not Lipschitz, then the disagree metric may behave arbitrarily regardless of whether the hypotheses are “close” or not. This can occur in specific (arguably not so realistic) corner cases, such as when $\gD_\gX$ is a mixture of Dirac measures.

From hereon and forth, we fix some $g \in \gD$ and $\vx_0 \in \gX$.

The following assumption intuitively states that $\gH_N$ is more likely to cover regions of $\gH$ whose LDM estimator is $\varepsilon$-close to the true LDM, as the number of sampled hypotheses $N$ increases.
\begin{assumption}[Coverage Assumption]
\label{assumption:coverage}
    There exist two deterministic functions $\alpha : \sN \times \sR_{\geq 0} \rightarrow \sR_{\geq 0}$ and $\beta : \sN \rightarrow \sR_{>0}$ that satisfies $\lim_{N \rightarrow \infty} \alpha(N, \varepsilon) = \lim_{N \rightarrow \infty} \beta(N) = 0$ for any $\varepsilon \in (0, 1)$, and
    \begin{equation}
    \label{eqn:assumption}
        \sP\left[ \inf_{\substack{h^* \in \gH^{g,\vx_0} \\ \rho(h^*, g) - L(g, \vx_0) \leq \varepsilon}} \min_{h \in \gH_N^{g,\vx_0}} d_\gH(h^*, h) \leq \frac{1}{B} \alpha(N, \varepsilon) \right] \geq 1 - \beta(N), \quad \forall \varepsilon \in (0, 1),
    \end{equation}
    where we recall that $\gH^{g, \vx_0} = \{h \in \gH \mid h(\vx_0) \ne g (\vx_0) \}$.
\end{assumption}
Here, we implicitly assume that there is a randomized procedure that outputs a sequence of finite sets $\gH_1 \subseteq \gH_2 \subseteq \cdots \subset \gH$.
$\alpha(N, \varepsilon)$ is the rate describing the optimality of approximating $\gH$ using $\gH_N$ in that how close is the $\varepsilon$-optimal solution is, and $1 - \beta(N)$ is the confidence level that converges to $1$.

In the following theorem, whose proof is deferred to Appendix~\ref{app:converge}, we show that our proposed estimator, $L_{N,M}$, is asymptotically consistent:
\begin{theorem}
    \label{thm:ldm_converge}
    Let $g \in \gH$, $\vx_0 \in \gX$, and $\delta > 0$ be arbitrary.
    Under Assumption~\ref{assumption:H}, \ref{assumption:Lipschitz}, and \ref{assumption:coverage}, with $M > \frac{8}{\delta^2}\log(C N)$, we have that for any $\varepsilon \in (0, 1)$,
    $$\sP\left[ \left| L_{N,M} (g, \bm{x}_0) - L(g, \bm{x}_0) \right| \leq 2\delta + \alpha(N, \varepsilon) + \varepsilon \right] \geq 1 - \frac{1}{CN} - \beta(N).$$
    Furthermore, as $\min(M, N) \rightarrow \infty$ with\footnote{For the asymptotic analyses, we write $f(n) = \omega(g(n))$ if $\lim_{n \rightarrow \infty} \frac{f(n)}{g(n)} = \infty$.} $M = \omega\left( \log(C N) \right)$, we have $L_{N,M} (g, \bm{x}_0) \overset{\sP}{\rightarrow} L(g, \bm{x}_0).$
\end{theorem}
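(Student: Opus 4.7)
The plan is to decompose the error $|L_{N,M}(g,\vx_0) - L(g,\vx_0)|$ into two sources---a Monte Carlo error from replacing $\rho$ by its empirical surrogate $\rho_M$, and a hypothesis-space approximation error from restricting the infimum from $\gH^{g,\vx_0}$ to its finite subset $\gH_N^{g,\vx_0}$---control each on a high-probability event, and combine via a union bound. The resulting non-asymptotic bound will then be converted into convergence in probability by choosing $\varepsilon,\delta$ small and sending $N \to \infty$.

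For the Monte Carlo error, I would argue uniformly over $\gH_N$. For each fixed $h$, the summands $\mathbb{I}[h(X_i) \ne g(X_i)]$ are i.i.d.\ Bernoulli with mean $\rho(h,g)$, so Hoeffding's inequality gives $\sP[|\rho_M(h,g) - \rho(h,g)| > \delta] \leq 2\exp(-2M\delta^2)$. A union bound over the $N$ hypotheses in $\gH_N$ controls the uniform deviation by $2N\exp(-2M\delta^2)$, and the hypothesis $M > \frac{8}{\delta^2}\log(CN)$ is exactly what drives this below $\frac{1}{CN}$ for an appropriate absolute constant $C$.

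For the hypothesis-space error, I would first use the definition of the infimum to pick $h^* \in \gH^{g,\vx_0}$ with $\rho(h^*, g) \leq L(g,\vx_0) + \varepsilon$. Applying Assumption~\ref{assumption:coverage} to the $\varepsilon$-sublevel set, with probability at least $1 - \beta(N)$ there exists $h \in \gH_N^{g,\vx_0}$ with $d_\gH(h^*, h) \leq \alpha(N,\varepsilon)/B$, and Assumption~\ref{assumption:Lipschitz} then yields $\rho(h, h^*) \leq \alpha(N,\varepsilon)$. Since $\rho$ satisfies the triangle inequality---being the probability of a symmetric difference---$\rho(h, g) \leq \rho(h^*, g) + \rho(h, h^*) \leq L(g,\vx_0) + \varepsilon + \alpha(N,\varepsilon)$, giving $\inf_{h \in \gH_N^{g,\vx_0}} \rho(h, g) \leq L(g,\vx_0) + \varepsilon + \alpha(N,\varepsilon)$; the matching lower bound $\inf_{h \in \gH_N^{g,\vx_0}} \rho(h, g) \geq L(g,\vx_0)$ holds trivially from $\gH_N^{g,\vx_0} \subseteq \gH^{g,\vx_0}$.

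Combining both steps by a union bound, on an event of probability at least $1 - \frac{1}{CN} - \beta(N)$ we have $|L_{N,M}(g,\vx_0) - \inf_{h \in \gH_N^{g,\vx_0}} \rho(h,g)| \leq \delta$ and $|\inf_{h \in \gH_N^{g,\vx_0}} \rho(h,g) - L(g,\vx_0)| \leq \alpha(N,\varepsilon) + \varepsilon$, so a final triangle inequality yields $|L_{N,M}(g,\vx_0) - L(g,\vx_0)| \leq 2\delta + \alpha(N,\varepsilon) + \varepsilon$ (the extra $\delta$ absorbs two-sided slack between $L_{N,M}$ and $\inf_{h \in \gH_N^{g,\vx_0}} \rho$). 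For convergence in probability, fix any tolerance $\tau$ and confidence $\eta$, choose $\varepsilon,\delta$ small enough that $2\delta + \varepsilon < \tau/2$, and invoke $\alpha(N,\varepsilon), \beta(N) \to 0$ together with $M = \omega(\log(CN))$ (which eventually guarantees $M > \frac{8}{\delta^2}\log(CN)$) to push the failure probability below $\eta$. The main delicate point is applying Assumption~\ref{assumption:coverage} in the right order---selecting the near-optimal $h^*$ first and then invoking coverage to find a nearby $h \in \gH_N^{g,\vx_0}$---together with the implicit use of the triangle inequality for $\rho$, which follows immediately from $\{h \ne g\} \subseteq \{h \ne h^*\} \cup \{h^* \ne g\}$.
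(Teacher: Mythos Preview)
Your proposal is correct and follows the same two-part decomposition as the paper (Monte Carlo error $|L_{N,M} - \min_{h \in \gH_N^{g,\vx_0}} \rho(h,g)|$ plus coverage error $|\min_{h \in \gH_N^{g,\vx_0}} \rho(h,g) - L(g,\vx_0)|$), with the second part handled identically via Assumption~\ref{assumption:coverage}, Assumption~\ref{assumption:Lipschitz}, and the triangle inequality for $\rho$. The one substantive difference is in the first part: the paper invokes a Glivenko--Cantelli bound with empirical Rademacher complexity (bounded via Massart's lemma, which is where the factor $C$---the number of classes, not an arbitrary constant---enters), whereas you use the more elementary Hoeffding-plus-union-bound over the $N$ hypotheses in $\gH_N$. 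Your route is simpler and in fact tighter: it yields $|L_{N,M} - \min_{\gH_N^{g,\vx_0}} \rho| \leq \delta$ with failure probability $2N e^{-2M\delta^2}$, which under $M > \tfrac{8}{\delta^2}\log(CN)$ is far below $\tfrac{1}{CN}$, so the ``extra $\delta$'' you mention is pure slack to match the stated $2\delta$. The paper's Rademacher machinery is not needed here because $\gH_N$ is finite; it would only buy something if one wanted uniform control over an infinite class. One small correction: $C$ in the theorem is $|\gY|$, inherited from the paper's Rademacher bound over $\{h_l : h \in \gH_N^{g,\vx_0}, l \in [C]\}$, not a free constant---but your argument goes through for any $C \geq 1$, so this does not affect validity.
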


For our asymptotic guarantee, we require $M = \omega(\log(CN))$, while the guarantee holds w.p. at least $1 - \frac{1}{C N} - \beta(N)$.
For instance, when $\beta(N)$ scales as $1/N$, the above implies that when $N$ scales exponentially (in dimension or some other quantity), the guarantee holds with overwhelming probability while $M$ only needs to be at least scaling linearly, i.e., $M$ needs not be too large.

One important consequence is that the ordering of the empirical LDM is preserved in probability:
\begin{corollary}
    \label{cor:rank}
    Assume that $L(g, \bm{x}_i) < L(g, \bm{x}_j)$.
    Under the same assumptions as Theorem~\ref{thm:ldm_converge}, the following holds: for any $\varepsilon > 0$,
    $$\lim_{\min(M, N) \rightarrow \infty} \sP\left[ L_{N,M}(g, \bm{x}_i) > L_{N,M}(g, \bm{x}_j) + \varepsilon \right] = 0,$$
    where we again require that $M = \omega\left( \log(C N) \right)$.
\end{corollary}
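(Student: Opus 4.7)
The plan is to reduce the statement to two applications of Theorem~\ref{thm:ldm_converge}, one for each of $\vx_i$ and $\vx_j$, combined via a union bound. Let $\Delta := L(g, \vx_j) - L(g, \vx_i) > 0$ denote the (fixed) gap, and fix the target tolerance $\varepsilon > 0$.

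First I would observe that the ``bad'' event can only occur if at least one of the estimators is far from its true LDM. Concretely, on the event
\[
\left\{ |L_{N,M}(g, \vx_i) - L(g, \vx_i)| \le \eta \right\} \cap \left\{ |L_{N,M}(g, \vx_j) - L(g, \vx_j)| \le \eta \right\},
\]
a short calculation gives $L_{N,M}(g, \vx_j) - L_{N,M}(g, \vx_i) \ge \Delta - 2\eta$. Thus, if we choose $\eta < (\Delta + \varepsilon)/2$, the event $\{L_{N,M}(g, \vx_i) > L_{N,M}(g, \vx_j) + \varepsilon\}$ is impossible on the intersection above. So it suffices to show that this intersection holds with probability approaching $1$.

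Next I would invoke Theorem~\ref{thm:ldm_converge}. Set $\eta := (\Delta + \varepsilon)/3$, and pick parameters $\delta := \eta/4$ and $\varepsilon' := \eta/4$ (with $\varepsilon' \in (0,1)$, which is fine since $\Delta, \varepsilon$ are fixed). By Assumption~\ref{assumption:coverage} we have $\alpha(N, \varepsilon') \to 0$ as $N \to \infty$, so for all sufficiently large $N$, $\alpha(N, \varepsilon') \le \eta/4$, whence $2\delta + \alpha(N, \varepsilon') + \varepsilon' \le \eta$. Provided $M > (8/\delta^2)\log(CN)$ (which is ensured since the corollary inherits the scaling $M = \omega(\log(CN))$ from Theorem~\ref{thm:ldm_converge}), applying Theorem~\ref{thm:ldm_converge} separately to $\vx_i$ and $\vx_j$ and taking a union bound yields
\[
\sP\left[ L_{N,M}(g, \vx_i) > L_{N,M}(g, \vx_j) + \varepsilon \right] \;\le\; \frac{2}{CN} + 2\beta(N).
\]
Since $\beta(N) \to 0$ and $1/(CN) \to 0$ as $N \to \infty$, letting $\min(M,N) \to \infty$ with the required scaling on $M$ drives the right-hand side to zero, giving the claim.

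The only real subtlety, and hence the main (mild) obstacle, is to verify that the choice of $\delta$ and $\varepsilon'$ can be made \emph{before} taking $N \to \infty$ so that the Theorem~\ref{thm:ldm_converge} bound is uniform in $N$. Because $\Delta$ is a fixed (though unknown) positive number depending only on $g, \vx_i, \vx_j$, the values $\delta = \varepsilon' = (\Delta + \varepsilon)/12$ are constants independent of $N$, so the condition $\alpha(N, \varepsilon') \le \eta/4$ is a statement about large-$N$ behavior only, and the condition on $M$ is a standard lower bound. No other step is delicate.
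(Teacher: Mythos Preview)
Your proposal is correct and follows essentially the same approach as the paper: both argue that if each estimator is close to its true LDM, then the wrong ordering cannot occur, and both control the complement via Theorem~\ref{thm:ldm_converge}. The only cosmetic difference is that the paper invokes the convergence-in-probability conclusion of Theorem~\ref{thm:ldm_converge} directly with tolerance $\varepsilon/2$ (observing that the bad event together with both $\varepsilon/2$-concentration events would force $L(g,\vx_i) > L(g,\vx_j)$, a deterministic contradiction), whereas you work from the quantitative bound with an explicit tolerance $\eta$ depending on the gap $\Delta$; both routes are valid and equally short.
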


\subsection{Empirical Evaluation of LDM}
\paragraph{Motivation.}
LDM estimator requires Assumption~\ref{assumption:coverage} to hold for asymptotic consistency.
This ensures that with a sufficiently large number of samples ($N \rightarrow \infty$), the constructed hypothesis set has a high probability of ``covering'' the true LDM-yielding hypothesis.
We employ Gaussian sampling around the target hypothesis $g$ to construct a finite collection of hypotheses $\mathcal{H}_N$ to achieve this coverage.
Appendix~\ref{app:assumption} formally demonstrates that this approach satisfies Assumption~\ref{assumption:coverage} in the context of 2D binary classification with linear classifiers.
\begin{remark}
It is known that SGD performs Bayesian inference~\citep{mandt2017sgd,chaudhari2018stochastic,mingard2021sgd}, i.e., $g$ can be thought of as a sample from a posterior distribution.
Combined with the Bernstein-von Mises theorem~\citep{hjort2010bayesian}, which states that the posterior of a parametric model converges to a normal distribution under mild conditions, Gaussian sampling around $g$ can be thought of as sampling from a posterior distribution, in an informal sense.
\end{remark}

\paragraph{Algorithm Details.}
Algorithm~\ref{alg:empirical_ldm} provides a method for empirically evaluating the LDM of $\vx$ for given $g$. 
When constructing the hypothesis set, we utilize a set of variances $\{\sigma_k^2 \}_{k=1}^{K}$ such that $\sigma_k < \sigma_{k+1}$.
The choice is motivated by two key considerations.
If $\sigma^2$ is too small, the sampled $h$ is unlikely to satisfy $h(\vx) \ne g(\vx)$, which is required for LDM estimation.
Conversely, excessively large $\sigma^2$ can lead to $h$ too far from $g$ when the true LDM is close to $0$.
Both observations are based on the intuition that larger $\sigma^2$ implies a greater $\rho (h, g)$.
Appendix \ref{app:brute-force} formally proves that for 2D binary classification with linear classifiers,
$\mathbb{E}_{\vw}[\rho_{_M}(h, g)]$ is monotonically increasing with respect to $\sigma^2$.
We also provide empirical evidence supporting this observation in more realistic scenarios.
The algorithm proceeds as follows:
For each $k$, sample $h$ with $\vw \sim \mathcal{N}(\vv, \mathbf{I} \sigma_k^2)$.
If $h(\vx) \ne g(\vx)$, update $L_{\vx}$ as $\min \{L_{\vx}, \rho_{_M} (h, g)\}$.
If $L_{\vx}$ remains unchanged for $s$ consecutive iterations, move on to $k+1$.
Continue iterating while $k < K$.

\paragraph{Small $s$ is Sufficient.} 
Choosing an appropriate $s$ remains an open question.
While a small $s$ suffices for accurate LDM estimation in the context of binary classification with the linear classifier depicted in Figure~\ref{fig:ldm}, deep architectures might require a larger $s$ to ensure convergence of the estimated LDM.
However, employing a large $s$ can be computationally expensive due to the significantly increased runtime associated with sampling a larger number of hypotheses.
Fortunately, our empirical observations across various $s$ reveal that the rank order of LDM across different samples remains consistent.
This is quantified by a high rank-correlation coefficient close to $1$.
Since the LDM-based active learning algorithm described in Section \ref{sec:ldm_alg} leverages only the relative ordering of LDM for sample selection, a small $s$ is sufficient for practical purposes.
Detailed experimental results and further discussions regarding the impact of the $s$ are provided in Appendix~\ref{subsec:stop_condition}.

\section{LDM-based Active Learning}
\label{sec:ldm_alg}
This section introduces the LDM-based batch sampling algorithm (LDM-S) for pool-based active learning.
In pool-based active learning, we are given a set of unlabeled samples denoted by $\mathcal{U}$.
The goal is to select a batch of $q$ informative samples for querying from a randomly sampled pool $\mathcal{P} \subset \mathcal{U}$ of size $m$.

\begin{figure*}[!t]
    \centering
    \begin{minipage}{0.49\linewidth}
        \begin{algorithm}[H]
            \algsetup{linenosize=\tiny} 
            \small
            \caption{Empirical Evaluation of LDM}
            \label{alg:empirical_ldm}
        	\begin{algorithmic}
        		\STATE {\bfseries Input:} \\
        		$\bm{x}$: target sample \\
        		$g$: target hypothesis parameterized by $\vv$ \\
        		$M$: number of samples for approximation \\
        		$\{\sigma_k^2\}_{k=1}^K$: set of variances\\
        		$s$: stop condition for parameter sampling\\
                \vspace{-5px}
                \STATE
        		\STATE $L_{\vx} = 1$
        		\FOR {$k=1$ {\bfseries to} $K$}
        		\STATE $c = 0$
        		\WHILE {$c < s$}
        		\STATE Sample $h$ with $\vw \sim \mathcal{N}(\vv, \mathbf{I} \sigma_k^2)$
        		\STATE $c = c + 1$
        		\IF {$h(\vx) \ne g(\vx)$ {\bf and} $L_{\vx} > \rho_{_M} (h, g)$}
        		\STATE $L_{\vx} \leftarrow \rho_{_M} (h, g)$
        		\STATE $c = 0$
        		\ENDIF
        		\ENDWHILE
        		\ENDFOR
        		\STATE {\bfseries return:} $L_{\vx}$
        	\end{algorithmic}
        \end{algorithm}
    \end{minipage}%
    \hfill
    \begin{minipage}{0.49\linewidth}
        \begin{algorithm}[H]
            \algsetup{linenosize=\tiny}
            \small
            \caption{Active Learning with LDM-S}
            \label{alg:weighted_seed}
        	\begin{algorithmic}
        		\STATE {\bfseries Input:} \\
        		$\mathcal{L}_0, \mathcal{U}_0$ : Initial labeled and unlabeled samples\\
        		$m, q$ : pool and query size\\
                $T$ : number of acquisition steps\\
        		\STATE
        		
        		\FOR {$t=0$ {\bfseries to} $T-1$}
        		\STATE Obtain $\vv$ by training on $\mathcal{L}_t$
        		\STATE Randomly sample $\mathcal{P} \subset \mathcal{U}_t$ with $|\mathcal{P}| = m$
        		\STATE Evaluate $L_{\vx}$ of $\vx \in \mathcal{P}$ for $\vv$ by Algorithm~\ref{alg:empirical_ldm}
        		\STATE Compute $\gamma_{\vx}$ using Eqn.~\ref{eqn:gamma}
        		\STATE $\mathcal{Q}_1 \gets \{\vx_1\}$ where $\vx_1 = \argmin_{\vx \in \mathcal{P}} L_{\vx}$
        		\FOR {$n=2$ {\bfseries to} $q$}
        		\STATE $p_{\vx} = \gamma_{\vx} * \min_{\vx' \in \mathcal{Q}_{n-1}} d_{\cos} (\bm{z}_{\vx}, \bm{z}_{\vx'})$
        		\STATE Sample $\vx_n \in \mathcal{P}$ w.p. $\mathbb{P}(\vx_n) = \frac{p_{\vx_n}^2}{\sum_{\vx_j \in \mathcal{P}} p_{\vx_j}^2}$
        		\STATE $\mathcal{Q}_n \gets \mathcal{Q}_{n-1} \cup \{\vx_n\}$
        		\ENDFOR
        		\STATE $\mathcal{L}_{t+1} \gets \mathcal{L}_t \cup \{(\vx_i, y_i)\}_{\vx_i \in \mathcal{Q}_q}, \ \mathcal{U}_{t+1} \gets \mathcal{U}_t \setminus \mathcal{Q}_q$
        		\ENDFOR
        	\end{algorithmic}
        \end{algorithm}
    \end{minipage}%
    \vspace{-10px}
\end{figure*}

\subsection{LDM-Seeding}
\label{subsec:diverse_sampling}
A na\"{i}ve approach for LDM-based active learning would be to select the $q$ samples with the smallest LDMs (most uncertain).
The effectiveness of LDM-based active learning is shown in Figure~\ref{fig:ldm_vs_entropy} of Section~\ref{subsec:result_lpdr}.
However, Figure~\ref{fig:need_diversity} of Section~\ref{subsec:need_diversity} demonstrates that this strategy may not yield optimal performance since selecting samples solely based on LDM can lead to significant overlap in the information content of the chosen samples.
This issue, referred to as {\it sampling bias}, is a common challenge in uncertainty-based active learning approaches~\citep{dasgupta2011active}.
Several methods have been proposed to mitigate sampling bias by incorporating diversity into the selection process.
These methods include $k$-means++ seeding~\citep{ash2020Deep}, submodular maximization~\citep{wei2015submodularity}, clustering~\citep{citovsky2021batch,yang2021batch}, joint mutual information~\citep{kirsch2019batchbald}.

This paper adopts a modified version of the $k$-means++ seeding algorithm~\citep{arthur2006k} to promote batch diversity without introducing additional hyperparameters~\citep{ash2020Deep}.
Intuitively, $k$-means++ seeding iteratively selects centroids by prioritizing points further away from previously chosen centroids.
This method naturally tends to select a diverse set of points.
LDM-Seeding employs the cosine distance between the last layer features of the deep network as the distance metric.
This choice is motivated by the fact that the perturbation is applied to the weights of the last layer and that feature scaling does not affect the final prediction.

LDM-Seeding balances selecting samples with the smallest LDMs while achieving diversity.
This is achieved through a modified exponentially decaying weighting scheme inspired by the EXP3 algorithm~\citep{auer2002nonstochastic}.
This type of weighting scheme has proven successful in various active learning~\citep{beygelzimer2009importance,ganti2012upal,kim2022active}.
To balance the competing goals, $\mathcal{P}$ is partitioned into $\mathcal{P}_q$ and $\mathcal{P}_c$ where
$\mathcal{P}_q$ contains $q$ samples with the smallest LDMs and $\mathcal{P}_c = \mathcal{P} \setminus \mathcal{P}_q$.
The total weights assigned to $\mathcal{P}_q$ and $\mathcal{P}_c$ are set to be equal.
For each sample $\vx \in \mathcal{P}_i$, let $L_q = \max_{\vx \in \mathcal{P}_q} L_{\vx}$, then the weight $\gamma_{\vx}$ is calculated using the following equation:
\begin{equation}
\label{eqn:gamma}
    \gamma_{\vx} = \frac{e^{-\eta_{\vx}}}{\sum_{\vx_j \in \gP_i} e^{-\eta_{\vx_j}}}, \quad \eta_{\vx} = \frac{(L_{\vx} - L_q)_+}{L_q}
\end{equation}
where $i \in \{q, c\}$ denotes the partition index and $(\cdot)_+ = \max\{0, \cdot\}$.

LDM-Seeding starts by selecting an unlabeled sample with the smallest LDM from $\mathcal{P}$.
Subsequently, it employs the following probability distribution to choose the next distant unlabeled sample:
\begin{equation}
    \label{eq:prob_seeding}
	\mathbb{P} (\vx) = \frac{p_{\vx}^2}{\sum_{\vx_j \in \mathcal{P}} p_{\vx_j}^2}, \quad p_{\vx} = \gamma_{\vx} * \min_{\vx' \in \mathcal{Q}} d_{\cos} (\bm{z}_{\vx}, \bm{z}_{\vx'})
\end{equation}
where $Q$ is the set of selected samples, and $\vz_{\vx}, \vz_{\vx'}$ are the features of $\vx, \vx'$, respectively.
This probability distribution explicitly incorporates both LDM and diversity.

\subsection{LDM-S: Active Learning with LDM-Seeding}
\label{subsec:algorithm}
We now introduce \textsc{LDM-S} in Algorithm~\ref{alg:weighted_seed}, the LDM-based seeding algorithm for active learning.
Let $\mathcal{L}_t$ and $\mathcal{U}_t$ be the set of labeled and unlabeled samples at step $t$, respectively.
For each step $t$, the given parameter $\bm{v}$ is obtained by training on $\mathcal{L}_t$, and the pool data $\mathcal{P} \subset \mathcal{U}_t$ of size $m$ is drawn uniformly at random.
Then for each $\vx \in \mathcal{P}$, $L_{\vx}$ for given $\vv$ and $\gamma_{\vx}$ are evaluated by Algorithm~\ref{alg:empirical_ldm} and Eqn.~\ref{eqn:gamma}, respectively.
The set of selected unlabeled samples, $\mathcal{Q}_1$, is initialized as $\{\vx_1\}$ where $\vx_1 = \argmin_{\vx \in \mathcal{P}} L_{\vx}$.
For $n=2, \ldots, q$, the algorithm samples $\vx_n \in \mathcal{P}$ with probability $\mathbb{P}(\vx)$ in Eqn.~\ref{eq:prob_seeding} and appends it to $\mathcal{Q}_n$.
Lastly, the algorithm queries the label $y_i$ of each $\vx_i \in \mathcal{Q}_q$, and the algorithm continues until $t = T -1 $.

\section{Experiments}
\label{sec:experiments}
This section presents the empirical evaluation of the proposed LDM-based active learning algorithm.
We compare its performance against various uncertainty-based active learning algorithms on diverse datasets: 1) three OpenML (OML) datasets (\#6~\citep{frey1991letter}, \#156~\citep{vanschoren2014openml}, and \#44135~\citep{fanty1990spoken}); 2) six benchmark image datasets (MNIST~\citep{lecun1998mnist}, CIFAR10~\citep{krizhevsky2014cifar}, SVHN~\citep{netzer2019street}, CIFAR100~\citep{krizhevsky2014cifar}, Tiny ImageNet~\citep{le15tinyimage}, FOOD101~\citep{bossard14}), and ImageNet~\citep{ILSVRC15}.
We employ various deep learning architectures: MLP, S-CNN, K-CNN~\citep{chollet2015keras}, Wide-ResNet (WRN-16-8;~\citet{zagoruyko2016wide}), and ResNet-18~\citep{ResNet}.
All results represent the average performance over $5$ repetitions ($3$ for ImageNet).
Detailed descriptions of the experimental settings are provided in Appendix~\ref{app:settings}.
The source code is available on the authors' GitHub repository\footnote{\url{https://github.com/ipcng00/LDM-S}}.

\begin{figure*}[!t]
	\centering
	\subfloat[]{\includegraphics[width=0.39\linewidth]{./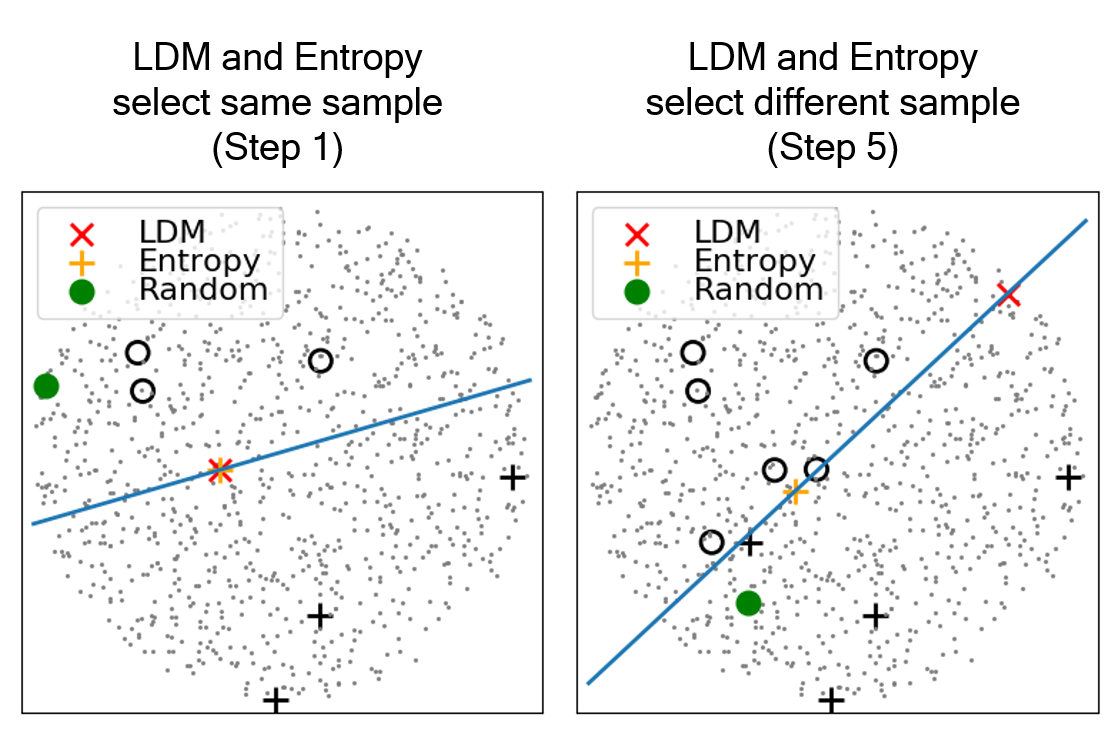}%
		\label{fig:ldm_vs_entropy_1}}
	\hfil
	\subfloat[]{\includegraphics[width=0.3\linewidth]{./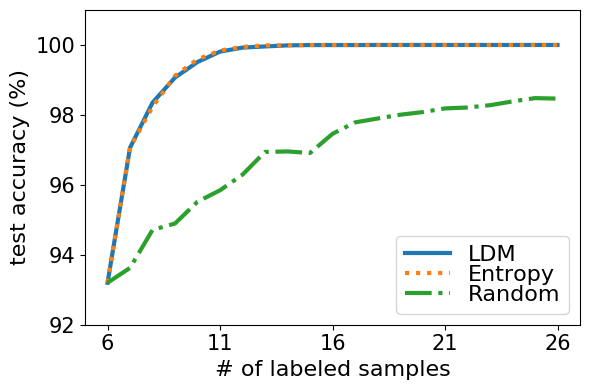}%
		\label{fig:ldm_vs_entropy_2}}
	\hfil
	\subfloat[]{\includegraphics[width=0.31\linewidth]{./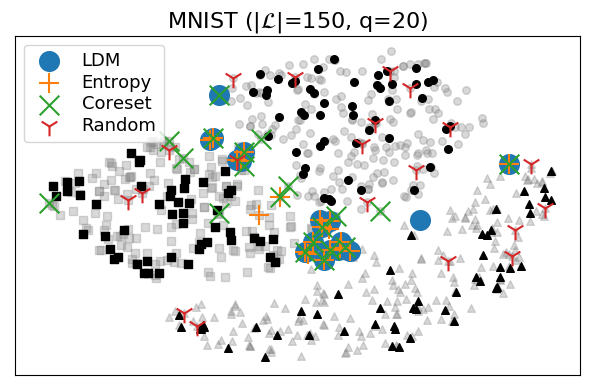}%
		\label{fig:ldm_vs_entropy_3}}
	\vspace{-5px}
	\caption{The comparison of selecting sample(s). The black crosses and circles are labeled, and the gray dots are unlabeled samples. (a) Selected samples by LDM-based, entropy-based, and random sampling in binary classification with the linear classifier. (b) The test accuracy with respect to the number of labeled samples. (c) The t-SNE plot of selected batch samples in 3-class classification with a deep network on MNIST dataset.}
	\label{fig:ldm_vs_entropy}
\end{figure*}

\subsection{Effectiveness of LDM in Selecting (Batched) Uncertain Samples}
\label{subsec:result_lpdr}
To gain insights into the functionality of LDM, we conducted a controlled active learning experiment under a scenario where the true LDM is readily measurable.
We employed a two-dimensional binary classification with $\mathcal{H} = \{h: h_\vw(\vx) = \sI[\ell_\vw(\vx) > 0.5] \}$ where $\ell_\vw(\vx) := 1 / (1 + e^{-(\bm{x}\tran \bm{w})})$, $\bm{w} \in \mathbb{R}^2$ is the parameter of $h$, and $\bm{x}$ is uniformly distributed on $\mathcal{X} = \{\bm{x}: \Vert \bm{x} \Vert \le 1 \} \subset \mathbb{R}^2$.
Three initial labeled samples are randomly chosen from each class.
For each step, one sample is queried.
LDM-based active learning (LDM) is compared with entropy-based uncertainty sampling (Entropy) and random sampling (Random). 
Figure~\ref{fig:ldm_vs_entropy_1} shows the selected samples by each algorithm.
Black crosses and circles denote labeled samples, while gray dots represent unlabeled samples.
Both LDM and Entropy select samples close to the decision boundary.
Figure~\ref{fig:ldm_vs_entropy_2} shows the average test accuracy over 100 repetitions for each algorithm, plotted against the number of labeled samples.
LDM performs similarly to Entropy, both significantly outperforming Random.
This observation aligns with the theoretical understanding that entropy is a strongly decreasing function of the distance to the decision boundary in binary classification.
Consequently, both LDM and entropy-based algorithms effectively select the sample closest to the decision boundary, mirroring the behavior of the well-established margin-based algorithm.
Furthermore, we compared the batch samples chosen by LDM, Entropy, Coreset, and Random in a 3-class classification task using a deep network on the MNIST dataset. 
Figure~\ref{fig:ldm_vs_entropy_3} shows the t-SNE plot of the selected samples.
Gray and black points represent pool data and labeled samples, respectively.
The results show that LDM and Entropy select samples close to the decision boundaries, similar to the two-dimensional case.
Coreset, on the other hand, selects a more diverse set of samples.
For further analysis, the authors conducted additional experiments: 1) investigating the impact of varying batch sizes (Appendix~\ref{appsub:batch_size}); 2) Evaluating the effectiveness of LDM without using it for seeding (Appendix~\ref{app:effectiveness-ldm}); 3) Comparing the performance of LDM with other uncertainty measures in the context of seeding algorithm. (Appendix~\ref{app:seeding}).
These additional experiments further substantiate the effectiveness of the newly introduced LDM for active learning.

\begin{figure*}[!t]
    \centering
    \subfloat[]{\includegraphics[width=0.3\linewidth]{./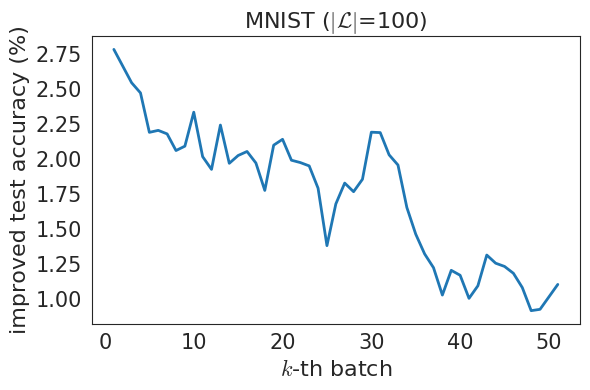}%
        \label{fig:need_diversity_1}}
    \hfil
    \subfloat[]{\includegraphics[width=0.3\linewidth]{./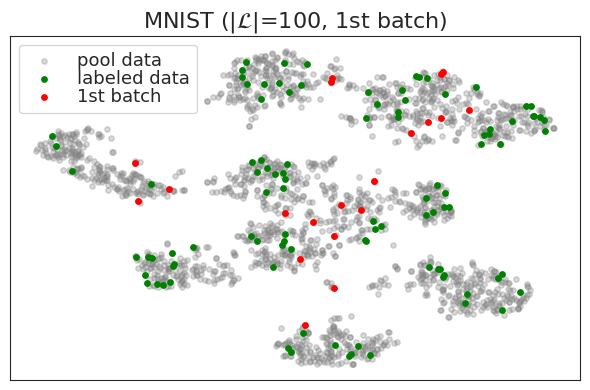}%
        \label{fig:need_diversity_2}}
    \hfil
    \subfloat[]{\includegraphics[width=0.3\linewidth]{./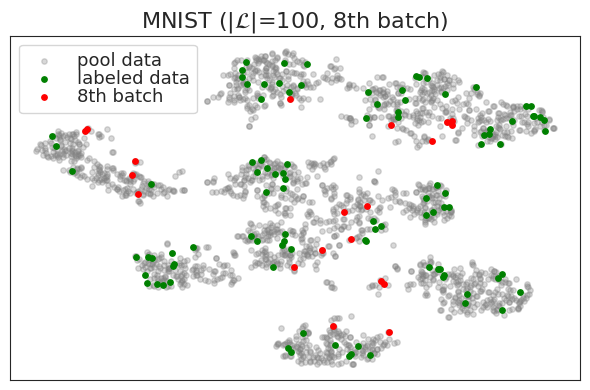}%
        \label{fig:need_diversity_3}}
    \hfil
    \vspace{-5px}
    \subfloat[]{\includegraphics[width=0.3\linewidth]{./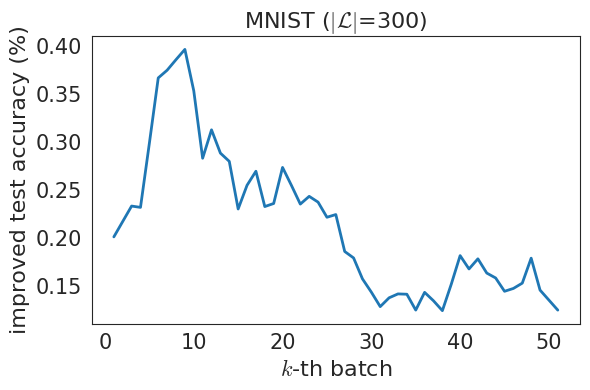}%
        \label{fig:need_diversity_4}}
    \hfil
    \subfloat[]{\includegraphics[width=0.3\linewidth]{./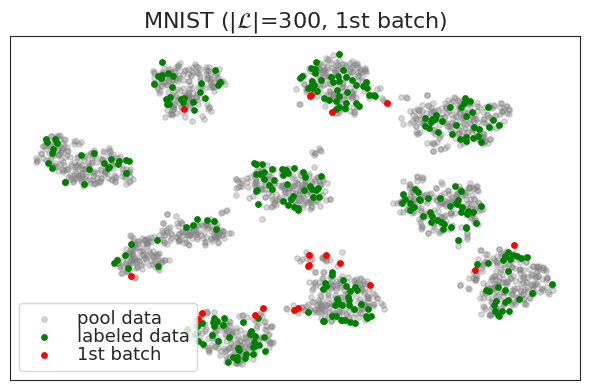}%
        \label{fig:need_diversity_5}}
    \hfil
    \subfloat[]{\includegraphics[width=0.3\linewidth]{./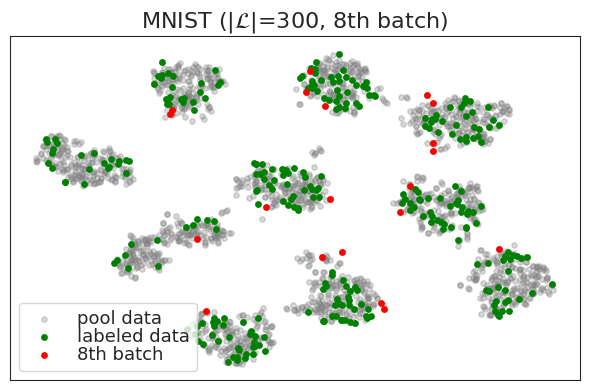}%
        \label{fig:need_diversity_6}}
    \vspace{-5px}
    \caption{The improved test accuracy by labeling the $k$\nth batch of size $q$ from pool data sorted in ascending order of LDM when the number of labeled samples is $100$ (a) or $300$ (d), and t-SNE plots of the first and eighth batches for each case (b-c, e-f) on MNIST.}
    \label{fig:need_diversity}
    \vspace{-10px}
\end{figure*}

\subsection{Necessity of Pursuing Diversity in LDM-S}
\label{subsec:need_diversity}
Here, we investigate the potential drawbacks of {\it solely} relying on the smallest LDM values for sample selection, thus why we need to consider diversity.
We query the $k$\nth batch of size $q = 20$ from MNIST pool data sorted in ascending order of their LDM values and compare the improvements in test accuracy.
Figure~\ref{fig:need_diversity_1} shows that selecting samples with the smallest LDMs initially leads to the best performance, but Figure~\ref{fig:need_diversity_4} shows that prioritizing the smallest LDMs may not lead to the optimal performance.
To understand this difference, we visualize the distribution of the selected samples using t-SNE plots~\citep{tSNE} for each case.
Figure~\ref{fig:need_diversity_2}--\ref{fig:need_diversity_3} show the t-SNE plots for the first and eighth batches, respectively, when 100 samples are labeled.
Both plots display a well-spread distribution of selected samples, implying minimal overlap in information content between subsequent batches.
Figure~\ref{fig:need_diversity_5}--\ref{fig:need_diversity_6} show the t-SNE plots for the first and eighth batches, respectively, when 300 samples are labeled.
The samples exhibit significant overlap in the first batch, indicating redundant information.
In contrast, the eighth batch, comprising samples with larger LDMs, demonstrates a more diverse spread.
Based on these observations, we conclude that diversity within the selected samples is crucial in achieving optimal performance.
Even in LDM-based selection, incorporating diversity considerations during batch selection becomes essential.
Appendix~\ref{app:need_diversity} delves deeper into this phenomenon through a geometrical interpretation of 2D binary classification with linear classifiers.

\begin{figure*}[!t]
	\centering
	\subfloat[]{\includegraphics[width=0.45\linewidth]{./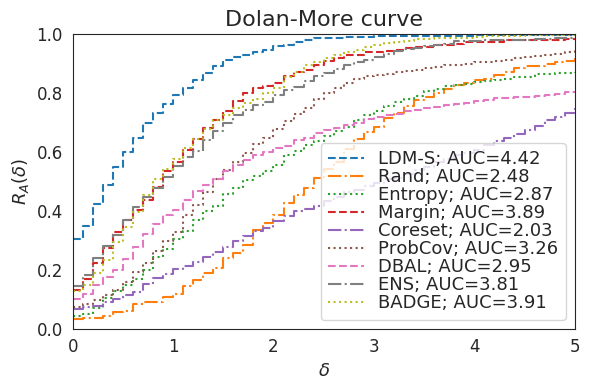}%
		\label{fig:across_1}}
	\hfil
	\subfloat[]{\includegraphics[width=0.35\linewidth]{./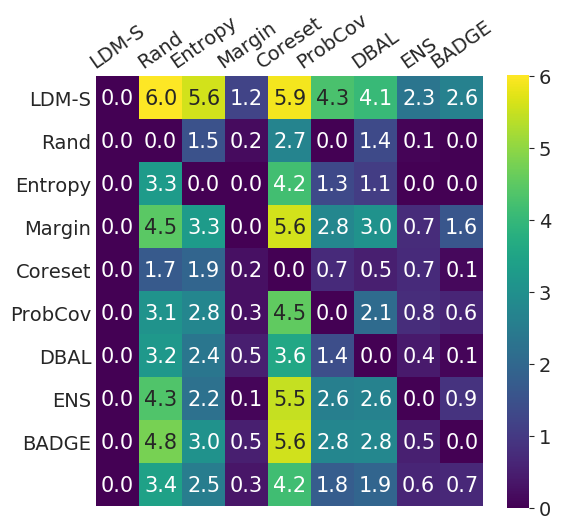}%
		\label{fig:across_2}}
	\vspace{-5px}
	\caption{The performance comparison across datasets (a) Dolan-Mor\'{e} plot among the algorithms across all experiments. AUC is the area under the curve. (b) The pairwise penalty matrix over all experiments. Element $P_{i, j}$ corresponds roughly to the number of times algorithm $i$ outperforms algorithm $j$. Column-wise averages at the bottom show overall performance (lower is better).}
	\label{fig:across}
\end{figure*}

\subsection{Comparing LDM-S to Baseline Algorithms}
We now compare the performance of the proposed LDM-S with various baseline AL algorithms.

\paragraph{Baseline algorithms}
Each baseline algorithm is denoted as follows: `Rand': random sampling, `Entropy': entropy-based uncertainty sampling~\citep{shannon1948mathematical}, `Margin': margin-based sampling~\citep{roth2006margin}, `Coreset': core-set selection~\citep{sener2018active}, `ProbCov': maximizing probability coverage~\citep{yehuda2022active}, `DBAL': MC-dropout sampling with BALD~\citep{gal2017deep}, `ENS': ensemble method with variation ratio~\citep{beluch2018power}, `BADGE': batch active learning by diverse gradient embeddings~\citep{ash2020Deep}, and `BAIT': batch active learning via information metrics~\citep{ash2021gone}.
We use $100$ forward passes for MC-dropout in DBAL and an ensemble of $5$ identical but randomly initialized networks for ENS.
DBAL can not be conducted on ResNet-18, and BAIT can not be conducted with our resources on CIFAR100, Tiny ImageNet, FOOD101, and ImageNet due to their significantly higher runtime requirements.
We set $s = 10$, $M$ to be the same size as the pool size, and $\sigma_k = 10^{\beta k - 5}$ for LDM-S where $\beta = 0.1$ and $k \in \{1, 2, \cdots, 51\}$ in convenient (see Appendix~\ref{appsub:hyperparameters}).

\paragraph{Performance comparison \textbf{\textit{across}} datasets}
The performance profile~\citep{dolan2002benchmarking} and penalty matrix~\citep{ash2020Deep} are utilized for comprehensive comparisons across all datasets.
The details of the performance profile and penalty matrix are described in Appendix~\ref{app:across}.
Figure~\ref{fig:across_1} shows the performance profile of all algorithms with respect to $\delta$.
LDM-S consistently maintains the highest $R_A(\delta)$ across all considered $\delta$ values. 
Notably, $R_{\text{LDM-S}} (0) = 35\%$, significantly exceeding the values of other algorithms (all below $15\%$).
Figure~\ref{fig:across_2} further supports LDM-S's superiority.
In the first row, LDM-S outperforms all the other algorithms, often with statistically significant margins (Recall that the largest entry of the penalty matrix is $9$, the total number of datasets).
Similarly, the first column shows that no other algorithm consistently outperforms LDM-S across all runs.
Finally, the bottom row confirms that LDM-S achieves {\it best} performance.

\paragraph{Performance comparison \textit{\textbf{per}} datatset}
Table~\ref{tbl:deltas} presents the mean and standard deviation of performance differences (relative to Random) averaged over repetitions and steps.
Positive values indicate better performance than Random, and asterisks ($^*$) denote statistically significant improvements based on paired t-tests.
We observe that LDM-S consistently performs best or is comparable to other algorithms across all datasets.
In contrast, the effectiveness of other algorithms varies significantly depending on the dataset.
For instance, Entropy and Coreset underperform on OpenML, MNIST, CIFAR10, and SVHN compared to LDM-S.
Margin performs well on smaller datasets but deteriorates on larger ones.
ENS, ProbCov, DBAL, BADGE, and BAIT generally underperform compared to LDM-S.
Detailed test accuracy values for each dataset are provided in Appendix~\ref{app:per_additional}.

\begin{table}[]
\renewcommand{\tabcolsep}{0.7mm}
    \footnotesize
	\caption{The mean $\pm$ standard deviation of the repetition-wise averaged performance (test accuracy) differences (\%), relative to Random, over the entire steps. The positive value indicates higher performance than Random, and the asterisk ($^*$) indicates that $p < 0.05$ in paired sample $t$-test between LDM-S and others. ({\bfseries \underline{bold+underlined}}: best performance, {\bfseries bold}: second-best performance)}
	\centering
	\begin{tabular}{l|c|c|c|c|c|c|c|c|c}
    \hline
    
    & LDM-S & Entropy & Margin & Coreset & ProbCov & DBAL & ENS & BADGE & BAIT \\ 
    
    \hline
    OML\#6                 & {\ul \textbf{4.76{\tiny±0.36}}} & -2.46{\tiny±0.70*}  & 4.11{\tiny±0.24}          & 1.14{\tiny±0.33*}                & 0.19{\tiny±0.40*}          & 0.10{\tiny±0.22*}          & \textbf{4.43{\tiny±0.31}} & 2.98{\tiny±0.28*}           & 2.57{\tiny±0.30*}                \\
    OML\#156               & {\ul \textbf{1.18{\tiny±0.32}}} & -1.29{\tiny±0.51*} & 0.64{\tiny±0.39*}         & -19.53{\tiny±3.32*}             & -0.08{\tiny±0.51*}        & -14.40{\tiny±1.08*}       & 0.61{\tiny±0.35*}         & \textbf{1.06{\tiny±0.39}}  & -7.66{\tiny±1.12*}              \\
    OML\#44135             & {\ul \textbf{4.36{\tiny±0.27}}} &  1.84{\tiny±0.36*}  & \textbf{4.11{\tiny±0.32}} & 0.27{\tiny±0.68*}               & 3.55{\tiny±0.28*}         & 1.61{\tiny±0.45*}         & 2.47{\tiny±0.37*}         & 2.48{\tiny±0.44*}          & 2.55{\tiny±0.41*}               \\
    MNIST                & \textbf{3.33{\tiny±0.43}}       & 2.36{\tiny±0.84*}  & 3.01{\tiny±0.45}          & -0.04{\tiny±1.23*}              & 2.92{\tiny±0.41}          & 1.68{\tiny±0.80*}         & 2.98{\tiny±0.36}          & 3.01{\tiny±0.45}           & {\ul \textbf{3.37{\tiny±0.43}}} \\
    CIFAR10              & {\ul \textbf{1.34{\tiny±0.19}}} & 0.00{\tiny±0.21*}  & 0.43{\tiny±0.27*}         & -3.71{\tiny±0.56*}              & 0.04{\tiny±0.37*}         & -0.15{\tiny±0.31*}        & 0.58{\tiny±0.28*}         & \textbf{0.90{\tiny±0.21*}} & 0.60{\tiny±0.13*}               \\
    SVHN                 & {\ul \textbf{2.53{\tiny±0.22}}} & 1.52{\tiny±0.19*}  & 1.98{\tiny±0.18*}         & -1.66{\tiny±0.51*}              & 0.88{\tiny±0.14*}         & \textbf{2.46{\tiny±0.21}} & 2.08{\tiny±0.22*}         & 2.18{\tiny±0.23*}          & 2.18{\tiny±0.23*}               \\
    CIFAR100             & {\ul \textbf{0.98{\tiny±0.44}}} & 0.37{\tiny±0.60}   & 0.57{\tiny±0.45}          & \textbf{0.89{\tiny±0.49}}       & 0.74{\tiny±0.60}          & 0.55{\tiny±0.77}          & 0.03{\tiny±0.41*}         & 0.64{\tiny±0.48}           & -                          \\
    T. ImageNet        & {\ul \textbf{0.55{\tiny±0.16}}} & -0.61{\tiny±0.28*} & 0.28{\tiny±0.29}          & -0.20{\tiny±0.46*}              & \textbf{0.45{\tiny±0.23}} & 0.27{\tiny±0.19*}         & -0.15{\tiny±0.35*}        & 0.12{\tiny±0.40*}          & -                          \\
    FOOD101              & \textbf{1.27{\tiny±0.34}}       & -0.86{\tiny±0.20*} & 0.34{\tiny±0.25*}         & {\ul \textbf{1.30{\tiny±0.16}}} & 0.50{\tiny±0.22*}         & 1.18{\tiny±0.35}          & -0.15{\tiny±0.46*}        & 0.71{\tiny±0.43*}          & -                         \\
    ImageNet    & {\ul \textbf{0.96{\tiny±0.23}}} & -0.21{\tiny±0.58*} & 0.14{\tiny±0.54} & -0.62{\tiny±0.20*} & 0.30{\tiny±0.22*} & - & 0.57{\tiny±0.21} & \textbf{0.71{\tiny±0.81}} & - \\
    \hline
    \end{tabular}
    \label{tbl:deltas}
\end{table}

\begin{table*}[!t]
    \renewcommand{\tabcolsep}{1.3mm}
	\footnotesize
    \selectfont
	\caption{The mean of runtime (min) for each algorithm and each dataset. We observe that LDM-S operates as fast as Entropy on almost all datasets.}
	\centering
	\begin{tabular}{l|r|r|r|r|r|r|r|r|r}
		\hline
		& \mcc{LDM-S}         & \mcc{Entropy}       & \mcc{Margin}        & \mcc{Coreset}       & \mcc{ProbCov}       & \mcc{DBAL}          & \mcc{ENS}            & \mcc{BADGE}         & \mccc{BAIT}             \\
		\hline
		OML\#6     & 7.1    & 6.0    & 5.9     & 6.4    & 5.8     & 5.8     & 28.1    & 6.2    & 628.8  \\
        OML\#156   & 4.5    & 4.2    & 3.7     & 4.4    & 4.0     & 4.2    & 17.5    & 4.2    & 60.2    \\
        OML\#44135 & 5.2    & 4.0    & 4.6    & 4.2    & 6.2    & 4.2    & 18.2    & 4.4    & 319.8   \\
        MNIST         & 17.6   & 10.4   & 11.3   & 11.3   & 10.4   & 12.1   & 49.6    & 12.5   & 27.9      \\
        CIFAR10       & 106.0    & 99.6    & 101.0    & 106.1    & 97.5      & 108.0    & 496.0     & 102.2    & 6,436.5   \\
        SVHN          & 68.9     & 65.1     & 66.6     & 70.9     & 64.8     & 105.8    & 324.8     & 70.5     & 6,391.6   \\
        CIFAR100      & 405.9    & 395.2    & 391.4     & 429.7    & 405.5    & 447.8    & 1,952.1   & 445.3    & -                \\
        T. ImageNet   & 4,609.5  & 4,465.9  & 4,466.1  & 4,706.8  & 4,621.4  & 4,829.2  & 19,356.4  & 5,152.5  & -                \\
        FOOD101       & 4,464.8  & 4,339.8  & 4,350.3  & 4,475.7  & 4,471.0  & 4,726.8  & 18,903.3  & 4,703.9  & -                \\
		\hline			
	\end{tabular}
	\label{tbl:running_time}
\end{table*}

\paragraph{Runtime comparison \textit{\textbf{per}} datatset}
Table~\ref{tbl:running_time} presents the mean runtime (in minutes) for each algorithm on each dataset.
Overall, compared to Entropy (considered one of the fastest), LDM-S incurs only a $3 \sim  6\%$ increase in runtime on CIFAR10, SVHN, CIFAR100, Tiny ImageNet, and FOOD101.
LDM-S is slightly faster than MC-BALD, Coreset, and BADGE on some datasets.
The more considerable runtime increase observed on OpenML and MNIST datasets is attributed to the relatively small training time compared to acquisition time due to simpler models and datasets.
Due to individual training of all ensemble networks, ENS requires significantly more time (around $5$x) than Entropy.
BAIT exhibits runtime proportional to $d^2 C^2 q$, where $d, C$, and $q$ represent feature dimension, number of classes, and query size, respectively.
This leads to significantly higher runtime, even for smaller datasets, making it impractical for large-scale tasks.
In conclusion, LDM-S demonstrates superior or comparable performance to all considered baselines.

\section{Conclusion} 
\label{sec:conclusion}
This paper introduces the least disagree metric (LDM), which measures uncertainty by considering the disagreement between the original and its perturbed model.
Furthermore, the paper proposes a hypothesis sampling method for approximating the LDM and an LDM-based active learning algorithm (LDM-S).
LDM-S selects unlabeled samples with small LDM values while also incorporating batch diversity.
Extensive evaluations across various datasets demonstrate that LDM-S consistently achieves the best performance or remains comparable to other high-performing active learning algorithms.
These results establish LDM-S as a state-of-the-art method in the field.

One immediate future direction is obtaining a rigorous sample complexity guarantee for LDM-S.
This will provide a theoretical understanding of the number of samples required by LDM-S to achieve the desired level of accuracy.
Exploring the integration of scalable and simple posterior sampling frameworks instead of the current Gaussian sampling scheme presents an exciting potential for improvement.
This could lead to more efficient and practical implementations of LDM-S, particularly for larger datasets.

\subsubsection*{Acknowledgments}
We thank the anonymous reviewers for their helpful comments.
We also thank Juho Lee and Chulhee Yun for their helpful discussions during the initial phase of this research.
This work was partly supported by the Institute for Information \& communications Technology Planning \& Evaluation (IITP) grant funded by the Ministry of Science and ICT (MSIT) (No. 2021-0-01381 and 2022-0-00184) and partly supported by the Commercialization Promotion Agency for R\&D Outcomes (COMPA) grant funded by MSIT (No. 1711198890). The research was conducted at the Korea Advanced Institute of Science and Technology (KAIST) and the Korea Institute of Oriental Medicine (KIOM) (No. NSN2221030).

\bibliographystyle{iclr2024_conference}
\bibliography{ldm}

\newpage
\appendix
\tableofcontents

\newpage
\section*{Appendix}

\section{Related Work}
\label{sec:related_work}
There are various active learning algorithms such as uncertainty sampling~\citep{lewis1994sequential,sharma2017evidence}, expected model change~\citep{freytag2014selecting,ash2020Deep}, model output change~\citep{mohamadi2022making}, expected error reduction~\citep{yoo2019learning,zhao2021efficient}, training dynamics~\citep{wang2021deep}, uncertainty reduction~\citep{zhao2021uncertainty}, core-set approach~\citep{sener2018active,mahmood2021low,yehuda2022active}, clustering~\citep{yang2021batch,citovsky2021batch}, Bayesian active learning~\citep{pinsler2019bayesian,shi2019integrating}, discriminative sampling~\citep{sinha2019variational,zhang2020state,gu2020efficient,caramalau2021sequential}, constrained learning~\citep{elenter2022lagrangian}, and data augmentation~\citep{kim2021lada}.

Various forms of uncertainty measures have been studied for the uncertainty-based approach.
{\it Entropy}~\citep{shannon1948mathematical} based algorithms query unlabeled samples yielding the maximum entropy from the predictive distribution. 
These algorithms perform poorly in multiclass as the entropy is heavily influenced by probabilities of less important classes~\citep{joshi2009multi}. 
{\it Margin} based algorithms~\citep{balcan2007margin} query unlabeled samples closest to the decision boundary.
The sample's closeness to the decision boundary is often not easily tractable in deep architecture~\citep{mickisch2020understanding}.
{\it Mutual information} based algorithms such as DBAL~\citep{gal2017deep} and BatchBALD~\citep{kirsch2019batchbald} query unlabeled samples yielding the maximum mutual information between predictions and posterior of model parameters. 
Both works use MC-dropout \citep{gal2016dropout} for deep networks to evaluate BALD~\citep{houlsby2011bayesian}.
DBAL does not consider the correlation in the batch, and BatchBALD, which approximates batch-wise joint mutual information, is not appropriate for large query sizes.
{\it Disagreement} based query-by-committee (QBC) algorithms~\citep{beluch2018power} query unlabeled samples yielding the maximum disagreement in labels predicted by the committee.
It requires a high computational cost for individual training of each committee network.
{\it Gradient} based algorithm BADGE~\citep{ash2020Deep} queries unlabeled samples that are likely to induce significant and diverse changes to the model, and {\it Fisher Information}~\citep{van2000asymptotic} based algorithm BAIT~\citep{ash2021gone} queries unlabeled samples for which the resulting MAP estimate has the lowest Bayes risk using Fisher information matrix.
Both BADGE and BAIT require a high computational cost when the feature dimension or the number of classes is large.

\section{Proof of Theorem \ref{thm:ldm_converge}: LDM Estimator is Consistent}
\label{app:converge}

We consider multiclass classification, which we recall here from Section \ref{sec:ldm}.
Let $\gX$ and $\gY$ be the instance and label space with $\gY = \{ \ve_i \}_{i=1}^C$, where $\ve_i$ is the $i$\nth standard basis vector of $\sR^C$ (i.e. one-hot encoding of the label $i$), and $\gH$ be the hypothesis space of $h: \gX \rightarrow \gY$.

By the triangle inequality, we have that
\begin{equation*}
    \left| L_{N, M} - L \right| \leq \underbrace{\left| L_{N, M} - \widetilde{L}_{N} \right|}_{\triangleq \Delta_1(N, M)} + \underbrace{\left| \widetilde{L}_{N} - L \right|}_{\triangleq \Delta_2(N)},
\end{equation*}
where we denote $\widetilde{L}_{N} := \min_{h \in \gH_{N}^{g, \bm{x}_0}} \rho (h, g)$.

We deal with $\Delta_1(N, M)$ first.
By definition,
\begin{align*}
    \Delta_1(N, M) &= \left| \inf_{h \in \gH_{N}^{g, \bm{x}_0}} \rho_{_{M}}(h, g) - \inf_{h \in \gH_{N}^{g, \bm{x}_0}} \rho(h, g) \right| \\
    &= \left| \inf_{h \in \gH_{N}^{g, \bm{x}_0}} \frac{1}{M} \sum_{i=1}^{M}  \sI \left[ h(X_i) \ne g ( X_i )  \right] - \inf_{h \in \gH_{N}^{g, \bm{x}_0}} \E_{X \sim D_\mathcal{X}} [\sI[h(X) \ne g(X)]] \right|
\end{align*}

As $\Delta_1(N, M)$ is a difference of infimums of a sequence of functions, we need to establish a uniform convergence-type result over a sequence of sets. This is done by invoking ``general'' Glivenko-Cantelli (Lemma \ref{lem-g}) and our generic bound (Lemma \ref{lem:rademacher}) on the empirical Rademacher complexity on our hypothesis class $\gF = \left\{ f(\vx) = \mathbb{I}[h(\vx) \ne g(\vx)] \mid h \in \gH_{N}^{g, \bm{x}_0} \right\}$, i.e., for any scalar $\delta \geq 0$, we have
\begin{equation}
\label{eq:uniform}
    \sup_{h \in \gH_{N}^{g, \bm{x}_0}} \left| \rho_M(h, g) - \rho(h, g) \right|
    \leq 2 \sqrt{\frac{2 \log (C N)}{M}} + \delta.
\end{equation}
with $ \gD_\gX$-probability at least $1 - \exp\left( -\frac{M \delta^2}{8} \right)$.

Now let $\delta > 0$ be arbitrary, and for simplicity, denote $\delta' = 2 \sqrt{\frac{2 \log (C N)}{M}} + \delta$.
As $\gH_{N}^{g, \bm{x}_0}$ is finite, the infimums in the statement are achieved by some $g_1, g_2 \in \gH_{N}^{g, \bm{x}_0}$, respectively.
Then, due to the uniform convergence, we have that with probability at least $1 - e^{-\frac{M \delta^2}{8}}$,
\begin{equation*}
    \inf_{h \in \gH_{N}^{g, \bm{x}_0}} \rho(h, g) = \rho(g_2, g)
    > \rho_M(g_2, g) - \delta'
    > \inf_{h \in \gH_{N}^{g, \bm{x}_0}} \rho_M(h, g) - \delta'
\end{equation*}
and
\begin{equation*}
    \inf_{h \in \gH_{N}^{g, \bm{x}_0}} \rho_M(h, g) = \rho_M(g_1, g)
    > \rho(g_1, g) - \delta'
    > \inf_{h \in \gH_{N}^{g, \bm{x}_0}} \rho(h, g) - \delta',
\end{equation*}
and thus,
\begin{equation*}
    \Delta_1(N, M) = \left| \inf_{h \in \gH_{N}^{g, \bm{x}_0}} \rho_M(h, g) - \inf_{h \in \gH_{N}^{g, \bm{x}_0}} \rho(h, g) \right| \leq 2 \sqrt{\frac{2 \log (C N)}{M}} + \delta.
\end{equation*}
Choosing $M > \frac{8}{\delta^2}\log(C N)$, we have that $\Delta_1(N, M) \leq 2\delta$ with probability at least $1 - \frac{1}{C N}$.

\noindent We now deal with $\Delta_2(N)$.
Recalling its definition, we have that for any $h^* \in \gH^{g, \vx_0}$ with $\rho(h^*, g) - L(g, \vx_0) < \varepsilon$,
\begin{align*}
    \Delta_2(N) &= \left| \min_{h \in \gH_{N}^{g, \bm{x}_0}} \rho(h, g) - \inf_{h \in \gH^{g, \bm{x}_0}} \rho(h, g) \right| \\
    &\leq \left| \min_{h \in \gH_{N}^{g, \bm{x}_0}} \rho(h, g) - \rho(h^*, g) \right| + \left| \rho(h^*, g) - L(g, \vx_0) \right| \\
    &\leq \min_{h \in \gH_{N}^{g, \bm{x}_0}} \left| \rho(h, g) - \rho(h^*, g) \right| + \varepsilon \\
    &\leq \min_{h \in \gH_{N}^{g, \bm{x}_0}} \rho(h, h^*) + \varepsilon \\
    &\overset{(*)}{\leq} B \min_{h \in \gH_{N}^{g, \bm{x}_0}} d_\gH(h, h^*) + \varepsilon,
\end{align*}
where $(*)$ follows from Assumption~\ref{assumption:Lipschitz}.
Taking the infimum over all possible $h^*$ on both sides, by Assumption~\ref{assumption:coverage}, we have that $\Delta_2(N) \leq \alpha(N, \varepsilon) + \varepsilon$ w.p. at least $1 - \beta(N)$.

Using the union bound, we have that with $M > \frac{8}{\delta^2} \log(C N)$,
\begin{align*}
    \sP\left[ L_{N,M}(g, \vx_0) - L(g, \vx_0) \leq 2\delta + \alpha(N, \varepsilon) + \varepsilon \right] &\geq \sP\left[ \Delta_1(N, M) + \Delta_2(N) \leq 2\delta + \alpha(N, \varepsilon) + \varepsilon \right] \\
    &\geq 1 - \frac{1}{CN} - \beta(N).
\end{align*}

For the last part (convergence in probability), we start by denoting $Z_{\delta, N} = \left| L_{N, M} (g, \bm{x}_0) - L(g, \bm{x}_0) \right|$. (Recall that $M$ is a function of $\delta$, according to our particular choice).
Under the prescribed limits and arbitrarities and our assumption that $\alpha(N), \beta(N) \rightarrow 0$ as $N \rightarrow \infty$, we have that for any sufficiently small $\delta' > 0$ sufficiently large $N$,
\begin{equation*}
    \sP\left[ Z_{\delta',s} \leq 2\delta' \right] \geq 1 - 2\delta'.
\end{equation*}
To be precise, given arbitrary $\theta_0, \Delta, \varepsilon > 0$, let $\delta' > 0$ be such that $2\delta + \varepsilon < \frac{\delta'}{2}$.
Then it suffices to choose $N > \max\left( \alpha^{-1}\left( \frac{\delta'}{2}; \varepsilon \right), \frac{2}{C\delta'} + \beta^{-1}\left( \frac{\delta'}{2} \right) \right)$, where $\alpha^{-1}(\cdot; \varepsilon)$ is the inverse function of $\alpha(\cdot, \varepsilon)$ w.r.t. the first argument, and $\beta^{-1}$ is the inverse function of $\beta$.

This implies that $d_{KF}(Z_{\delta',s}, 0) \leq 2\delta'$, where $d_{KF}(X, Y) = \inf\{ \delta' \geq 0 \mid \sP[|X - Y| \geq \delta'] \leq \delta' \}$ is the {\it Ky-Fan metric}, which induces a metric structure on the given probability space with the convergence in probability (see Section 9.2 of \citep{dudley02}).
As $\theta_0, \Delta, \varepsilon$ is arbitrary and thus so is $\delta'$, this implies that $\left| L_{N, M} (g, \bm{x}_0) - L(g, \bm{x}_0) \right| \overset{\sP}{\rightarrow} 0$.

\begin{remark}
    We believe that Assumption~\ref{assumption:coverage} can be relaxed with a weaker assumption. 
    This seems to be connected with the notion of $\epsilon$-net~\citep{wainwright2019highdim} as well as recent works on random Euclidean coverage~\citep{reznikov2016covering,krapivsky2023covering,aldous2022covering,penrose2023coverage}.
    Although the aforementioned works and our last assumption are common in that the random set is constructed from i.i.d. samples(hypotheses), one main difference is that for our purpose, instead of covering the entire space, one just needs to cover a certain portion at which the infimum is (approximately) attained.
\end{remark}

\subsection{Supporting Lemmas and Proofs}
\begin{lemma}[Theorem 4.2 of \cite{wainwright2019highdim}]
\label{lem-g}
    Let $Y_1, \cdots, Y_n \overset{i.i.d.}{\sim} \sP$ for some distribution $\sP$ over $\gX$.
    For any $b$-uniformly bounded function class $\gF$, any positive integer $n \geq 1$ and any scalar $\delta \geq 0$, we have
    \begin{equation*}
        \sup_{f \in \gF} \left| \frac{1}{n} \sum_{i=1}^n f(Y_i) - \E[f(Y)] \right| \leq 2 \gR_n(\gF) + \delta
    \end{equation*}
    with $\sP$-probability at least $1 - \exp\left( -\frac{n\delta^2}{8b} \right)$.
    Here, $\gR_n(\gF)$ is the empirical Rademacher complexity of $\gF$.
\end{lemma}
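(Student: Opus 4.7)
The plan is the classical two-stage uniform law of large numbers argument: a symmetrization step that upper bounds the expected supremum by twice the Rademacher complexity, followed by McDiarmid-type concentration that controls deviations of the supremum around its mean. Denote $Z_n := \sup_{f \in \gF} \left| \frac{1}{n}\sum_{i=1}^n f(Y_i) - \E[f(Y)] \right|$ throughout.

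First, I would apply McDiarmid's bounded-differences inequality to $Z_n$. Since every $f \in \gF$ satisfies $|f| \leq b$, replacing any single $Y_i$ by an independent copy perturbs each empirical mean $\frac{1}{n}\sum_j f(Y_j)$ by at most $2b/n$ uniformly in $f$, so the supremum $Z_n$ inherits bounded differences $c_i = 2b/n$. McDiarmid then gives a one-sided sub-Gaussian tail $\sP\left[ Z_n \geq \E[Z_n] + \tfrac{\delta}{2} \right] \leq \exp\!\left( -c\, n \delta^2 / b \right)$ for an explicit absolute constant $c$.

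Second, I would bound $\E[Z_n]$ by the standard symmetrization trick. Introducing an independent ghost sample $Y_1', \ldots, Y_n' \overset{i.i.d.}{\sim} \sP$, one writes $\E[f(Y)] = \E_{Y'}\!\left[ \frac{1}{n}\sum_i f(Y_i') \right]$, pulls this expectation outside the supremum via Jensen's inequality applied to the convex map $x \mapsto |x|$, and obtains $\E[Z_n] \leq \E\!\left[ \sup_{f \in \gF} \left| \frac{1}{n}\sum_i (f(Y_i) - f(Y_i')) \right| \right]$. Since $f(Y_i) - f(Y_i')$ is symmetrically distributed, the joint law is preserved when each summand is multiplied by an independent Rademacher variable $\epsilon_i$; splitting the resulting absolute value with the triangle inequality and exploiting the symmetry $\epsilon_i \overset{d}{=} -\epsilon_i$ yields $\E[Z_n] \leq 2\, \E[\widehat{\gR}_n(\gF)]$.

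Third, to replace $\E[\widehat{\gR}_n(\gF)]$ by the data-dependent empirical quantity $\gR_n(\gF)$ appearing in the statement, I would invoke McDiarmid's inequality one more time on $\widehat{\gR}_n(\gF)$, which has the same $2b/n$ bounded differences by the same boundedness argument. This gives $\E[\widehat{\gR}_n(\gF)] \leq \widehat{\gR}_n(\gF) + \tfrac{\delta}{4}$ with high probability. Chaining the three inequalities and taking a union bound over the two concentration events yields $Z_n \leq 2\,\gR_n(\gF) + \delta$ at the claimed confidence level. The main obstacle I expect is purely bookkeeping of constants: the two McDiarmid applications and the slack in symmetrization produce a concentration exponent whose exact form depends on how $\delta$ is split between the two tail events, and matching the precise $\exp(-n\delta^2/(8b))$ in the statement requires carefully allocating the budget rather than any new conceptual step.
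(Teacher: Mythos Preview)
The paper does not prove this lemma at all: it is quoted verbatim as Theorem~4.2 of \cite{wainwright2019highdim} and invoked as a black box in the proof of Theorem~\ref{thm:ldm_converge}. Your three-step outline (McDiarmid on the supremum, symmetrization to bound the mean by twice the expected Rademacher complexity, and a second McDiarmid application to pass from the expected to the empirical Rademacher complexity) is precisely the standard proof given in Wainwright's text, so there is nothing to compare beyond noting that you have reconstructed the reference rather than the paper.
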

Recall that in our case, $n = M$, $b = 1$, and $\gF = \left\{ f(\vx) = \mathbb{I}[h(\vx) \ne g(\vx)] \mid h \in \gH_{N}^{g, \bm{x}_0} \right\}$.
The following lemma provides a generic bound on the empirical Rademacher complexity of $\gF$:
\begin{lemma}\label{lem:rademacher}
$\gR_{M}(\gF) \leq \sqrt{\frac{2 \log (C N)}{M}}$.
\end{lemma}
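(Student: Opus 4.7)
The plan is to apply Massart's finite-class lemma directly to $\gF$. First I would bound the cardinality of $\gF$: each $f \in \gF$ is of the form $\vx \mapsto \mathbb{I}[h(\vx) \ne g(\vx)]$ for some $h \in \gH_N^{g, \vx_0} \subseteq \gH_N$, so $|\gF| \le |\gH_N^{g, \vx_0}| \le |\gH_N| = N$. Second, since each $f$ is $\{0,1\}$-valued, the vector $(f(X_1), \ldots, f(X_M))$ has Euclidean norm at most $\sqrt{M}$ uniformly over $f$ and over any realization of $X_1, \ldots, X_M$.

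With these two ingredients in hand, Massart's lemma applied to the finite subset $\{(f(X_1),\ldots,f(X_M)) : f \in \gF\} \subset \sR^M$ immediately yields
\begin{equation*}
    \gR_M(\gF) = \E_{\sigma}\!\left[\sup_{f \in \gF} \frac{1}{M} \sum_{i=1}^M \sigma_i f(X_i)\right] \le \frac{\sqrt{M}}{M}\sqrt{2\log|\gF|} \le \sqrt{\frac{2\log N}{M}} \le \sqrt{\frac{2\log(CN)}{M}},
\end{equation*}
where $\sigma_1,\ldots,\sigma_M$ are i.i.d.\ Rademacher signs and the last (trivially loose) step matches the stated form.

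I do not anticipate any real obstacle here, as this is essentially a textbook application of the Massart bound for finite function classes. The extra factor of $C$ inside the logarithm is slack introduced only to keep the statement of Theorem~\ref{thm:ldm_converge} clean (so that the confidence term $1-\tfrac{1}{CN}$ in the proof of that theorem and the complexity bound share the same $CN$); it carries no genuine dependence on the number of labels and could be removed at the cost of slightly messier constants elsewhere.
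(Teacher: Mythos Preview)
Your argument is correct and in fact tighter than the paper's. The paper does not apply Massart's lemma directly to $\gF$; instead it rewrites $f(X_i)=\mathbb{I}[h(X_i)\ne g(X_i)]=1-h_{l(i)}(X_i)$, where $l(i)=\argmax_c[g(X_i)]_c$ picks out the coordinate of the one-hot output of $g$, then passes to the enlarged class $\{h_l : h\in\gH_N^{g,\vx_0},\, l\in[C]\}$ before invoking Massart. This detour is what produces the $CN$ inside the logarithm; your direct route shows that $|\gF|\le N$ already suffices, so the factor $C$ is genuinely slack, exactly as you guessed. The paper's intermediate inequality $\E\big[\sup_h\sum_i\sigma_i h_{l(i)}(X_i)\big]\le\E\big[\sup_{h,l}\sum_i\sigma_i h_l(X_i)\big]$ is also somewhat delicate, since the label index $l(i)$ varies with $i$ on the left but is forced to be constant on the right; your argument sidesteps this entirely by never unpacking the indicator. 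In short: same endpoint (Massart on a finite class), but your path is shorter, cleaner, and yields $\sqrt{2\log N/M}$ rather than $\sqrt{2\log(CN)/M}$.
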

\begin{proof}
For simplicity, we denote $\E \triangleq \E_{\{X_i\}_{i=1}^{M}, \bm{\sigma}}$, where the expectation is w.r.t. $X_i \sim \gD_\gX$ i.i.d., and $\bm{\sigma}$ is the $M$-dimensional Rademacher variable.
Also, let $l : [M] \rightarrow [C]$ be the labeling function for fixed samples $\{X_i\}_{i=1}^{M}$ i.e. $l(i) = \argmax_{c \in [C]} \left[g(X_i)\right]_c$.
As $g$ outputs one-hot encoding for each $i$, $l(i)$ is unique and thus well-defined.
Thus, we have that $f(X_i) = \mathbb{I}[h(X_i) \ne g(X_i)] = 1 - h_{l(i)}(X_i)$, where we denote $h = (h_1, h_2, \cdots, h_C)$ with $h_j : \gX \rightarrow \{0, 1\}$.
    
By definition,
\begin{align*}
    \gR_{M}(\gF) &= \E \left[ \sup_{f \in \gF} \frac{1}{M} \sum_{i=1}^{M} \sigma_i f(X_i) \right] \\
    &= \E \left[ \sup_{h \in \gH_{N}^{g, \bm{x}_0}} \frac{1}{M} \sum_{i=1}^{M} \sigma_i (1 - h_{l(i)}(X_i)) \right] \\
    &= \E \left[ \sup_{h \in \gH_{N}^{g, \bm{x}_0}} \frac{1}{M} \sum_{i=1}^{M} \sigma_i h_{l(i)}(X_i) \right] \\
    &\leq \E \left[ \sup_{h \in \gH_{N}^{g, \bm{x}_0}, l \in [C]} \frac{1}{M} \sum_{i=1}^{M} \sigma_i h_l(X_i) \right] \\
    &= \gR_{M}\left(\left\{ h_l \mid h \in \gH_{N}^{g, \bm{x}_0}, l \in [C] \right\}\right) \\
    &\leq \sqrt{\frac{2 \log (C N)}{M}},
\end{align*}
where the last inequality follows from Massart's Lemma~\citep{Massart00} and the fact that $\gH_{N}^{g, \bm{x}_0}$ is a finite set of cardinality at most $N$.
\end{proof}

\subsection{Proof of Corollary~\ref{cor:rank}}
With the given assumptions, we know that $L_{M,N}(g, \vx_k) \overset{\sP}{\rightarrow} L(g, \vx_k)$ for $k \in \{i, j\}$.
For notation simplicity we denote $L(g, \vx_k)$ as $L^{(k)}$, $L_{M,N}(g, \vx_k)$ as $L_N^{(k)}$ and $\lim_{\substack{\min(M, N) \rightarrow \infty \\ M = \omega(\log(CN))}}$ as $\lim_{N \rightarrow \infty}$.
For arbitrary $\varepsilon > 0$, we have
\begin{align*}
    \lim_{N \rightarrow \infty} \sP\left[ L_N^{(i)} > L_N^{(j)} + \varepsilon \right] &= \lim_{N \rightarrow \infty} \sP\left[ L_N^{(i)} > L_N^{(j)} + \varepsilon \ \wedge \ |L_N^{(i)} - L^{(i)}| < \frac{\varepsilon}{2} \ \wedge \ |L_N^{(j)} - L^{(j)}| < \frac{\varepsilon}{2} \right] \\
    &\leq \lim_{N \rightarrow \infty} \sP\left[ L^{(i)} + \frac{\varepsilon}{2} > L^{(j)} - \frac{\varepsilon}{2} + \varepsilon \right] \\
    &= \sP\left[ L^{(i)} > L^{(j)} \right]
    = 0.
\end{align*}

\begin{figure*}[!t]
    \centering
    \subfloat[]{\includegraphics[width=0.33\linewidth]{./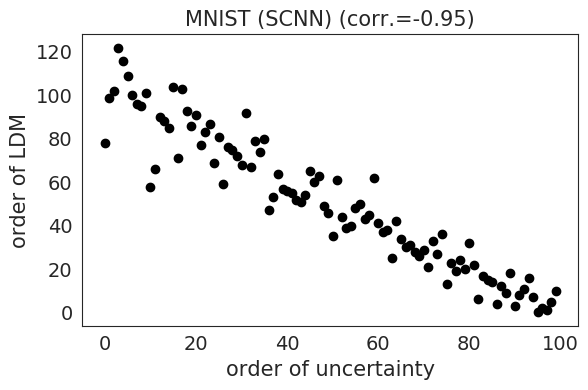}%
        \label{fig:dispersions_1}}
    \hfil
    \subfloat[]{\includegraphics[width=0.33\linewidth]{./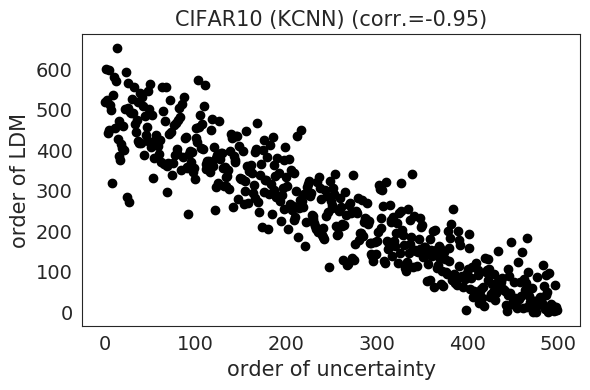}%
        \label{fig:dispersions_2}}
    \hfil
    \subfloat[]{\includegraphics[width=0.33\linewidth]{./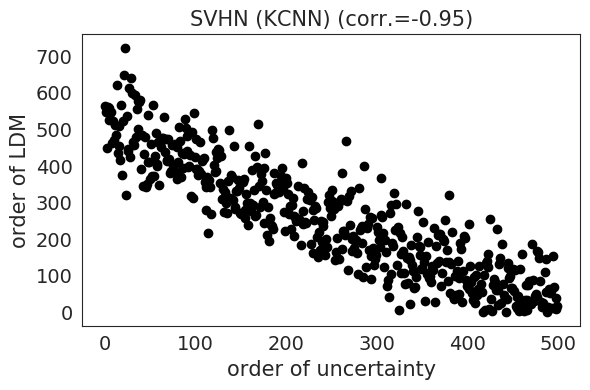}%
        \label{fig:dispersions_3}}
    \hfil
    \subfloat[]{\includegraphics[width=0.33\linewidth]{./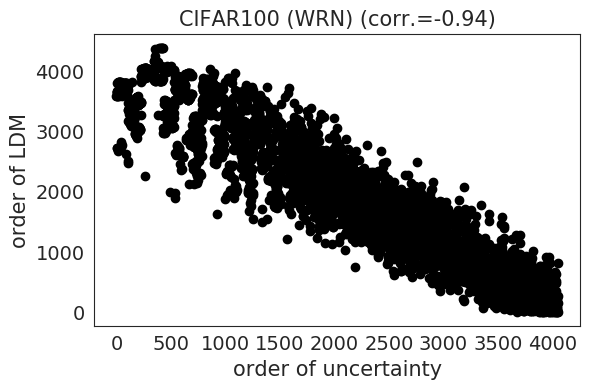}%
        \label{fig:dispersions_4}}
    \hfil
    \subfloat[]{\includegraphics[width=0.33\linewidth]{./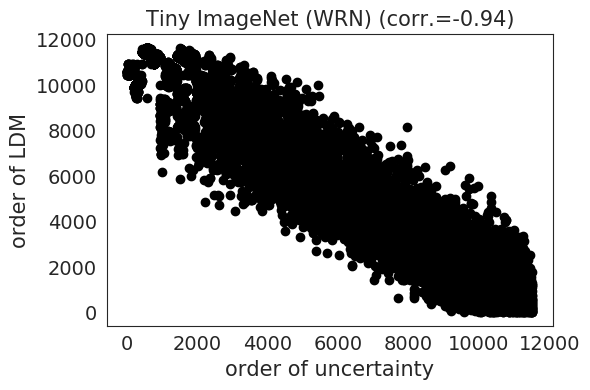}%
        \label{fig:dispersions_5}}
    \hfil
    \subfloat[]{\includegraphics[width=0.33\linewidth]{./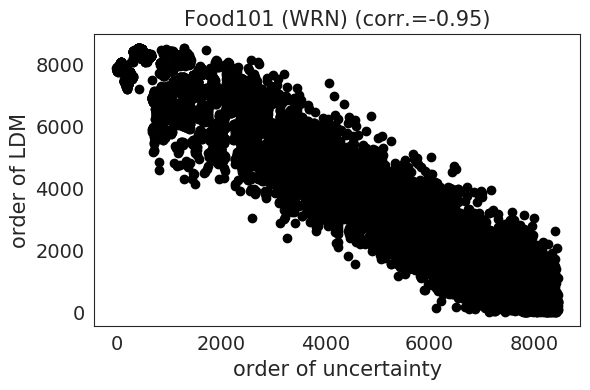}%
        \label{fig:dispersions_6}}
    \caption{Examples of negative Spearman's rank correlation between LDM order and uncertainty order on MNIST (a), CIFAR10 (b), SVHN (c), CIFAR100 (d), Tiny ImageNet (e), and FOOD101 (f).}
    \label{fig:dispersions}
\end{figure*}

\section{Theoretical Verifications of Intuitions} 
Here, we consider two-dimensional binary classification with a set of linear classifiers, $\mathcal{H} = \{h: h(\bm{x}) = \sgn(\bm{x} \tran \bm{w})\}$ where $\vw \in \mathbb{R}^2$ is the parameter of $h$.
We assume that $\bm{x}$ is uniformly distributed on $\mathcal{X} = \{\bm{x}: \Vert \bm{x} \Vert \le 1\} \subset \mathbb{R}^2$.
For a given $g \in \gH$, let $\vv$ be the parameter of $g$.

\subsection{LDM as an Uncertainty Measure}
\label{app:uncertainty}
Recall from Section \ref{subsec:ldm_definition} that the intuition behind a sample with a small LDM indicates that its prediction can be easily flip-flopped even by a small perturbation in the predictor.
We theoretically prove this intuition 
\begin{proposition}
\label{prop:sampling}
    Suppose that $h$ is sampled with $\bm{w} \sim \mathcal{N}(\bm{v}, \mathbf{I} \sigma^2)$.
    Then,
    \begin{equation*}
        L(g, \bm{x}_1) < L(g, \bm{x}_2) \Longleftrightarrow \mathbb{P} [h(\bm{x}_1) \ne g(\bm{x}_1)] > \mathbb{P} [h(\bm{x}_2) \ne g(\bm{x}_2)].
    \end{equation*}
\end{proposition}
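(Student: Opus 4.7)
The plan is to rewrite both $L(g,\bm{x})$ and $\sP[h(\bm{x}) \ne g(\bm{x})]$ as explicit monotone functions of a single geometric quantity attached to $\bm{x}$ — the angle between the line through $\bm{x}$ and the decision boundary of $g$ — and then chain the implications via monotonicity.

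First, I would reuse the geometric computation already carried out in Section~\ref{subsec:ldm_definition}: for a linear classifier in $\sR^2$, $L(g, \bm{x}_0) = \theta_0/\pi$, where $\theta_0 \in [0, \pi/2]$ denotes the angle between the line spanned by $\bm{x}_0$ and the decision boundary of $g$. So the left-hand inequality $L(g, \bm{x}_1) < L(g, \bm{x}_2)$ is equivalent to $\theta_1 < \theta_2$, with both angles in $[0, \pi/2]$.

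Second, I would compute the disagreement probability directly. Writing $\bm{w} = \bm{v} + \bm{\epsilon}$ with $\bm{\epsilon} \sim \gN(\bm{0}, \mathbf{I}\sigma^2)$, the classifiers disagree at $\bm{x}$ iff $(\bm{x}\tran \bm{v})\,(\bm{x}\tran \bm{v} + \bm{x}\tran \bm{\epsilon}) < 0$. Since $\bm{x}\tran \bm{\epsilon} \sim \gN(0, \|\bm{x}\|^2 \sigma^2)$, a short case split on the sign of $\bm{x}\tran \bm{v}$ (the boundary case $\bm{x}\tran\bm{v}=0$ has measure zero) gives
$$\sP[h(\bm{x}) \ne g(\bm{x})] = 1 - \Phi\!\left( \frac{|\bm{x}\tran \bm{v}|}{\|\bm{x}\|\,\sigma} \right),$$
where $\Phi$ is the standard normal CDF. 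The geometric reduction then uses $|\bm{x}\tran \bm{v}|/\|\bm{x}\| = \|\bm{v}\|\,|\cos\phi|$, with $\phi \in [0,\pi]$ the angle between $\bm{x}$ and $\bm{v}$. Since the decision boundary is orthogonal to $\bm{v}$, one checks that $\theta_0 = |\pi/2 - \phi|$, so $|\cos\phi| = \sin\theta_0$, and therefore
$$\sP[h(\bm{x}) \ne g(\bm{x})] = 1 - \Phi\!\left( \frac{\|\bm{v}\|\,\sin\theta_0}{\sigma} \right).$$

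Third, the equivalence drops out from monotonicity: $\sin$ is strictly increasing on $[0,\pi/2]$ and $\Phi$ is strictly increasing on $\sR$, so $\theta_1 < \theta_2$ iff $\Phi(\|\bm{v}\|\sin\theta_1/\sigma) < \Phi(\|\bm{v}\|\sin\theta_2/\sigma)$ iff $\sP[h(\bm{x}_1) \ne g(\bm{x}_1)] > \sP[h(\bm{x}_2) \ne g(\bm{x}_2)]$.

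There is no real obstacle here; the only mildly delicate step is the geometric identification $|\cos\phi| = \sin\theta_0$, which just requires noting that the decision boundary of a linear classifier with weight $\bm{v}$ is the subspace orthogonal to $\bm{v}$ and tracking the convention that $\theta_0$ measures the acute angle between two lines. Every remaining step is a one-line Gaussian computation or the monotonicity of a CDF.
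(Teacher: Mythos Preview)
Your proof is correct and takes a genuinely different route from the paper. The paper argues via the duality between weight space and data space: it identifies $\sP[h(\vx)\ne g(\vx)]$ with the Gaussian mass of the half-plane $\gW^{\vv,\vx}=\{\vw:h(\vx)\ne g(\vx)\}$ on the far side of the line $l_{\vx}=\{\vw:\vx\tran\vw=0\}$ from $\vv$, and then compares the two half-planes $\gW^{\vv,\vx_1}$ and $\gW^{\vv,\vx_2}$ by a symmetry argument. Writing $\gW_1=\gW^{\vv,\vx_1}\setminus\gW^{\vv,\vx_2}$ and $\gW_2=\gW^{\vv,\vx_2}\setminus\gW^{\vv,\vx_1}$, these two wedges are point-symmetric about the origin, and since the Gaussian is centered at $\vv$ (which lies closer to $l_{\vx_1}$ when $\theta_1<\theta_2$), the density is pointwise larger on $\gW_1$ than on its reflected counterpart $\gW_2$; hence the difference of probabilities has the right sign. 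No closed form is computed.

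Your approach is more direct and, in this setting, more informative: by projecting the isotropic noise onto the one-dimensional subspace spanned by $\vx$ you obtain the exact formula $\sP[h(\vx)\ne g(\vx)]=1-\Phi\bigl(\lVert\vv\rVert\sin\theta_0/\sigma\bigr)$, from which the equivalence follows by strict monotonicity of $\sin$ on $[0,\pi/2]$ and of $\Phi$. This buys you an explicit expression (useful if one later wants quantitative bounds), and it avoids the region-by-region comparison entirely. The paper's geometric argument, on the other hand, is perhaps easier to visualize and would generalize more readily to radially symmetric non-Gaussian perturbations, since it only uses that the density decreases with distance from $\vv$ and is rotationally symmetric about $\vv$, not the specific Gaussian form.
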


Figure~\ref{fig:dispersions} shows examples of Spearman's rank correlation~\citep{spearman1904general} between the order of LDM and uncertainty, which is defined as the ratio of label predictions by a small perturbation in the predictor, on MNIST, CIFAR10, SVHN, CIFAR100, Tiny ImageNet, and FOOD101 datasets.
Samples with increasing LDM or uncertainty are ranked from high to low.
The results show that LDM and uncertainty orders have a strong negative rank correlation. 
Thus, a sample with a smaller LDM is closer to the decision boundary and is more uncertain.

\begin{figure}[!ht]
	\centering
	\subfloat[]{\includegraphics[width=0.49\linewidth]{./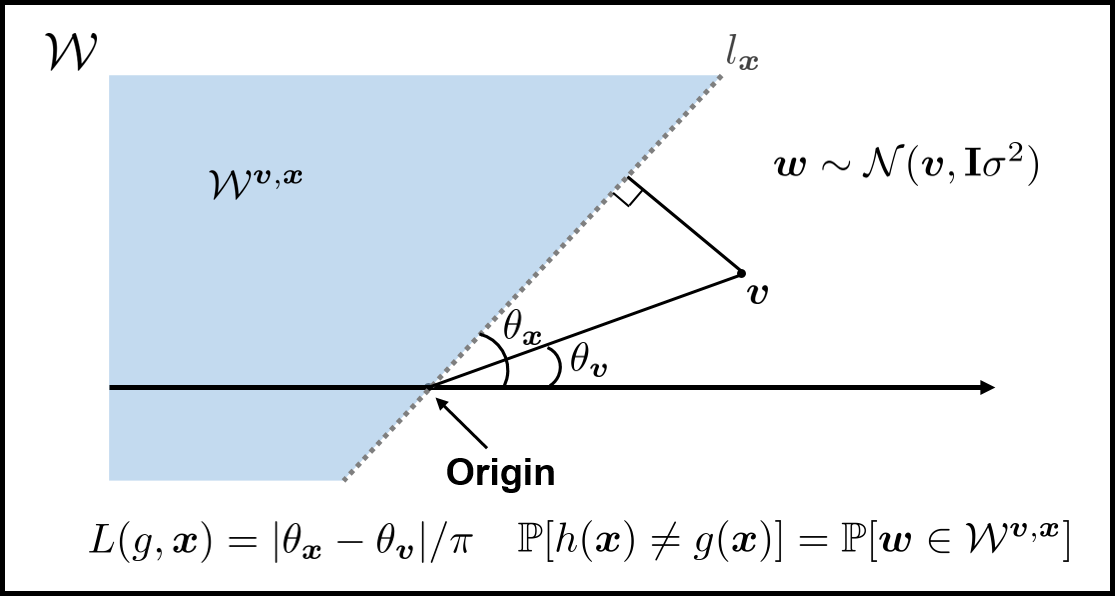}%
		\label{fig:conj_veri2_1}}
	\hfil
	\subfloat[]{\includegraphics[width=0.49\linewidth]{./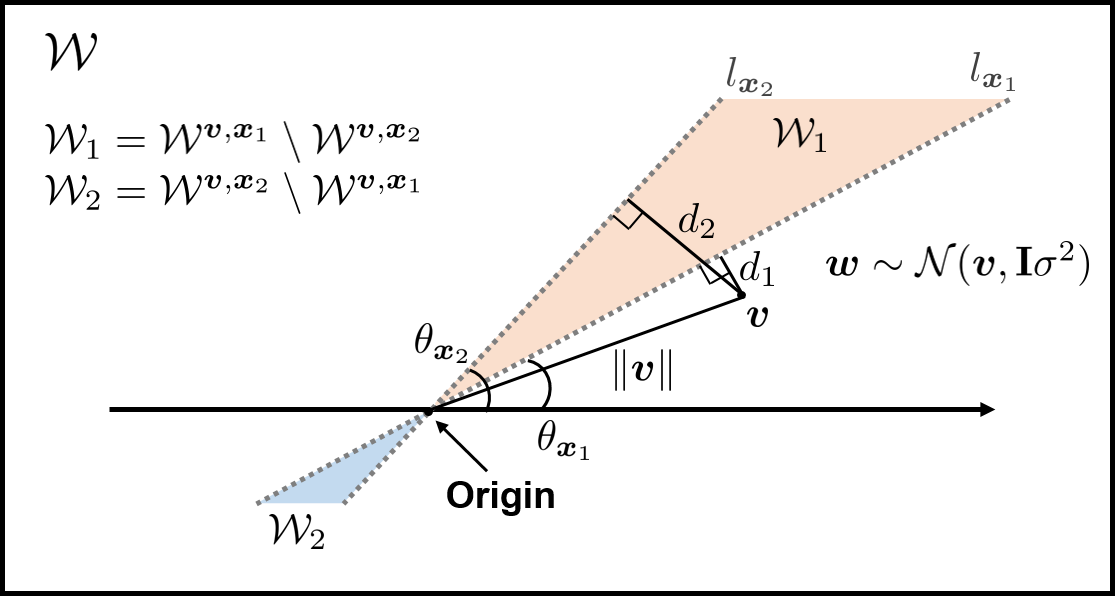}%
		\label{fig:conj_veri2_2}}
	\caption{Proof of Proposition~\ref{prop:sampling}.}
	\label{fig:conj_veri2}
\end{figure}

\begin{proof}[Proof of Proposition \ref{prop:sampling}]
One important observation is that by the duality between $\bm{w}$ and $\bm{x}$~\citep{tong2001support}, in $\mathbb{R}^2$, $\bm{w}$ is a point and $\bm{x}$ is represented by the hyperplane, $l_{\bm{x}} = \{\bm{w} \in \mathbb{R}^2: \sgn(\bm{x}\tran \bm{w}) = 0\}$.
Suppose that $h$ is sampled with $\bm{w} \sim \mathcal{N}(\bm{v}, \mathbf{I} \sigma^2)$, and let $\theta_\vv$ be the angle of $\bm{v}$, $\theta_{\bm{x}}$ be the angle between $l_{\bm{x}}$ and positive x-axis, and $\mathcal{W}^{\bm{v}, \bm{x}}$ be the half-plane divided by $l_{\bm{x}}$ which does not contain $\bm{v}$:
\begin{equation*}
    \mathcal{W}^{\bm{v}, \bm{x}} = \{\bm{w}' \in \mathcal{W} \mid h'(\bm{x}) \ne g (\bm{x})\}
\end{equation*}
as in Figure~\ref{fig:conj_veri2_1}.
Then, $L(g, \bm{x}) = |\theta_{\bm{x}} - \theta_\vv| / \pi$ and $\mathbb{P} [h(\bm{x}) \ne g(\bm{x})] = \mathbb{P} [\bm{w} \in \mathcal{W}^{\bm{v}, \bm{x}}]$.

Let $d_1, d_2$ be the distances between $\bm{v}$ and $l_{\bm{x}_1}, l_{\bm{x}_2}$ respectively, and 
\begin{equation*}
	\mathcal{W}_1 = \mathcal{W}^{\bm{v}, \bm{x}_1} \setminus \mathcal{W}^{\bm{v}, \bm{x}_2}, \quad \mathcal{W}_2 = \mathcal{W}^{\bm{v}, \bm{x}_2} \setminus \mathcal{W}^{\bm{v}, \bm{x}_1}
\end{equation*}
as in Figure~\ref{fig:conj_veri2_2}.
Suppose that $d_1 < d_2$, then $|\theta_{\bm{x}_1} - \theta_\vv| < |\theta_{\bm{x}_2} - \theta_\vv|$ since $d_i = \Vert \bm{w} \Vert \sin |\theta_{\bm{x}_i} - \theta_\vv|$, and
\begin{equation*}
        \mathbb{P} [\bm{w} \in \mathcal{W}^{\bm{v}, \bm{x}_1}] - \mathbb{P} [\bm{w} \in \mathcal{W}^{\bm{v}, \bm{x}_2}] = \mathbb{P} [\bm{w} \in \mathcal{W}_1] - \mathbb{P} [\bm{w} \in \mathcal{W}_2] > 0
\end{equation*}
by the following:
\begin{equation*}
        \mathcal{W}^{\bm{v}, \bm{x}_1} =  \mathcal{W}_1 \cup  ( \mathcal{W}^{\bm{v}, \bm{x}_1} \cap \mathcal{W}^{\bm{v}, \bm{x}_2}  ), \quad
        \mathcal{W}^{\bm{v}, \bm{x}_2} =  \mathcal{W}_2 \cup  ( \mathcal{W}^{\bm{v}, \bm{x}_1} \cap  \mathcal{W}^{\bm{v}, \bm{x}_2} )
\end{equation*}
where $\mathcal{W}_1, \mathcal{W}_2,$ and $\mathcal{W}^{\bm{v}, \bm{x}_1} \cap \mathcal{W}^{\bm{v}, \bm{x}_2}$ are disjoint.
Note that $\mathcal{W}_1$ and $\mathcal{W}_2$ are one-to-one mapped by the symmetry at the origin. 
Still, the probabilities are different by the biased location of $\vv$, i.e., $\phi(\bm{w}_1|\bm{v}, \sigma^2) > \phi(\bm{w}_2|\bm{v}, \sigma^2)$ for all pairs of $ ( \bm{w}_1, \bm{w}_2 )  \in  \mathcal{W}_1 \times \mathcal{W}_2$ that are symmetric at the origin. Here
 $\phi(\cdot|\bm{v}, \sigma^2)$ is the probability density function of the bivariate normal distribution with mean $\bm{v}$ and covariance $\sigma^2 \mathbf{I}$.
Thus,
\begin{equation*}
    L(g, \bm{x}_1) < L(g, \bm{x}_2) \Longleftrightarrow d_1 < d_2 \Longleftrightarrow \mathbb{P} [h(\bm{x}_1) \ne g(\bm{x}_1)] > \mathbb{P} [h(\bm{x}_2) \ne g(\bm{x}_2)].
\end{equation*}
\end{proof}

\subsection{Varying $\sigma^2$ in Algorithm \ref{alg:empirical_ldm}}
\label{app:brute-force}
Recall from Section \ref{subsec:brute-force} that the intuition behind using multiple $\sigma^2$ is that it controls the trade-off between the probability of obtaining a hypothesis with a different prediction than that of $g$ and the scale of $\rho_{_S}(h, g)$.
Theoretically, we show the following for two-dimensional binary classification with linear classifiers:
\begin{proposition}
\label{prop:sampling2}
    Suppose that $h$ is sampled with $ {\bm w} \sim \mathcal{N}(\bm{v}, \mathbf{I} \sigma^2)$ where $\bm{w}, \bm{v} \in \mathcal{W}$ are parameters of $h, g$ respectively, then $\mathbb{E}_{\vw} [\rho (h, g)]$ is continuous and strictly increasing with $\sigma$.
\end{proposition}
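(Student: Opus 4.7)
The plan is to exploit the closed-form identity $\rho(h_\vw, h_\vv) = |\phi|/\pi$ for the angle $\phi \in (-\pi, \pi]$ between $\vv$ and $\vw$, which the paper already establishes in this setting. By rotational invariance of both the uniform distribution on the unit disk and of $\mathcal{N}(\vv, \sigma^2\mathbf{I})$, I may assume $\vv = (r, 0)$ with $r = \|\vv\| > 0$. Because the angle of a nonzero vector is scale-invariant, setting $\vu := \vw/\sigma \sim \mathcal{N}((R,0), \mathbf{I})$ with $R := r/\sigma$ gives $\mathbb{E}_\vw[\rho(h, g)] = \pi^{-1} \mathbb{E}[|\phi(\vu)|] =: \pi^{-1} F(R)$, so the task reduces to proving that $F$ is strictly decreasing on $(0, \infty)$; composing with the strictly decreasing bijection $\sigma \mapsto R(\sigma) = r/\sigma$ then yields strict increase in $\sigma$.

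The heart of the argument is a cone-translation monotonicity. By the layer-cake identity $F(R) = \int_0^\pi \mathbb{P}(|\phi(\vu)| > \alpha)\, d\alpha$, it suffices to show that for each $\alpha \in (0, \pi)$, $R \mapsto \mathbb{P}(\vu \in C_\alpha)$ is strictly increasing, where $C_\alpha := \{(\rho\cos\theta, \rho\sin\theta) : \rho \geq 0,\, |\theta| \leq \alpha\}$ is the cone of half-angle $\alpha$ about the $+x$-axis. Writing $\vu = R\ve_1 + \vz$ with $\vz \sim \mathcal{N}(0, \mathbf{I})$, the probability becomes $\mathbb{P}(\vz \in C_\alpha - R\ve_1)$. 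I would then establish the geometric lemma $C_\alpha + t\ve_1 \subseteq C_\alpha$ for all $t \geq 0$, or equivalently, that for every $(x, y) \neq 0$ with angle $\theta$, the angle $\theta'$ of $(x+t, y)$ satisfies $|\theta'| \leq |\theta|$. This reduces to a short case analysis on the signs of $y$ and $x + t$. Iterating gives $C_\alpha - R_1\ve_1 \subseteq C_\alpha - R_2\ve_1$ whenever $R_1 < R_2$, i.e., weak monotonicity.

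Strict inclusion follows by exhibiting an open ball in the set difference. The midpoint $\vp_0 := (-(R_1+R_2)/2,\, 0)$ is such a witness: translated by $+R_2\ve_1$ it lands at angle $0$, strictly inside $C_\alpha$; translated by $+R_1\ve_1$ it lands at angle $\pi$, strictly outside $C_\alpha$ since $\alpha < \pi$. By continuity, both properties extend to a small neighborhood of $\vp_0$, and since $\mathcal{N}(0, \mathbf{I})$ assigns positive mass to every nonempty open set, $R \mapsto \mathbb{P}(\vz \in C_\alpha - R\ve_1)$ is strictly increasing, yielding strict monotonicity of $F$. For continuity of $\sigma \mapsto \mathbb{E}_\vw[\rho(h,g)]$, I would use the coupling $\vw = \vv + \sigma\vz$ with $\vz \sim \mathcal{N}(0, \mathbf{I})$ directly: for a.e. $\vz$ the line $\sigma \mapsto \vv + \sigma\vz$ avoids the origin (only $\vz$ anti-parallel to $\vv$ are problematic, a null set), so $\rho(h_{\vv + \sigma\vz}, h_\vv) = |\phi|/\pi$ is continuous in $\sigma$ pointwise almost surely, and the constant $1$ serves as an integrable dominating function for the dominated convergence theorem.

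The main obstacle I anticipate is verifying the cone-translation lemma in the regime $\alpha > \pi/2$, where $C_\alpha$ is nonconvex and a direct convexity argument fails. The subtle case is $x < 0$ with $x + t < 0$ (both the original point and its translate lying in the left half-plane): one must verify that the new angle $\pi - \arctan(|y|/|x+t|)$ is still at most the old angle $\pi - \arctan(|y|/|x|) = |\theta|$, which reduces to the elementary observation $|x+t| < |x|$ for $t \in (0, -x)$ together with monotonicity of $\arctan$. Once this inequality is pinned down, the remaining pieces assemble cleanly.
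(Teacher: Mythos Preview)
Your proposal is correct and takes a genuinely different route from the paper. The paper couples $\vw$ to the randomness via the Box--Muller transform, writes down the implicit relation $\|\vv\|\sin(\theta-\theta_\vv)=\sigma\sqrt{-2\log u_1}\sin(2\pi u_2-\theta)$, differentiates to obtain $d\theta/d\sigma$ in closed form, and then checks the sign to conclude that $\rho(h,g)=|\theta-\theta_\vv|/\pi$ is \emph{pathwise} strictly increasing in $\sigma$ (for almost every $(u_1,u_2)$); the expectation inherits the monotonicity by integrating. Your argument instead rescales to reduce the problem to a single parameter $R=\|\vv\|/\sigma$, expresses $F(R)=\mathbb{E}|\phi|$ via the layer-cake integral, and proves monotonicity of each tail probability through the set inclusion $C_\alpha + t\ve_1\subseteq C_\alpha$ together with an explicit witness for strictness. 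What the paper's calculus buys is a one-line derivative once the implicit relation is set up, at the cost of a slightly delicate sign analysis on $d\theta/d\sigma$; what your geometric approach buys is that strict monotonicity and continuity come almost for free from elementary set-theoretic and dominated-convergence arguments, with no differentiation at all. Your worry about the nonconvex regime $\alpha>\pi/2$ is unnecessary: for $y>0$ the relation $\cot\theta'=(x+t)/y>\cot\theta$ with $\cot$ strictly decreasing on $(0,\pi)$ handles all cases at once, so the case split you outline can be collapsed.
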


Figure~\ref{fig:rho-sigma} shows the relationship between $\bar{\rho}_{_S} (h, g)$ and $\log \sigma$ for MNIST, CIFAR10, SVNH, CIFAR100, Tiny ImageNet, and FOOD101 datasets where $h$ is sampled with $\bm{w} \sim \mathcal{N} (\bm{v}, \mathbf{I} \sigma^2)$ and $\bar{\rho}_{_S} (h, g)$ is the mean of $\rho_{_S} (h, g)$ for sampled $h$s. 
The $\bar{\rho}_{_S} (h, g)$ is monotonically increasing with $\log \sigma$ in all experimental settings for general deep learning architectures.

\begin{figure*}[!t]
    \centering
    \subfloat[]{\includegraphics[width=0.33\linewidth]{./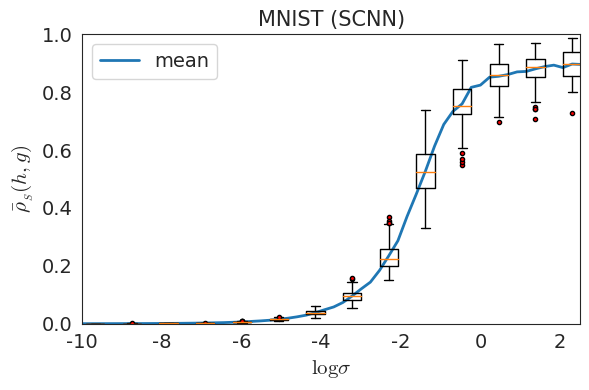}%
        \label{fig:rho-sigma_1}}
    \hfil
    \subfloat[]{\includegraphics[width=0.33\linewidth]{./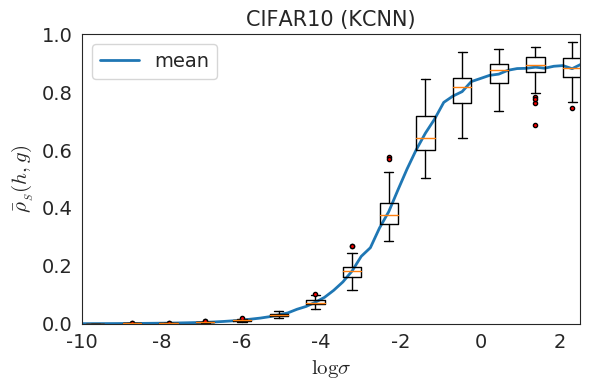}%
        \label{fig:rho-sigma_2}}
    \hfil
    \subfloat[]{\includegraphics[width=0.33\linewidth]{./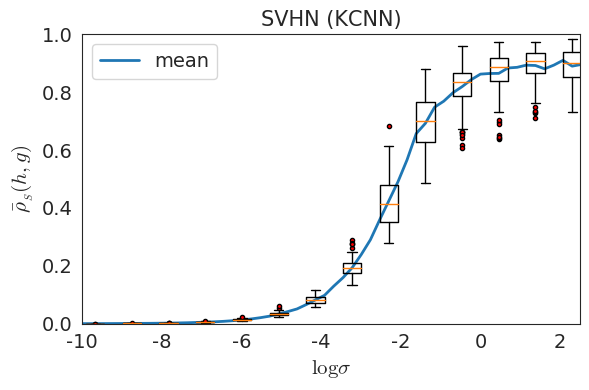}%
        \label{fig:rho-sigma_3}}
    \hfil
    \vspace{-5px}
    \subfloat[]{\includegraphics[width=0.33\linewidth]{./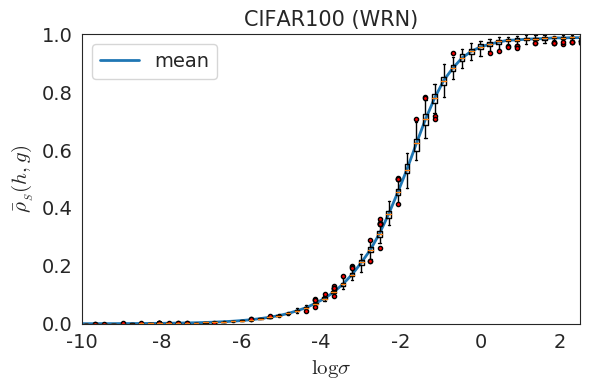}%
        \label{fig:rho-sigma_4}}
    \hfil
    \subfloat[]{\includegraphics[width=0.33\linewidth]{./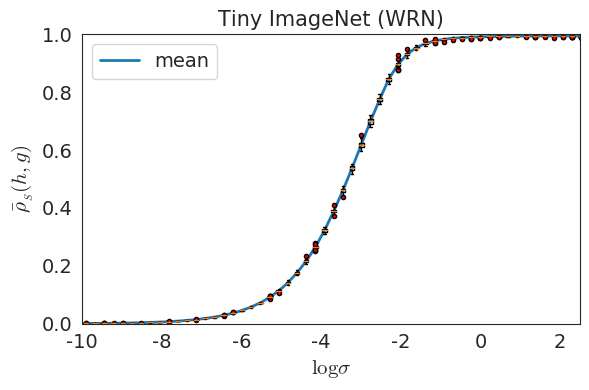}%
        \label{fig:rho-sigma_5}}
    \hfil
    \subfloat[]{\includegraphics[width=0.33\linewidth]{./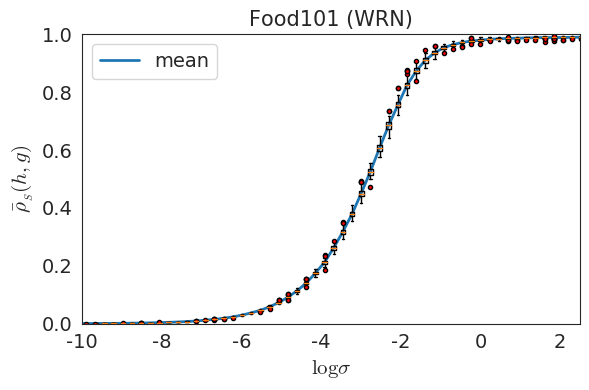}%
        \label{fig:rho-sigma_6}}
    \vspace{-5px}
    \caption{The relationship between the disagree metric and perturbation strength for MNIST (a), CIFAR10 (b), SVHN (c), CIFAR100 (d), Tiny ImageNet (e), and FOOD101 (f) datasets. $\bar{\rho}_{_S} (h, g)$ is monotonically increasing with the perturbation strength in all experimental settings.}
    \label{fig:rho-sigma}
    \vspace{-0px}
\end{figure*}

\begin{figure}[!t]
	\centering
	\subfloat[]{\includegraphics[width=0.49\linewidth]{./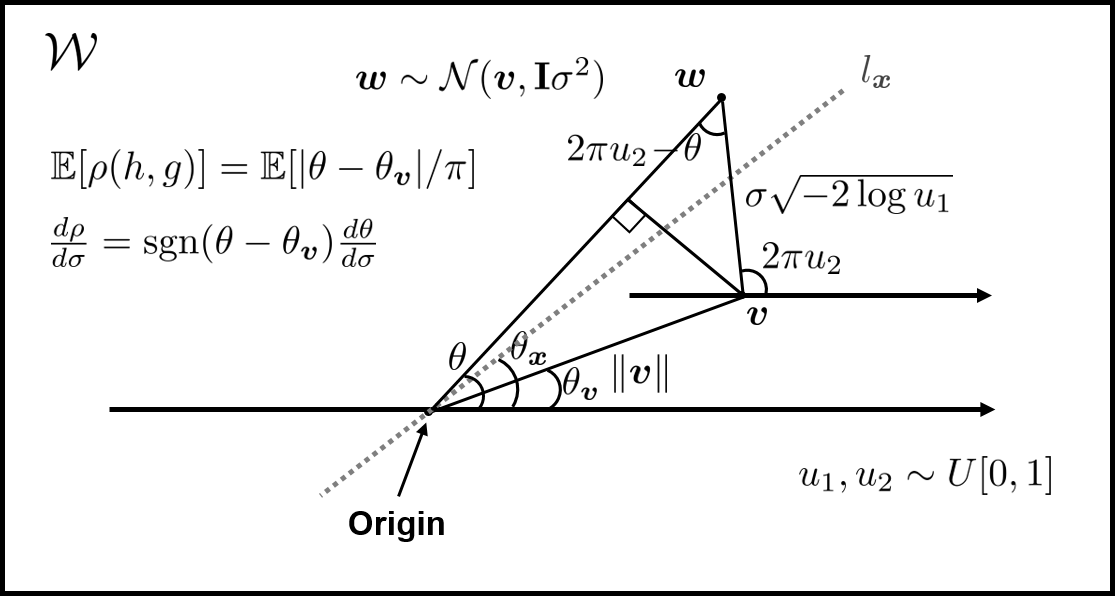}%
		\label{fig:conj_veri1_1}}
	\hfil
	\subfloat[]{\includegraphics[width=0.49\linewidth]{./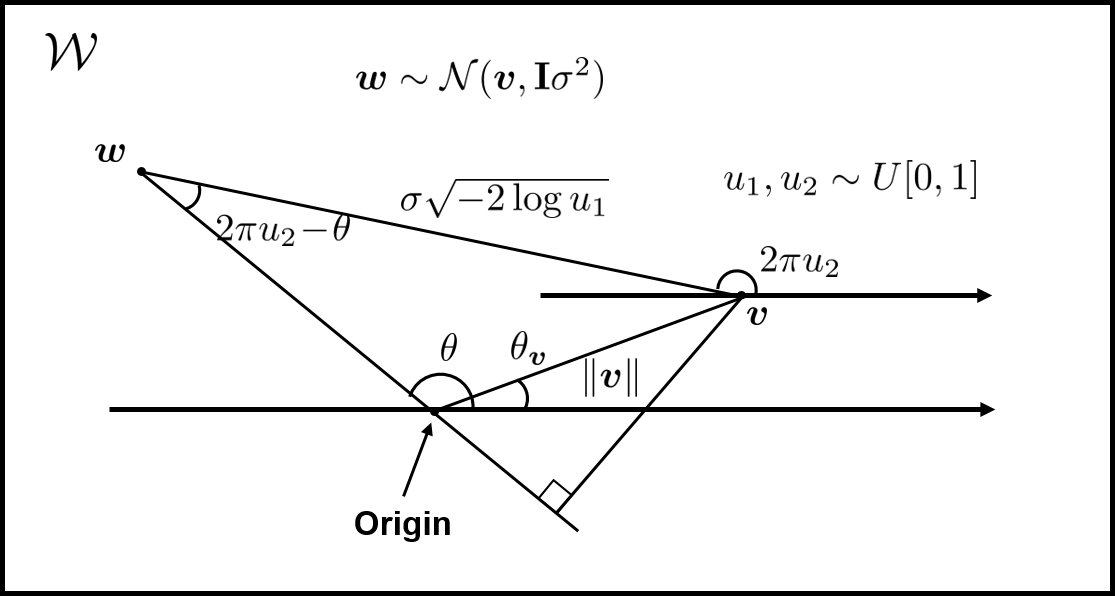}%
		\label{fig:conj_veri1_2}}
	\caption{Proof of Proposition~\ref{prop:sampling2}.}
	\label{fig:conj_veri1}
\end{figure}

\begin{proof}[Proof of Proposition \ref{prop:sampling2}]
By the duality between $\bm{w}$ and $\bm{x}$, in $\mathcal{W}$, $\bm{w}$ is a point and $\bm{x}$ is represented by the hyperplane, $l_{\bm{x}} = \{\bm{w} \in \mathcal{W}: \sgn(\bm{x}\tran \bm{w}) = 0\}$.
Let $h$ be a sampled hypothesis with $\bm{w} \sim \mathcal{N}(\bm{v}, \mathbf{I} \sigma^2)$, $\theta_\vv$ be the angle of $\bm{v} = (v_1, v_2)\tran$, i.e., $\tan \theta_\vv = v_2 / v_1$,  $\theta$ be the angle of $\bm{w} = (w_1, w_2)\tran$, i.e., $\tan \theta = w_2 / w_1$, and $\theta_{\bm{x}}$ be the angle between $l_{\bm{x}}$ and positive x-axis. 
Here, $\theta, \theta_{\bm{x}} \in [-\pi + \theta_\vv, \pi + \theta_\vv]$ in convenience.
When $\theta_{\bm{x}}$ or $\pi + \theta_{\bm{x}}$ is between $\theta$ and $\theta_\vv$, $h(\bm{x}) \ne g(\bm{x})$, otherwise $h(\bm{x}) = g(\bm{x})$.
Thus, $\rho(h, g) = |\theta - \theta_\vv| / \pi$.

Using Box-Muller transform~\citep{Box58}, $\bm{w}$ can be generated by 
\begin{equation*}
w_1 = v_1 + \sigma  \sqrt{ - 2 \log u_1 } \cos ( 2\pi u_2 ), \ w_2 = v_2 + \sigma  \sqrt{ - 2 \log u_1 } \sin ( 2 \pi u_2 )
\end{equation*}
where $u_1$ and $u_2$ are independent uniform random variables on $[0,1]$.
Then, $\Vert \bm{w} - \bm{v} \Vert = \sigma \sqrt{-2 \log u_1}$ and $(w_2 - v_2) / (w_1 - v_1) = \tan (2 \pi u_2)$, i.e., the angle of $\bm{w} - \bm{v}$ is $2 \pi u_2$.
Here, 
\begin{equation}
    \Vert \bm{v} \Vert \sin (\theta - \theta_\vv) = \sigma \sqrt{-2 \log u_1} \sin (2 \pi u_2 - \theta)
    \label{eqn:theta}
\end{equation}
by using the perpendicular line from $\bm{v}$ to the line passing through the origin and $\bm{w}$ (see the Figure~\ref{fig:conj_veri1} for its geometry), and Eq.~\ref{eqn:theta} is satisfied for all $\theta$. 
For given $u_1$ and $u_2$, $\theta$ is continuous and the derivative of $\theta$ with respect to $\sigma$ is
\begin{equation*}
    \frac{d \theta}{d \sigma} = \frac{\sqrt{-2 \log u_1} \sin^2 (2 \pi u_2 - \theta)}{\Vert \bm{v} \Vert \sin (2 \pi u_2 - \theta_\vv)},
\quad \text{thus} \quad
    \left\{ \begin{array}{cc} \frac{d \theta}{d \sigma}> 0, & u_2 \in (\frac{\theta_\vv}{2 \pi}, \frac{\pi + \theta_\vv}{2 \pi}) \\ \frac{d \theta}{d \sigma}< 0, & u_2 \in [0, 1] \setminus [\frac{\theta_\vv}{2 \pi}, \frac{\pi + \theta_\vv}{2 \pi}] \end{array} \right..
\end{equation*}
Then,
\begin{equation*}
    \frac{d \rho(h, g)}{d \sigma} = \sgn (\theta - \theta_\vv) \frac{d \theta}{d \sigma} > 0 \quad \text{where} \quad u_2 \notin \Big\{\frac{\theta_\vv}{2\pi}, \frac{\pi + \theta_\vv} {2\pi}\Big\}.
\end{equation*}
Thus, $\rho(h, g)$ is continuous and strictly increasing with $\sigma$ when $u_2 \ne \frac{\theta_\vv}{2\pi}$ and $u_2 \ne \frac{\pi + \theta_\vv}{2\pi}$.  
Let $\rho(h, g) = F(\sigma, u_1, u_2)$, then $\mathbb{E}_{\vw} [\rho(h, g)] =  \int F(\sigma, u_1, u_2) f(u_1) f(u_2) du_1 du_2$ where $f(u_i)  =  \mathbb{I}[0  <  u_i  <  1]$. 
For $0  <  \sigma_1  <  \sigma_2$, 
\begin{align*}
    &\mathbb{E}_{\vw \sim \gN (\vv, \mathbf{I} \sigma_2^2)} [ \rho (h, g)] -  \mathbb{E}_{\vw \sim \gN (\vv, \mathbf{I} \sigma_1^2)} [\rho (h, g)] \\
    &= \int \left( F(\sigma_2, u_1, u_2) - F(\sigma_1, u_1, u_2) \right) f(u_1) f(u_2) du_1 du_2
    > 0.
\end{align*}
\end{proof}

\subsection{Assumption~\ref{assumption:coverage} Holds with Gaussian Sampling}
\label{app:assumption}

\begin{figure}[!t]
	\centering
	\subfloat[]{\includegraphics[width=0.49\linewidth]{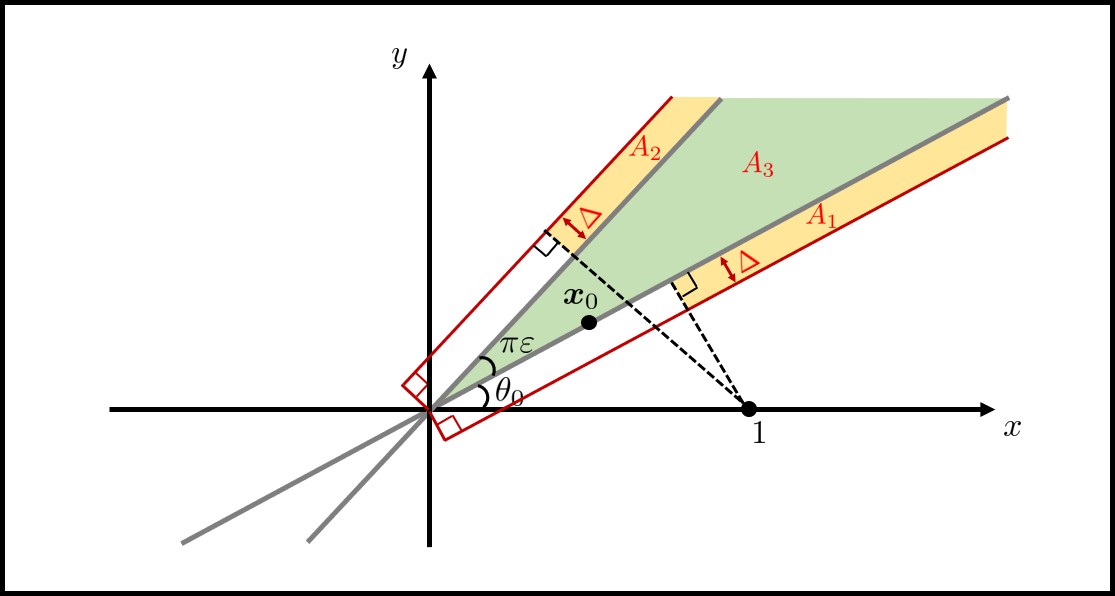}%
		\label{fig:p}}
	\hfil
	\subfloat[]{\includegraphics[width=0.49\linewidth]{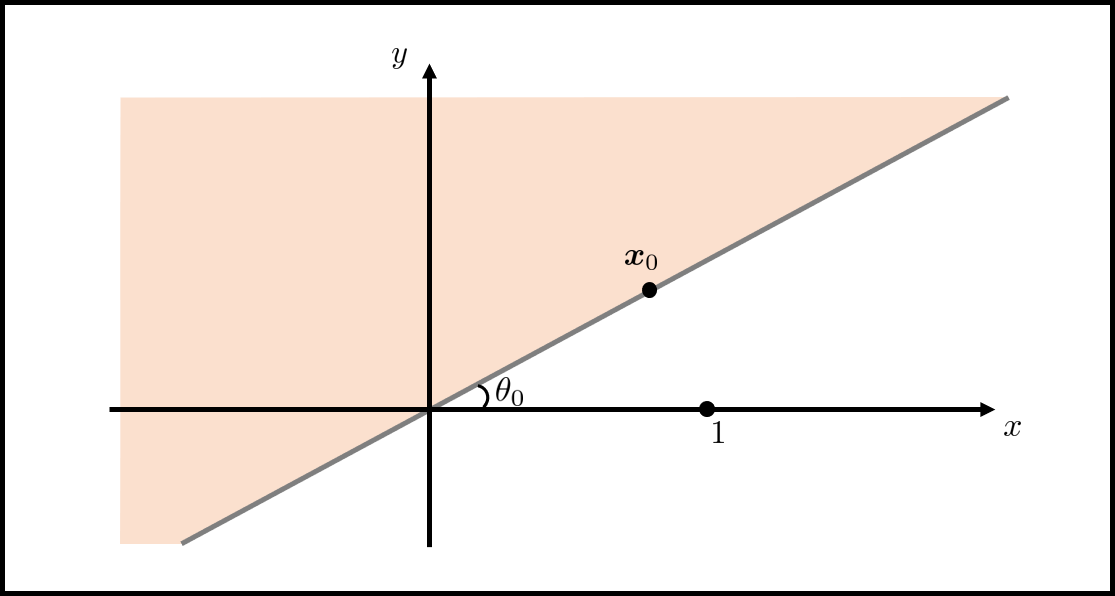}%
		\label{fig:q}}
	\caption{Proof of Proposition~\ref{prop:coverage}.}
	\label{fig:coverage}
\end{figure}

\begin{proposition}
\label{prop:coverage}
    Assume that the given sample $\vx_0$ forms an angle $\theta_0 \in (0, \pi/2)$ w.r.t. the x-axis, and consider $\varepsilon \in (0, 1)$ such that $\theta_0 + \pi\varepsilon < \frac{\pi}{2}$.
    Then, for binary classification with linear hypotheses and $\gH_N$ comprising of $N$ i.i.d. random samples from $\gN((1, 0), \sigma^2 \mI_2)$ for some fixed $\sigma^2 > 0$, Assumption~\ref{assumption:coverage} holds with $\alpha(N, \varepsilon) = \gO\left( \frac{1}{\sqrt{N}} \right)$ and $\beta(N) = \gO\left( e^{-\sqrt{N}} \right)$.
\end{proposition}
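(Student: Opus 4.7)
The plan is to reduce everything to a one-dimensional angular geometry. Parametrize each nonzero $\vw \in \sR^2$ in polar form $(r, \phi)$, so that $h_\vw$ depends only on $\phi$ and, as in Appendix~\ref{app:uncertainty}, $\rho(h_\vw, g) = |\phi|/\pi$ for $\phi \in (-\pi, \pi]$ when $g$ has parameter $\vv = (1,0)$. A short calculation using $g(\vx_0) = +1$ and $h_\vw(\vx_0) = \sgn(\vw^\top \vx_0)$ shows that $\gH^{g,\vx_0}$ corresponds to $\phi \in (\theta_0 + \pi/2, \pi] \cup [-\pi, \theta_0 - \pi/2)$, so that $L(g, \vx_0) = (\pi/2 - \theta_0)/\pi$. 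Under the hypothesis $\theta_0 + \pi\varepsilon < \pi/2$, the set of $\varepsilon$-optimal hypotheses $R_\varepsilon := \{h^* \in \gH^{g,\vx_0} : \rho(h^*, g) - L(g, \vx_0) \le \varepsilon\}$ reduces to the single angular sector with $\phi \in [\theta_0 - \pi/2 - \pi\varepsilon, \theta_0 - \pi/2)$, of angular width exactly $\pi\varepsilon$.

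Next, set $\alpha(N, \varepsilon) := C/\sqrt{N}$ for a constant $C > 0$ to be chosen, let $\Delta := \alpha(N, \varepsilon)/B$, and consider the ``tube'' $T := (R_\varepsilon + B_\Delta) \cap \gH^{g, \vx_0}$, where $B_\Delta$ is the closed Euclidean $\Delta$-ball. The key geometric observation is that at each fixed radius $r > 0$, enlarging the sector $R_\varepsilon$ by Euclidean distance $\Delta$ widens its angular extent by $\arcsin(\Delta/r)$ on each side, but intersecting with $\gH^{g, \vx_0}$ removes the extension past the ``near'' boundary $\phi = \theta_0 - \pi/2$, so only the ``far'' side contributes. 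Restricted to the annulus $r \in [1/2, 2]$, each radial slice of $T$ therefore has angular length at least $\pi\varepsilon + c\Delta$ for a fixed $c > 0$. Since the density of $\gN((1,0), \sigma^2 I_2)$ is continuous and strictly positive, it is bounded below by a constant $\kappa > 0$ (depending only on $\sigma$ and $\theta_0$) on the compact region $\{(r, \phi) : r \in [1/2, 2], \ \phi \in [\theta_0 - \pi, \theta_0 - \pi/2]\}$. Integrating gives
$$p := \sP[\vw \in T] \ge \kappa \cdot (\pi\varepsilon + c\Delta) \ge c'/\sqrt{N}$$
for positive constants $c, c'$ that depend on $\sigma, \theta_0, B, C$ but not on $N$ or $\varepsilon$.

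Finally, since the $N$ samples are i.i.d., the probability that none lies in $T$ is at most $(1-p)^N \leq e^{-pN} \leq \exp(-c'\sqrt{N})$, and by choosing $C$ appropriately we obtain $\beta(N) = O(e^{-\sqrt{N}})$. Any sample $\vw \in T$ automatically lies in $\gH_N^{g, \vx_0}$ and, by construction of $T$, is within Euclidean distance $\Delta = \alpha(N, \varepsilon)/B$ of some $h^* \in R_\varepsilon$, which verifies the inequality in Assumption~\ref{assumption:coverage}.

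The main obstacle I expect is the second step: establishing the lower bound $p \geq c'/\sqrt{N}$ uniformly in $\varepsilon \in (0, 1)$, which requires a careful geometric argument for how the Euclidean $\Delta$-enlargement interacts with the boundary $\phi = \theta_0 - \pi/2$ of $\gH^{g, \vx_0}$, together with explicit control over the dependence of the constants on $\sigma$ and $\theta_0$. A secondary subtlety is that the bound must survive the worst case $\varepsilon = 0$, where the $\pi\varepsilon$ term vanishes and the entire $1/\sqrt{N}$ lower bound has to come from the $\Delta$-term alone; this is what forces $\alpha$ to scale as $1/\sqrt{N}$ rather than decay faster.
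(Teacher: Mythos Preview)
Your proposal is correct and follows the same overall strategy as the paper: reduce to the angular picture, lower-bound the Gaussian mass of the set of ``good'' samples (those in $\gH^{g,\vx_0}$ and within Euclidean distance $\Delta$ of an $\varepsilon$-optimal $h^*$) by a quantity of order $c_1\Delta + c_2\varepsilon$, then set $\Delta = O(1/\sqrt{N})$ and apply $(1-p)^N \le e^{-pN}$.

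There are two technical differences worth noting. First, the paper inserts a conditioning (``peeling'') step on $S = |\gH_N^{g,\vx_0}|$, which yields the bound $(1 - p(\theta_0,\Delta,\varepsilon)\,q(\theta_0))^N$ and forces separate lower bounds on $p$ and on $q$ (the Gaussian mass of the disagreement half-plane); your direct argument skips this and gives the cleaner $(1-p)^N$ with $p=\sP[\vw\in T]$, which is equivalent but avoids the extra bookkeeping. Second, to lower-bound $p$ the paper decomposes the good region into two $\Delta$-strips and the $\varepsilon$-sector and evaluates each Gaussian integral explicitly (giving fully explicit constants in $\sigma,\theta_0$), whereas you restrict to an annulus and use a compactness lower bound on the density. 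Your route is shorter and more robust; the paper's route is more concrete about the constants. One small imprecision in your write-up: $R_\varepsilon$ need not be a \emph{single} sector when $\varepsilon > 2\theta_0/\pi$ (the ``positive-angle'' branch can also contribute), but since you only need a lower bound on $\sP[\vw\in T]$, working with the one sector you identify is sufficient.
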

\begin{proof}
    We want to show that there exists $\alpha(N, \cdot)$ and $\beta(N)$ with $\lim_{N \rightarrow \infty} \alpha(N, \cdot) = \lim_{N \rightarrow \infty} \beta(N) = 0$ such that
    \begin{equation*}
        \sP\left[ \inf_{\vw^* : \theta(\vw^*) \in (\theta_0, \theta_0 + \pi \varepsilon)} \min_{\vw \in \gH_N^{g,\vx_0}} \lVert \vw^* - \vw \rVert_2 \geq \frac{1}{B} \alpha(N, \varepsilon) \right] \leq \beta(N),
    \end{equation*}
    where $\theta(\vw^*)$ is the angle made between $\vw^*$ and the x-axis.
    Note that $\gH_N^{g, \vx_0}$ is random as well.
    Thus, we make use of a peeling argument as follows:
    conditioned on the event that $|\gH_N^{g, \vx_0}| = S$ for $S \in [N]$, we have that for any $\Delta \in (0, \sin\theta_0)$,
    \begin{equation*}
        \sP\left[ \inf_{\vw^* : \theta(h_{\vw^*}) \in (\theta_0, \theta_0 + \pi \varepsilon)} \min_{\vw \in \gH_N^{g,\vx_0}} \lVert \vw^* - \vw \rVert_2 \geq \Delta \ \Bigg| \ |\gH_N^{g, \vx_0}| = S \right] = \left( 1 - p(\theta_0, \Delta, \varepsilon) \right)^S,
    \end{equation*}
    where $p(\theta_0, \Delta, \varepsilon)$ is the measure of the region enclosed by the red boundary in Figure~\ref{fig:p} with respect to $\gN((1, 0), \sigma^2 \mI_2)$.
    Also, we have that for $S \in [N]$
    \begin{equation*}
        \sP\left[ |\gH_N^{g, \vx_0}| = S \right] = \binom{N}{S} q(\theta_0)^S (1 - q(\theta_0))^{N - S},
    \end{equation*}
    where $q(\theta_0)$ is the measure of the (light) red region in Figure~\ref{fig:q} with respect to $\gN((1, 0), \sigma^2 \mI_2)$.

    Thus, we have that
    \begin{align*}
        &\sP\left[ \inf_{\vw^* : \theta(\vw^*) \in (\theta_0, \theta_0 + \pi \varepsilon)} \min_{\vw \in \gH_N^{g,\vx_0}} \lVert \vw^* - \vw \rVert_2 \geq \Delta \right] \\
        &= \sum_{S=0}^N \sP\left[ \inf_{\vw^* : \theta(\vw^*) \in (\theta_0, \theta_0 + \pi \varepsilon)} \min_{\vw \in \gH_N^{g,\vx_0}} \lVert \vw^* - \vw \rVert_2 \geq \Delta \ \Bigg| \ |\gH_N^{g, \vx_0}| = S \right] \sP\left[ |\gH_N^{g, \vx_0}| = S \right] \\
        &= 
        \sum_{S=0}^N \binom{N}{S} q(\theta_0)^S (1 - q(\theta_0))^{N - S} \left( 1 - p(\theta_0, \Delta, \varepsilon) \right)^S \\
        &= \left( (1 - q(\theta_0)) + q(\theta_0)(1 - p(\theta_0, \Delta, \varepsilon)) \right)^N \\
        &= \left( 1 - p(\theta_0, \Delta, \varepsilon) q(\theta_0) \right)^N \\
        &\leq e^{ - N p(\theta_0, \Delta, \varepsilon) q(\theta_0)},
    \end{align*}
    where the last inequality follows from the simple fact that $1 + x \leq e^x$ for any $x \in \sR$.

    Now it suffices to obtain non-vacuous lower bounds of $p(\theta_0, \Delta, \varepsilon)$ and $q(\theta_0)$.

    \paragraph{Lower bounding $q(\theta_0)$.}
    By rotational symmetry of $\gN((1, 0), \sigma^2 \mI_2)$ and the fact that rotation preserves Euclidean geometry, this is equivalent to finding the probability measure of the lower half-plane under the Gaussian distribution $\gN((0, \sin \theta_0), \sigma^2 \mI_2)$, which is as follows:
    \begin{align*}
        q(\theta_0) &= \frac{1}{2\pi \sigma^2} \int_{-\infty}^0 \int_{-\infty}^\infty \exp\left( -\frac{x^2 + (y - \sin\theta_0)^2}{2\sigma^2} \right) dx dy \\
        &= \frac{1}{\sqrt{2\pi} \sigma} \int_{-\infty}^{-\sin\theta_0} \exp\left( -\frac{y^2}{2\sigma^2} \right) dy \\
        &= \frac{1}{2} - \frac{1}{\sqrt{2\pi} \sigma} \int_0^{\sin\theta_0} \exp\left( -\frac{y^2}{2\sigma^2} \right) dy \\
        &\overset{(*)}{\geq} \frac{1}{2} - \frac{1}{\sqrt{2\pi} \sigma} \int_0^{\sin\theta_0} \left( \frac{2\sigma^2}{(\sin\theta_0)^2} \left( 1 - \exp\left( -\frac{(\sin\theta_0)^2}{2\sigma^2}  \right) \right)y + 1 \right) dy \\
        &= \frac{1}{2} - \frac{1}{\sqrt{2\pi} \sigma} \left( \sigma^2 \left( 1 - \exp\left( -\frac{(\sin\theta_0)^2}{2\sigma^2}  \right) \right) + \sin\theta_0 \right) \\
        &= \frac{1}{2} - \frac{1}{\sqrt{2\pi}} \left( \sigma \left( 1 - \exp\left( -\frac{(\sin\theta_0)^2}{2\sigma^2}  \right) \right) + \frac{\sin\theta_0}{\sigma} \right),
    \end{align*}
    where $(*)$ follows from the simple observation that $e^x \leq \frac{1 - e^{-a}}{a} x + 1$ for $x \in [-a, 0]$ and $a > 0$.

    \paragraph{Lower bounding $p(\theta_0, \Delta, \varepsilon)$.}
    Via similar rotational symmetry arguments and geometric decomposition of the region enclosed by the red boundary (see Figure~\ref{fig:p}), we have that
    \begin{align*}
        &p(\theta_0, \Delta, \varepsilon) \\
        &\geq A_1 + A_2 + A_3 \tag{see Figure~\ref{fig:p}} \\
        &\geq \frac{1}{2\pi \sigma^2} \int_0^\Delta \int_0^\infty \exp\left( -\frac{x^2 + (y - \sin\theta_0)^2}{2\sigma^2} \right) dx dy \\
        &\quad + \frac{1}{2\pi \sigma^2} \int_{-\Delta}^0 \int_0^\infty \exp\left( -\frac{x^2 + (y - \sin(\theta_0 + \pi\varepsilon))^2}{2\sigma^2} \right) dx dy \\
        &\quad + \frac{1}{2\pi \sigma^2} \int_{\theta_0}^{\theta_0 + \pi\varepsilon} \int_0^\infty \exp\left( -\frac{(r\cos\theta - 1)^2 + (r \sin\theta)^2}{2\sigma^2} \right) r dr d\theta \\
        &= \frac{1}{2\sqrt{2\pi} \sigma} \int_{ - \sin\theta_0}^{\Delta - \sin\theta_0} \exp\left( -\frac{y^2}{2\sigma^2} \right) dy + \frac{1}{2\sqrt{2\pi} \sigma} \int_{\sin(\theta_0 + \pi\varepsilon)}^{\Delta + \sin(\theta_0 + \pi\varepsilon)} \exp\left( -\frac{y^2}{2\sigma^2} \right) dy \\
        &\quad + \frac{1}{2\pi \sigma^2} \int_{\theta_0}^{\theta_0 + \pi\varepsilon} \int_0^\infty \exp\left( -\frac{r^2 - 2r\cos\theta + 1}{2\sigma^2} \right) r dr d\theta \\
        &\geq \frac{\Delta}{2\sqrt{2\pi} \sigma} \exp\left( -\frac{(\sin\theta_0)^2}{2\sigma^2} \right) + \frac{\Delta}{2\sqrt{2\pi} \sigma} \exp\left( -\frac{(\Delta + \sin(\theta_0 + \pi\varepsilon))^2}{2\sigma^2} \right) \\
        &\quad + \frac{\varepsilon}{2\sigma^2} \int_0^\infty r \exp\left( -\frac{r^2 + 1}{2\sigma^2} \right) dr \\
        &\geq \frac{\Delta}{\sqrt{2\pi} \sigma} \exp\left( -\frac{(\sin\theta_0 + 1)^2}{2\sigma^2} \right) + \frac{\varepsilon}{2} \exp\left( -\frac{1}{2\sigma^2} \right).
    \end{align*}

    By choosing $\Delta = \gO\left( \frac{1}{\sqrt{N}} \right)$, the proposition holds.
\end{proof}

\begin{figure}[!t]
    \centering
    \includegraphics[width=0.4\linewidth]{./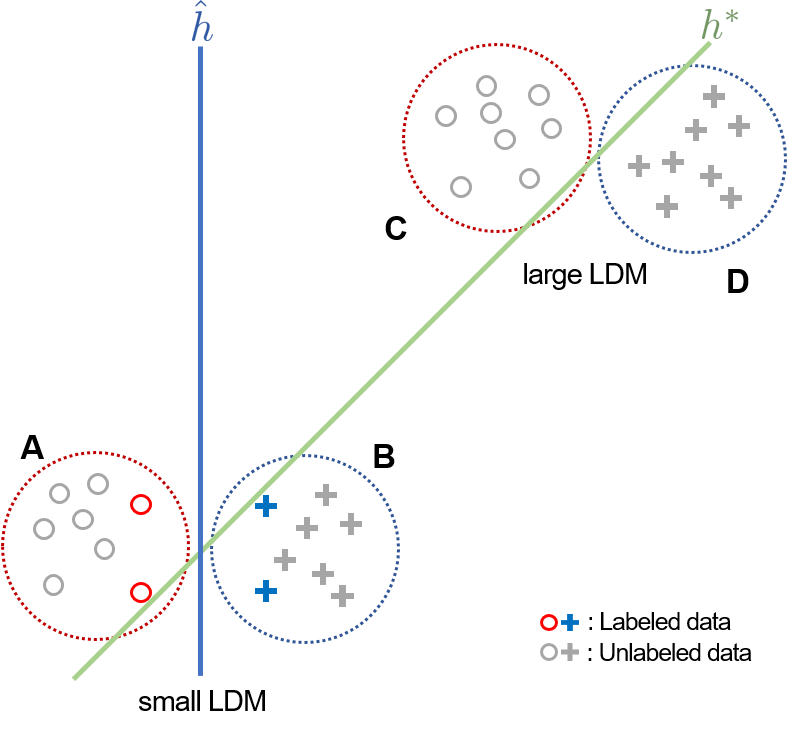}
    \caption{The effect of LDM-based seeding. Considering only LDM, samples are selected only in A or B, but samples in C or D are also selected if the seeding method is applied.}
    \vspace{-15px}
    \label{fig:seeding_need}
\end{figure}

\subsection{Effect of Diversity in LDM-S}
\label{app:need_diversity}

Here, we provide more intuition on why samples selected in the order of the smallest empirical LDM may not be the best strategy and, thus, why we need to pursue diversity via seeding, which was verified in realistic scenarios in Section~\ref{subsec:need_diversity}.
Again, let us consider a realizable binary classification with a set of linear classifiers, i.e., there exists $h^*$ whose test error is zero (See Figure~\ref{fig:seeding_need}).
Let the red circles and blue crosses be the labeled samples and $g$ be the given hypotheses learned by the labeled samples.
In this case, the LDMs of samples in groups A or B are small, and those in groups C or D are large.
Thus, if we do not impose diversity and choose only samples with the smallest LDM, the algorithm will always choose samples in groups A or B.
However, the samples in groups C or D are more helpful to us in this case, i.e., they provide more information. 
Therefore, pursuing diversity is necessary to provide the chance to query samples in C and D.

\section{Datasets, Networks and Experimental Settings}
\label{app:settings}

\subsection{Datasets}
\label{appsub:datasets}
\noindent
{\bfseries OpenML\#6}~\citep{frey1991letter} is a letter image recognition dataset which has $20,000$ samples in $26$ classes. 
Each sample has $16$ numeric features. 
In experiments, it is split into two parts: $16,000$ samples for training and $4,000$ samples for test.

\noindent
{\bfseries OpenML\#156}~\citep{vanschoren2014openml} is a synthesized dataset for random RBF which has $100,000$ samples in $5$ classes.
Each sample has $10$ numeric features.
In experiments, a subset is split into two parts: $40,000$ samples for training and $10,000$ samples for test.

\noindent
{\bfseries OpenML\#44135}~\citep{fanty1990spoken} is a isolated letter speech recognition dataset which has $7,797$ samples in $26$ classes.
Each sample has $614$ numeric features.
In experiments, it is split into two parts: $6,237$ samples for training and $1,560$ samples for test.
 
\noindent
{\bfseries MNIST}~\citep{lecun1998mnist} is a handwritten digit dataset which has $60,000$ training samples and $10,000$ test samples in $10$ classes. Each sample is a black and white image and $28 \times 28$ in size.

\noindent
{\bfseries CIFAR10} and {\bfseries CIFAR100}~\citep{krizhevsky2014cifar} are tiny image datasets which has $50,000$ training samples and $10,000$ test samples in $10$ and $100$ classes respectively. Each sample is a color image and $32 \times 32$ in size.

\noindent
{\bfseries SVHN}~\citep{netzer2019street} is a real-world digit dataset which has $73,257$ training samples and $26,032$ test samples in $10$ classes. Each sample is a color image and $32 \times 32$ in size.

\noindent
{\bfseries Tiny ImageNet}~\citep{le15tinyimage} is a subset of the ILSVRC~\citep{ILSVRC15} dataset which has $100,000$ samples in $200$ classes. Each sample is a color image and $64 \times 64$ in size. In experiments, Tiny ImageNet is split into two parts: $90,000$ samples for training and $10,000$ samples for test.

\noindent
{\bfseries FOOD101}~\citep{bossard14} is a fine-grained food image dataset which has $75,750$ training samples and $25,250$ test samples in $101$ classes. Each sample is a color image resized to $75 \times 75$.

\noindent
{\bfseries ImageNet}~\citep{ILSVRC15} is an image dataset organized according to the WordNet hierarchy, which has $1,281,167$ training samples and $50,000$ validation samples (we use the validation samples as test samples) in $1,000$ classes.

\begin{table*}[!t]
	\renewcommand{\tabcolsep}{0.7mm}
    \fontsize{8.2pt}{11pt}
	\selectfont
	\caption{Settings for data and acquisition size. Acquisition size denotes the number of initial labeled samples + query size for each step (the size of pool data) $\rightarrow$ the number of final labeled samples.}
	\centering
	\begin{tabular}{c|c|c|c|rrr}
		\hline
		Dataset       & Model &  \begin{tabular}[c]{@{}c@{}} \# of parameters\\ sampled / total \end{tabular}  &  \begin{tabular}[c]{@{}c@{}}Data size\\ train / validation / test\end{tabular} & \multicolumn{3}{c}{Acquisition size} \\
		\hline
        OpenML\#6     & MLP   &  3.4K/22.0K  &  16,000 / - / 4,000     &  200      &  +200 (2K)    & $\rightarrow$  4,000   \\
        OpenML\#156   & MLP   &  0.6K/18.6K  &  40,000 / - / 10,000    &  100      &  +100 (2K)    & $\rightarrow$  2,000   \\
        OpenML\#44135   & MLP   &  3.4K/98.5K  &  6,237 / - / 1,560      &  100      &  +100 (2K)    & $\rightarrow$  2,000   \\
        MNIST         & S-CNN  & 1.3K/1.2M  & 55,000 / 5,000 / 10,000  & 20       & +20 (2,000)       & $\rightarrow$ 1,020   \\
		CIFAR10       & K-CNN  & 5.1K/2.2M  & 45,000 / 5,000 / 10,000  & 200      & +400 (4,000)      & $\rightarrow$ 9,800   \\
		SVHN          & K-CNN  & 5.1K/2.2M  & 68,257 / 5,000 / 26,032  & 200      & +400 (4,000)      & $\rightarrow$ 9,800  \\
		CIFAR100      & WRN-16-8  & 51.3K/11.0M  & 45,000 / 5,000 / 10,000  & 5,000    & +2,000 (10,000)   & $\rightarrow$ 25,000 \\
		Tiny ImageNet & WRN-16-8  & 409.8K/11.4M  & 90,000 / 10,000 / 10,000 & 10,000   & +5,000 (20,000)  & $\rightarrow$ 50,000 \\
		FOOD101      & WRN-16-8  & 206.9K/11.2M  & 60,600  / 15,150 / 25,250 & 6,000    & +3,000 (15,000)  & $\rightarrow$ 30,000 \\
        ImageNet    & ResNet-18 &  513K/11.7M & 1,153,047 / 128,120 / 50,000 & 128,120 & +64,060 (256,240) & $\rightarrow$ 384,360 \\
		\hline
	\end{tabular}
	\label{tbl:settings}
	\vskip -0.1in
\end{table*}

\begin{table}[!t]
    \renewcommand{\tabcolsep}{1.5mm}
    \footnotesize
    \selectfont
    \caption{Settings for training.}
    \centering
    \begin{tabular}{c|c|c|c|c|c|c}
        \hline
        Dataset       & Model	& Epochs	& \begin{tabular}[c]{@{}c@{}}Batch \\ size\end{tabular} & Optimizer & Learning Rate & \begin{tabular}[c]{@{}c@{}}Learning Rate Schedule \\ $\times$decay [epoch schedule]\end{tabular}  \\
        \hline
        OpenML\#6     & MLP     & 100       & 64    & Adam      & 0.001     &  -      \\
        OpenML\#156     & MLP     & 100       & 64    & Adam      & 0.001     &  -      \\
        OpenML\#44135     & MLP     & 100       & 64    & Adam      & 0.001     &  -      \\
        MNIST         & S-CNN	& 50	    & 32	& Adam      & 0.001     &  -      \\
        CIFAR10       & K-CNN   & 150		& 64	& RMSProp      & 0.0001    &  -      \\
        SVHN          & K-CNN   & 150		& 64	& RMSProp      & 0.0001    &  -      \\
        CIFAR100      & WRN-16-8 & 100		& 128 	& Nesterov  & 0.05      & $\times$0.2 [60, 80]    \\
        Tiny ImageNet & WRN-16-8 & 200		& 128	& Nesterov  & 0.1       & $\times$0.2 [60, 120, 160]    \\
        FOOD101       & WRN-16-8 & 200		& 128	& Nesterov  & 0.1       & $\times$0.2 [60, 120, 160]  	\\
        ImageNet & ResNet-18 & 100 & 128 & Nesterov & 0.001 & $\times$0.2 [60, 80] \\
        \hline
    \end{tabular}
    \label{tbl:training}
\end{table}

\subsection{Deep Networks}
\label{appsub:networks}
\noindent
{\bfseries MLP} consists of [128 dense - dropout ($0.3$) - 128 dense - dropout ($0.3$) - \# class dense - softmax] layers, and it is used for OpenML datasets.

\noindent
{\bfseries S-CNN}~\citep{chollet2015keras} consists of [3$\times$3$\times$32 conv $-$ 3$\times$3$\times$ 64 conv $-$ 2$\times$2 maxpool $-$ dropout ($0.25$) $-$ $128$ dense $-$ dropout ($0.5$) $-$ \# class dense $-$ softmax] layers, and it is used for MNIST.

\noindent
{\bfseries K-CNN}~\citep{chollet2015keras} consists of [two 3$\times$3$\times$32 conv $-$ 2$\times$2 maxpool - dropout ($0.25$) $-$ two 3$\times$3$\times$64 conv $-$ 2$ \times$2 maxpool - dropout ($0.25$) $-$ $512$ dense $-$ dropout ($0.5$) $-$ \# class dense - softmax] layers, and it is used for CIFAR10 and SVHN.

\noindent
{\bfseries WRN-16-8}~\citep{zagoruyko2016wide} is a wide residual network that has 16 convolutional layers and a widening factor 8, and it is used for CIFAR100, Tiny ImageNet, and FOOD101.

\noindent
{\bfseries ResNet-18}~\citep{ResNet} is a residual network that is a 72-layer architecture with 18 deep layers, and it is used for ImageNet.

\subsection{Experimental Settings}
\label{appsub:settings}
The experimental settings for active learning regarding dataset, architecture, number of parameters, data size, and acquisition size are summarized in Table~\ref{tbl:settings}.
Training settings regarding a number of epochs, batch size, optimizer, learning rate, and learning rate schedule are summarized in Table~\ref{tbl:training}.
The model parameters are initialized with He normal initialization~\citep{he2015delving} for all experimental settings.  
For all experiments, the initial labeled samples for each repetition are randomly sampled according to the distribution of the training set.

\section{Performance Profile and Penalty Matrix}
\label{app:across}
\subsection{Performance Profile}
\label{appsub:dolan}
The performance profile, known as the Dolan-Mor\'{e} plot, has been widely considered in benchmarking active learning \citep{tsymbalov2018active,tsymbalov2019active}, optimization profiles \citep{dolan2002benchmarking}, and even general deep learning tasks \citep{burnaev2015a,burnaev2015b}.
To introduce the Dolan-Mor\'{e} plot, let $\mathrm{acc}_A^{D, r, t}$ be the test accuracy of algorithm $A$ at step $t \in [T_{D}]$, for dataset $D$ and repetition $r \in [R]$, and $\Delta_A^{D, r, t} = \max_{A'} (\mathrm{acc}_{A'}^{D, r, t}) - \mathrm{acc}_A^{D, r, t}$.
Here, $T_D$ is the number of steps for dataset $D$, and $R$ is the total number of repetitions.
Then, we define the performance profile as
\begin{equation*}
	R_A (\delta) := \frac{1}{n_{_D}} \sum_{D} \left[ \frac{\sum_{r,t} \mathbb{I} ( \Delta_A^{D, r, t} \le \delta\ ) }{ R T_D} \right],
\end{equation*}
where $n_{_D}$ is the number of datasets.
Intuitively, $R_A(\delta)$ is the fraction of cases where the performance gap between algorithm $A$ and the best competitor is less than $\delta$.
Specifically, when $\delta = 0$, $R_A(0)$ is the fraction of cases on which algorithm $A$ performs the best.

\subsection{Penalty Matrix}
\label{appsub:penalty}
The penalty matrix $P = (P_{ij})$ is evaluated as done in \cite{ash2020Deep}:
For each dataset, step, and each pair of algorithms ($A_i$, $A_j$), we have 5 test accuracies $\{\mathrm{acc}_{i}^{r}\}_{r=1}^5$ and $\{\mathrm{acc}_{j}^{r}\}_{r=1}^5$ respectively.
We compute the $t$-score as $t = \sqrt{5} \bar{\mu} / \bar{\sigma}$, where $\bar{\mu} = \frac{1}{5} \sum_{r=1}^5 (\mathrm{acc}_{i}^r - \mathrm{acc}_{j}^r)$ and $\bar{\sigma} = \sqrt{\frac{1}{4} \sum_{r=1}^5 (\mathrm{acc}_{i}^r - \mathrm{acc}_{j}^r - \bar{\mu})^2}$.
The two-sided paired sample $t$-test is performed for the null that there is no performance difference between algorithms: 
$A_i$ is said to {\it beat} $A_j$ when $t > 2.776$ (the critical point of $p$-value being $0.05$), and vice-versa when $t < -2.776$.
Then, when $A_i$ beats $A_j$, we accumulate a penalty\footnote{This choice of penalty ensures that each dataset contributes equally to the resulting penalty matrix.} of $1 / T_{D}$ to $P_{i, j}$ where $T_{D}$ is the number of steps for dataset $D$, and vice-versa.
Summing across the datasets gives us the final penalty matrix.

\section{Ablation Study}

\begin{figure*}[t]
	\centering
	\subfloat[]{\includegraphics[width=0.33\linewidth]{./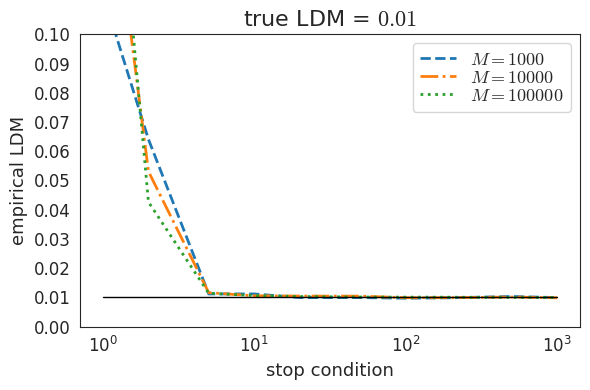}%
		\label{fig:empirical_LDM_1}}
	\hfil
	\subfloat[]{\includegraphics[width=0.33\linewidth]{./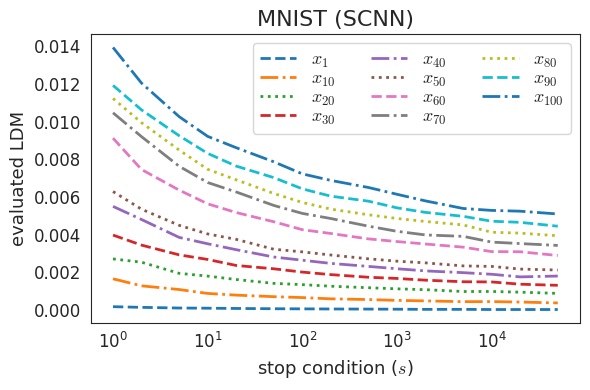}%
		\label{fig:empirical_LDM_2}}
	\hfil
	\subfloat[]{\includegraphics[width=0.33\linewidth]{./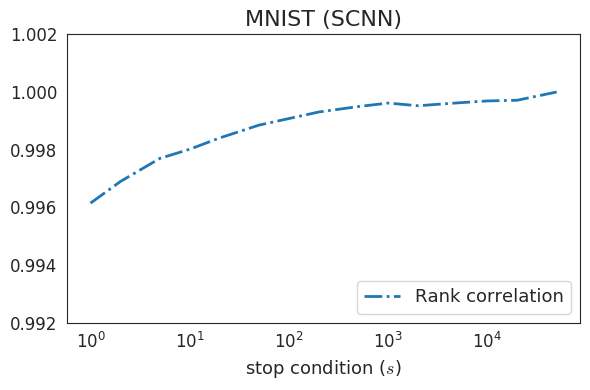}%
		\label{fig:empirical_LDM_3}}
    \hfil
	\subfloat[]{\includegraphics[width=0.33\linewidth]{./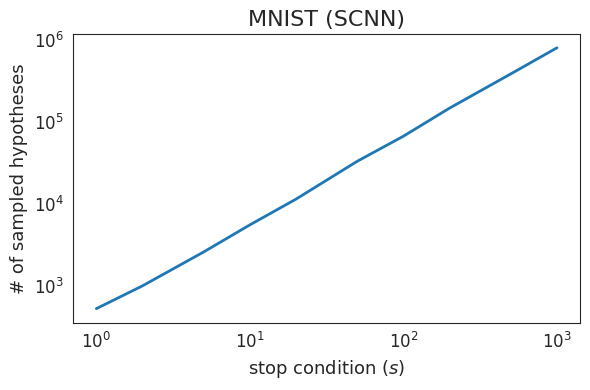}%
		\label{fig:empirical_LDM_4}}
	\hfil
	\subfloat[]{\includegraphics[width=0.33\linewidth]{./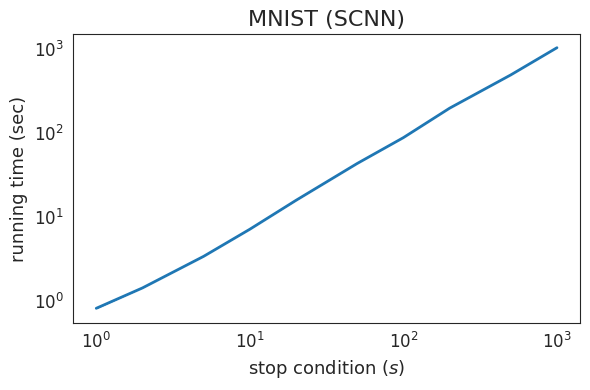}%
		\label{fig:empirical_LDM_5}}
	\vspace{-5px}
	\caption{Empirically evaluated LDMs by Algorithm \ref{alg:empirical_ldm} with respect to the stop condition $s$. (a) Here, we consider the two-dimensional binary classification with the linear classifier (see Figure \ref{fig:ldm}). The evaluated LDM is close to the true LDM even when $s = 10$ and reaches the true LDM when $s \ge 20$. (b) Evaluated LDMs of MNIST samples with a four-layered CNN. Observe that the evaluated LDM monotonically decreases as $s$ increases, and the rank order is well maintained. (c) In the same setting, the rank correlation coefficient of the evaluated LDMs at various $s$s to that at $s=50000$. Note that already at $s=10$, the rank correlation coefficient is $0.998$, suggesting that $s=10$ suffices. (d-e) For evaluating LDM, the number of sampled hypotheses and runtime are almost linearly proportional to the stop condition.}
	\label{fig:empirical_LDM}
\end{figure*}

\subsection{Choice of Stop Condition $s$}
\label{subsec:stop_condition}
Figure~\ref{fig:empirical_LDM_1} shows the empirically evaluated LDM by Algorithm \ref{alg:empirical_ldm} in binary classification with the linear classifier as described in Figure~\ref{fig:ldm}.
In the experiment, the true LDM of the sample is set to $0.01$.
The evaluated LDM is close to the true LDM when $s = 10$ and reaches the true LDM when $s \ge 20$ with a gap of roughly $10^{-4}$.
This suggests that even with a moderate $s$, Algorithm \ref{alg:empirical_ldm} can approximate the true LDM with sufficiently low error.

Figure~\ref{fig:empirical_LDM_2} shows the empirically evaluated LDMs of MNIST samples for a four-layered CNN where $M$ is set to be the total number of samples in MNIST, which is $60000$.
We denote $\vx_i$ as the $i\nth$ sample {\it ordered} by the final evaluated LDM.
Observe that the evaluated LDMs are monotonically decreasing as $s$ increases, and they seem to converge while maintaining the rank order.
In practice, obtaining values close to the true LDM requires a large $ s$, which is computationally prohibitive as the algorithm requires a considerable runtime to sample many hypotheses.
For example, when $s = 50000$, our algorithm samples $\sim$50M hypotheses and takes roughly 18 hours to run and evaluate LDM.
Therefore, based upon the observation that the rank order is preserved throughout the values of $s$, we focus on the rank order of the evaluated LDMs rather than their actual values.

Figure~\ref{fig:empirical_LDM_3} shows the rank correlation coefficient of the evaluated LDMs of a range of $s$'s to that at $s=50000$.
Even when $s=10$, the rank correlation coefficient between $s=10$ and $s=50000$ is already $0.998$.
We observed that the evaluated LDMs' properties regarding the stop condition $s$ also hold for other datasets, i.e., the preservation of rank order holds in general.

Figure~\ref{fig:empirical_LDM_4} and \ref{fig:empirical_LDM_5} show the number of sampled hypotheses and runtime with respect to the stop condition when LDM is evaluated.
Both are almost linearly proportional to the stop condition and thus, we should set the stop condition to be as small as possible to reduce the running time in LDM evaluation.

\subsection{Effectiveness of LDM}
\label{app:effectiveness-ldm}
To isolate the effectiveness of LDM, we consider three other variants of LDM-S. `LDM-smallest' select batches with the smallest LDMs {\it without} taking diversity into account, `Seeding (cos)' and `Seeding ($\ell_2$)' are the unweighted $k$-means++ seeding methods using cosine and $\ell_2$ distance, respectively.
Note that the last two do not use LDM in any way.
We have excluded batch diversity to clarify the effectiveness of LDM further.

Figure~\ref{fig:lpdr_vs_diverse} shows the test accuracy with respect to the number of labeled samples on MNIST, CIFAR10, and CIFAR100 datasets.
Indeed, we observe a significant performance improvement when using LDM and a further improvement when batch diversity is considered.
Additional experiments are conducted to compare the $k$-means++ seeding with FASS~\citep{wei2015submodularity} or Cluster-Margin~\citep{citovsky2021batch}, which can be a batch diversity method for LDM.
Although FASS helps LDM slightly, it falls short of LDM-S.
Cluster-Margin also does not help LDM and, surprisingly, degrades the performance.
We believe this is because Cluster-Margin {\it strongly} pursues batch diversity, diminishing the effect of LDM as an uncertainty measure.
Specifically, Cluster-Margin considers samples of varying LDM scores from the beginning (a large portion of them are thus not so helpful).
In contrast, our algorithm significantly weights the samples with small LDM, biasing our samples towards more uncertain samples (and thus more helpful).

\begin{figure*}[!t]
	\centering
	\subfloat[]{\includegraphics[width=0.33\linewidth]{./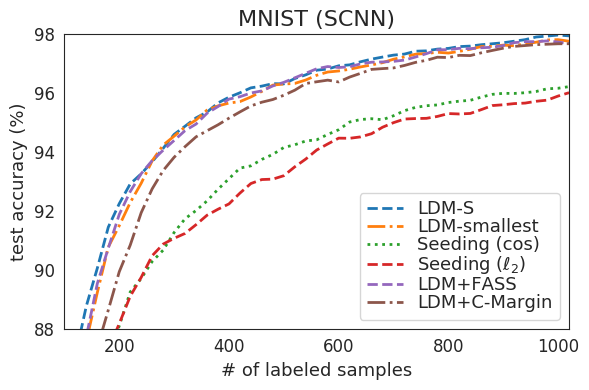}%
		\label{fig:lpdr_vs_vr_1}}
	\hfil
	\subfloat[]{\includegraphics[width=0.33\linewidth]{./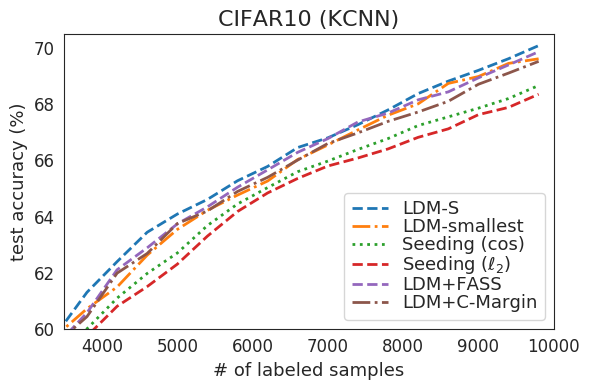}%
		\label{fig:lpdr_vs_vr_2}}
	\hfil
	\subfloat[]{\includegraphics[width=0.33\linewidth]{./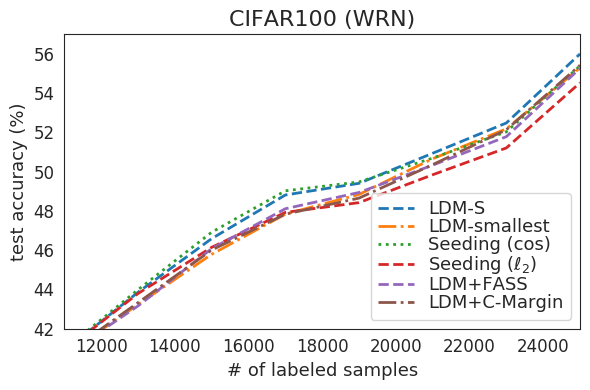}%
		\label{fig:lpdr_vs_vr_3}}
	\vspace{-5px}
	\caption{The effect of diverse sampling in LDM-S on MNIST (a), CIFAR10 (b), and CIFAR100 (c) datasets. `LDM-smallest': selecting batch with the smallest LDM, `Seeding (cos)': unweighted seeding using cosine distance, `Seeding ($\ell_2$)': unweighted seeding using $\ell_2$-distance, `LDM+FASS': the combination of LDM and FASS, `LDM+C-Margin': the combination of LDM and Cluster-Margin. LDM-S leads to significant performance improvement compared to those without batch diversity, with FASS, or with Cluster-Margin.}
	\label{fig:lpdr_vs_diverse}
\end{figure*}

\begin{figure*}[!t]
	\centering
	\subfloat[]{\includegraphics[width=0.33\linewidth]{./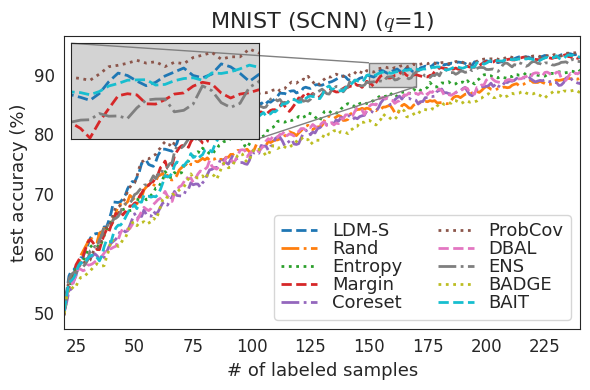}%
		\label{fig:result_small_1}}
	\hfil
	\subfloat[]{\includegraphics[width=0.33\linewidth]{./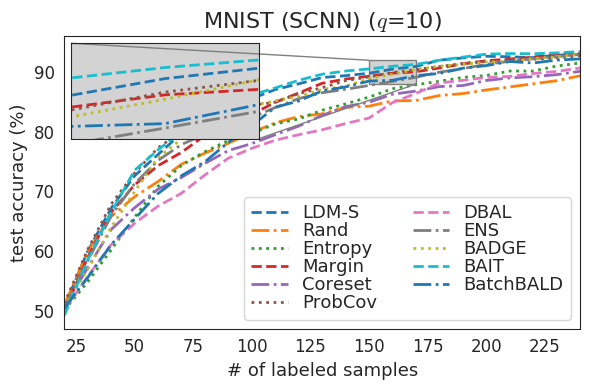}%
		\label{fig:result_small_2}}
	\hfil
	\subfloat[]{\includegraphics[width=0.33\linewidth]{./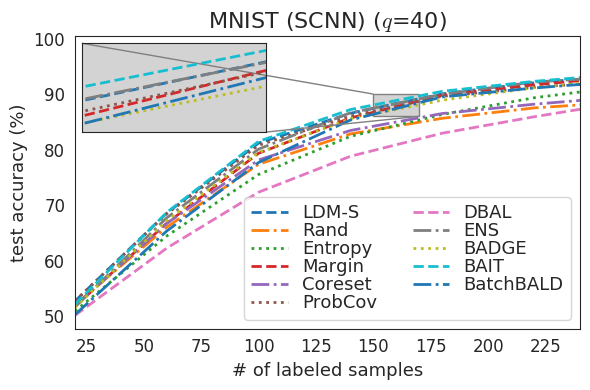}%
		\label{fig:result_small_3}}
	\caption{Performance comparison when the batch sizes are 1 (a), 10 (b), and 40 (c) on MNIST.}
	\label{fig:result_small}
\end{figure*}

\begin{figure*}[!t]
	\centering
	\subfloat[]{\includegraphics[width=0.33\linewidth]{./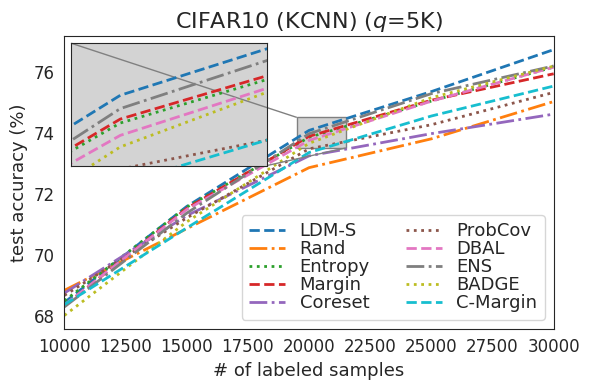}%
		\label{fig:result_scale_1}}
	\hfil
	\subfloat[]{\includegraphics[width=0.33\linewidth]{./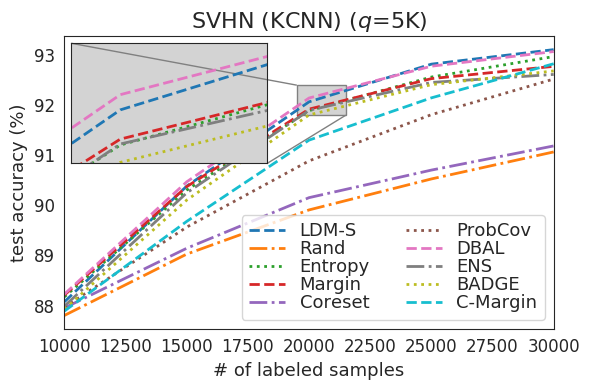}%
		\label{fig:result_scale_2}}
	\hfil
	\subfloat[]{\includegraphics[width=0.33\linewidth]{./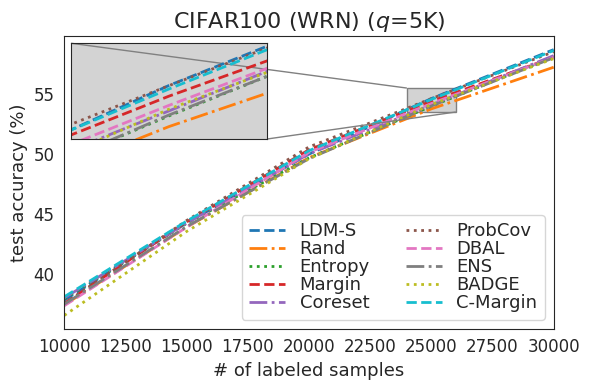}%
		\label{fig:result_scale_3}}
	\caption{Performance comparison when the batch size is 5K on CIFAR10 (a), SVHN (b), and CIFAR100 (c).}
	\label{fig:result_scale}
\end{figure*}

\subsection{Effect of Batch Size}
\label{appsub:batch_size}
To verify the effectiveness of LDM-S for batch mode, we compare the active learning performance with respect to various batch sizes.
Figure~\ref{fig:result_small} shows the test accuracy when the batch sizes are 1, 10, and 40 on the MNIST dataset.
Figure~\ref{fig:result_scale} shows the test accuracy when the batch size is 5K on CIFAR10, SVHM, and CIFAR100 datasets.
Overall, LDM-S performs well compared to other algorithms, even with small and large batch sizes.
Therefore, the proposed algorithm is robust to batch size, while other baseline algorithms often are not. 
For example, BADGE performs well with batch sizes 10 or 40 but is poor with batch size one on MNIST.
Note that we have added additional results for BatchBALD in Figure~\ref{fig:result_small} and Cluster-Margin (C-Margin) in Figure~\ref{fig:result_scale}. For both settings, we've matched the setting as described in their original papers.

\begin{figure*}[!t]
	\centering
	\subfloat[]{\includegraphics[width=0.33\linewidth]{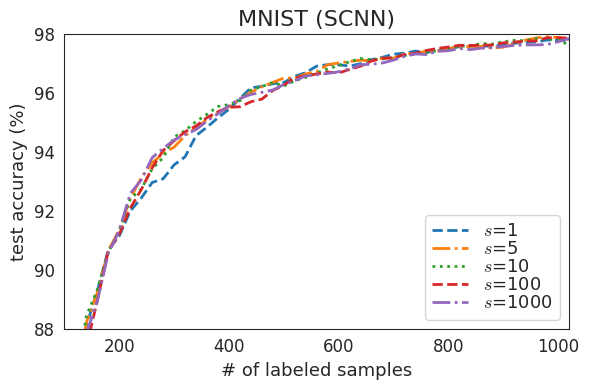}%
		\label{fig:hyperparameters_1}}
	\hfil
	\subfloat[]{\includegraphics[width=0.33\linewidth]{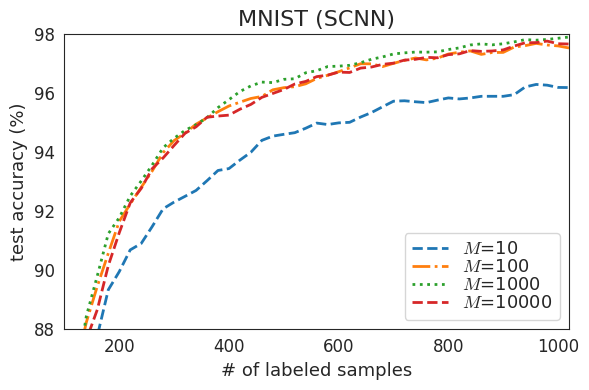}%
		\label{fig:hyperparameters_2}}
	\hfil
	\subfloat[]{\includegraphics[width=0.33\linewidth]{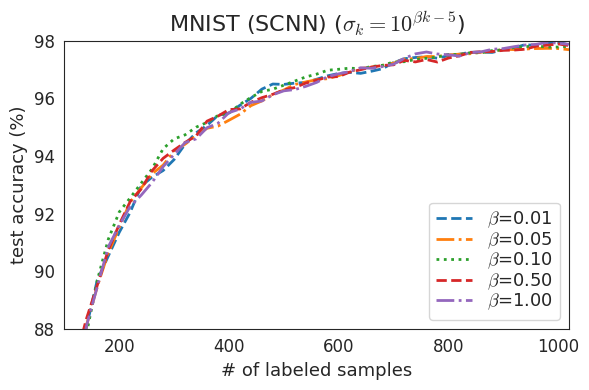}%
		\label{fig:hyperparameters_3}}
	\caption{Performance vs hyperparameters. The test accuracy with respect to stop condition $s$ (a), the number of Monte Carlo samples for approximating $\rho$ (b), and sigmas' interval (c).}
	\label{fig:hyperparameters}
\end{figure*}

\subsection{Effect of Hyperparameters in LDM-S}
\label{appsub:hyperparameters}
There are three hyperparameters in the proposed algorithm: stop condition $s$, the number of Monte Carlo samples $M$, and the set of variances $\{\sigma_k^2 \}_{k=1}^{K}$.

The stop condition $s$ is required for LDM evaluation.
We set $s=10$, considering the rank correlation coefficient and computing time.
Figure~\ref{fig:hyperparameters_1} shows the test accuracy with respect to $s \in \{1, 5, 10, 100, 1000\}$, and there is no significant performance difference.

The number of Monte Carlo samples $M$ is set for approximating $\rho$.
The proposed algorithm aims to distinguish LDMs of pool data. 
Thus, we set $M$ to be the same as the pool size.
Figure~\ref{fig:hyperparameters_2} shows the test accuracy with respect to $M \in \{10, 100, 1000, 10000\}$, and there is no significant performance difference except where $M$ is extremely small, e.g., $M=10$.

The set of variances $\{\sigma_k^2 \}_{k=1}^{K}$ is set for hypothesis sampling.
Figure~\ref{fig:rho-sigma} in Appendix~\ref{app:brute-force} shows the relationship between the disagree metric and $\sigma^2$.
To properly approximate LDM, we need to sample hypotheses with a wide range of $\rho$, and thus, we need a wide range of $\sigma^2$.
To efficiently cover a wide range of $\sigma^2$, we make the exponent equally spaced such as $\sigma_k = 10^{\beta k - 5}$ where $\beta > 0$ and set $\sigma$ the have $10^{-5}$ to $1$.
Figure~\ref{fig:hyperparameters_3} shows the test accuracy with respect to $\beta \in \{0.01, 0.05, 0.1, 0.5, 1\}$, and there is no significant performance difference.

\section{Additional Results}

\begin{figure*}[!t]
    \centering
    \subfloat[]{\includegraphics[width=0.33\linewidth]{./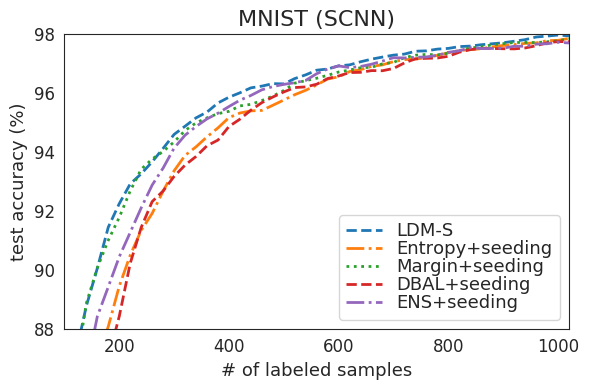}%
        \label{fig:result_seeding_1}}
    \hfil
    \subfloat[]{\includegraphics[width=0.33\linewidth]{./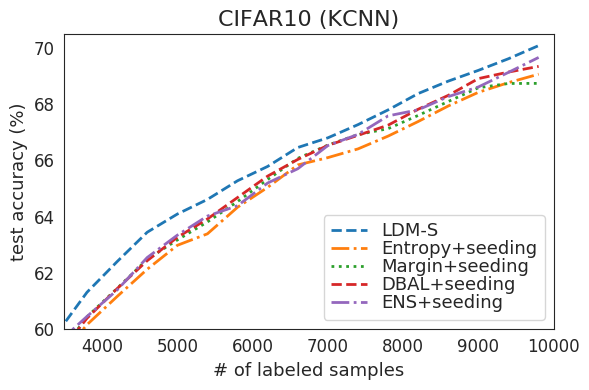}%
        \label{fig:result_seeding_2}}
    \hfil
     \subfloat[]{\includegraphics[width=0.33\linewidth]{./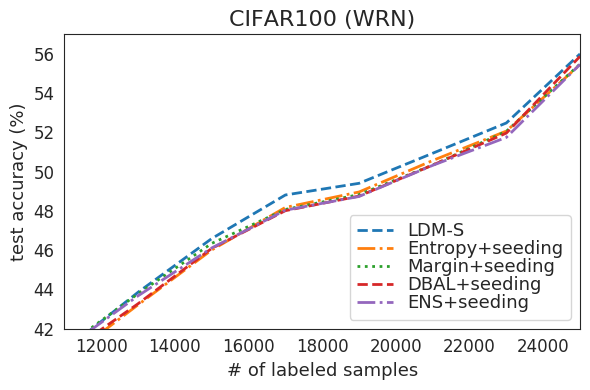}%
        \label{fig:result_seeding_3}}
    \caption{The performance comparison of LDM-S with the standard uncertainty methods to which weighted seeding is applied on MNIST (a), CIFAR10 (b), and CIFAR100 (c). Even if weighted seeding is applied to the standard uncertainty methods, LDM-S performs better.}
    \label{fig:result_seeding}
\end{figure*}

\subsection{Comparing with Other Uncertainty Methods with Seeding}
\label{app:seeding}
To clarify whether LDM-S's gains over the standard uncertainty methods are due to weighted seeding or to LDM's superiority, LDM-S's performance is compared with those methods to which weighted seeding is applied.
Figure~\ref{fig:result_seeding} shows the test accuracy with respect to the number of labeled samples on MNIST, CIFAR10, and CIFAR100 datasets.
Overall, even when weighted seeding is applied to the standard uncertainty methods, LDM-S still performs better on all datasets.
Therefore, the performance gains of LDM-S can be attributed to LDM's superiority over the standard uncertainty measures.

\begin{figure*}[!t]
    \centering
    \subfloat[]{\includegraphics[width=0.54\linewidth]{./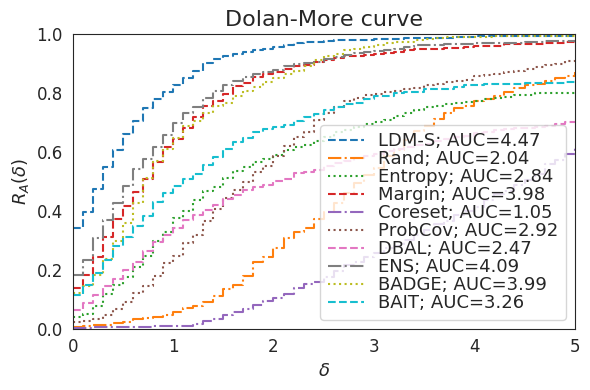}%
        \label{fig:appacross_1}}
    \hfil
    \subfloat[]{\includegraphics[width=0.41\linewidth]{./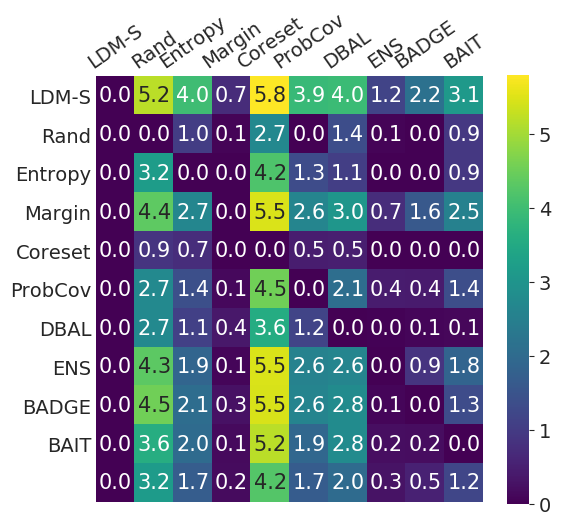}%
        \label{fig:appacross_2}}
    \caption{Comparison with BAIT across OpenML \#6, \#156, \#44135, MNIST, CIFAR10, and SVHN datasets. (a) Dolan-Mor\'{e} plot. (b) The penalty matrix.}
    \label{fig:appacross}
\end{figure*}

\subsection{Comparison {\it across} Datasets with BAIT}
\label{app:dolan}
The performance profile and penalty matrix between LDM-S and BAIT are examined on OpenML \#6, \#156, \#44135, MNIST, CIFAR10, and SVHN datasets where the experiments for BAIT are conducted.
Figure~\ref{fig:appacross_1} shows the performance profile w.r.t. $\delta$.
LDM-S retains the highest $R_A(\delta)$ over all considered $\delta$'s, which means that our LDM-S outperforms the other algorithms, including BAIT.
Figure~\ref{fig:appacross_2} shows the penalty matrix.
It is also clear that LDM-S generally outperforms all other algorithms, including BAIT.

\subsection{Comparison {\it per} Datasets}
\label{app:per_additional}
Table~\ref{tbl:openml6} shows the mean $\pm$ std of the test accuracies (\%) w.r.t. the number of labeled samples for OpenML \#6, \#156, \#44135 with MLP; MNIST with SCNN; CIFAR10 and SVHN with K-CNN; CIFAR100, Tiny ImageNet, FOOD101 with WRN-16-8; and ImageNet with ResNet-18.
Overall, LDM-S either consistently performs best or is at par with other algorithms for all datasets, while the performance of the algorithms except LDM-S varies depending on datasets.

\newpage


\begin{landscape}
\small
\begin{longtable}{c|c|c|c|c|c|c|c|c|c|c}
\caption{The mean $\pm$ std of the test accuracies (\%) w.r.t. the number of labeled samples. ({\bfseries \underline{bold+underlined}}: best performance, {\bfseries bold}: second-best performance) \label{tbl:openml6}} \\
\hline
$\vert \mathcal{L} \vert$      & LDM-S                       & Rand         & Entropy      & Margin       & Coreset      & ProbCov      & DBAL         & ENS                         & BADGE                 & BAIT         \\
\hline
\endfirsthead

\multicolumn{11}{c}{{\tablename\ \thetable{} -- Continued from previous page}} \\
\hline
$\vert \mathcal{L} \vert$      & LDM-S                       & Rand         & Entropy      & Margin       & Coreset      & ProbCov      & DBAL         & ENS                         & BADGE                 & BAIT         \\
\hline
\endhead

\hline
\multicolumn{11}{c}{{Continued on next page}} \\
\endfoot

\hline
\endlastfoot

    \multicolumn{2}{l}{OpenML\#6} \\
    \hline
    400   & {\ul \textbf{62.42 ± 0.01}} & 60.64 ± 0.01 & 56.92 ± 0.01 & 61.22 ± 0.03 & 61.23 ± 0.02 & 61.00 ± 0.01 & 58.64 ± 0.02 & 60.96 ± 0.02                & \textbf{61.65 ± 0.01} & 61.43 ± 0.01 \\
    800   & {\ul \textbf{76.29 ± 0.01}} & 72.51 ± 0.00 & 66.38 ± 0.01 & 74.94 ± 0.01 & 73.66 ± 0.01 & 73.10 ± 0.00 & 70.59 ± 0.01 & \textbf{75.53 ± 0.01}       & 75.13 ± 0.01          & 73.55 ± 0.00 \\
    1,200 & {\ul \textbf{81.62 ± 0.01}} & 77.49 ± 0.00 & 70.62 ± 0.02 & 80.57 ± 0.00 & 78.64 ± 0.01 & 77.71 ± 0.01 & 75.98 ± 0.01 & \textbf{81.30 ± 0.01}       & 79.85 ± 0.01          & 78.52 ± 0.01 \\
    1,600 & {\ul \textbf{84.74 ± 0.00}} & 80.20 ± 0.00 & 73.61 ± 0.02 & 83.91 ± 0.01 & 81.26 ± 0.01 & 80.15 ± 0.01 & 79.07 ± 0.00 & \textbf{84.24 ± 0.01}       & 82.60 ± 0.00          & 81.84 ± 0.00 \\
    2,000 & {\ul \textbf{87.05 ± 0.00}} & 82.06 ± 0.00 & 77.49 ± 0.02 & 86.41 ± 0.00 & 83.17 ± 0.01 & 82.21 ± 0.01 & 81.73 ± 0.00 & \textbf{86.82 ± 0.00}       & 84.81 ± 0.00          & 84.39 ± 0.01 \\
    2,400 & {\ul \textbf{89.12 ± 0.00}} & 83.62 ± 0.01 & 81.80 ± 0.01 & 88.59 ± 0.01 & 85.00 ± 0.01 & 84.01 ± 0.01 & 84.20 ± 0.01 & \textbf{88.74 ± 0.01}       & 87.04 ± 0.00          & 86.73 ± 0.01 \\
    2,800 & {\ul \textbf{90.35 ± 0.00}} & 84.77 ± 0.01 & 85.12 ± 0.01 & 89.76 ± 0.00 & 86.22 ± 0.00 & 85.04 ± 0.00 & 86.00 ± 0.01 & \textbf{90.10 ± 0.01}       & 88.31 ± 0.00          & 88.44 ± 0.00 \\
    3,200 & {\ul \textbf{91.06 ± 0.00}} & 85.96 ± 0.01 & 87.37 ± 0.00 & 90.83 ± 0.00 & 87.22 ± 0.00 & 85.95 ± 0.00 & 87.43 ± 0.01 & \textbf{90.96 ± 0.00}       & 89.18 ± 0.00          & 89.76 ± 0.00 \\
    3,600 & {\ul \textbf{91.80 ± 0.00}} & 86.77 ± 0.00 & 88.57 ± 0.00 & 91.50 ± 0.00 & 87.85 ± 0.01 & 86.90 ± 0.01 & 88.83 ± 0.00 & \textbf{91.70 ± 0.00}       & 90.21 ± 0.00          & 90.57 ± 0.00 \\
    4,000 & \textbf{92.53 ± 0.00}       & 88.00 ± 0.01 & 89.54 ± 0.00 & 92.36 ± 0.00 & 88.49 ± 0.01 & 87.72 ± 0.00 & 89.69 ± 0.00 & {\ul \textbf{92.62 ± 0.00}} & 91.35 ± 0.00          & 91.28 ± 0.00 \\
    \hline
    \multicolumn{2}{l}{OpenML\#156} \\
    \hline
    200   & 69.78 ± 0.02                & \textbf{70.30 ± 0.02} & 65.23 ± 0.02 & 67.95 ± 0.03          & 61.15 ± 0.03 & 70.02 ± 0.01 & 64.10 ± 0.04 & 67.97 ± 0.02          & {\ul \textbf{71.35 ± 0.02}} & 67.62 ± 0.03 \\
    400   & 83.08 ± 0.02                & \textbf{83.27 ± 0.01} & 76.42 ± 0.03 & 81.58 ± 0.02          & 63.92 ± 0.03 & 82.68 ± 0.01 & 70.78 ± 0.01 & 82.19 ± 0.02          & {\ul \textbf{83.74 ± 0.02}} & 76.04 ± 0.02 \\
    600   & {\ul \textbf{87.30 ± 0.01}} & 86.20 ± 0.01          & 83.11 ± 0.01 & \textbf{86.75 ± 0.01} & 65.46 ± 0.04 & 85.65 ± 0.00 & 72.06 ± 0.02 & 86.38 ± 0.01          & 86.68 ± 0.01                & 77.68 ± 0.01 \\
    800   & {\ul \textbf{88.60 ± 0.00}} & 87.09 ± 0.00          & 85.95 ± 0.01 & 88.14 ± 0.00          & 65.40 ± 0.05 & 86.79 ± 0.00 & 72.88 ± 0.02 & \textbf{88.17 ± 0.00} & 87.97 ± 0.00                & 78.47 ± 0.02 \\
    1,000 & {\ul \textbf{89.32 ± 0.00}} & 87.63 ± 0.01          & 87.65 ± 0.01 & 88.83 ± 0.00          & 65.59 ± 0.05 & 87.56 ± 0.01 & 72.82 ± 0.02 & \textbf{89.21 ± 0.00} & 88.94 ± 0.00                & 78.89 ± 0.01 \\
    1,200 & {\ul \textbf{90.08 ± 0.00}} & 88.31 ± 0.00          & 89.13 ± 0.00 & 89.83 ± 0.00          & 67.90 ± 0.05 & 88.51 ± 0.00 & 72.55 ± 0.03 & \textbf{89.88 ± 0.00} & 89.59 ± 0.00                & 79.29 ± 0.01 \\
    1,400 & {\ul \textbf{90.43 ± 0.00}} & 88.78 ± 0.00          & 89.88 ± 0.00 & \textbf{90.40 ± 0.00} & 69.14 ± 0.02 & 89.03 ± 0.00 & 72.87 ± 0.02 & 90.23 ± 0.00          & 89.99 ± 0.00                & 80.45 ± 0.01 \\
    1,600 & {\ul \textbf{90.70 ± 0.00}} & 89.11 ± 0.00          & 90.14 ± 0.00 & \textbf{90.67 ± 0.00} & 70.02 ± 0.04 & 89.46 ± 0.00 & 73.06 ± 0.01 & 90.48 ± 0.00          & 90.28 ± 0.00                & 80.70 ± 0.01 \\
    1,800 & {\ul \textbf{90.95 ± 0.00}} & 89.42 ± 0.00          & 90.37 ± 0.00 & \textbf{90.85 ± 0.00} & 70.03 ± 0.05 & 89.69 ± 0.00 & 73.96 ± 0.02 & 90.75 ± 0.00          & 90.49 ± 0.00                & 81.85 ± 0.02 \\
    2,000 & {\ul \textbf{91.19 ± 0.00}} & 89.67 ± 0.00          & 90.62 ± 0.00 & \textbf{91.09 ± 0.00} & 70.92 ± 0.05 & 90.05 ± 0.00 & 73.92 ± 0.03 & 91.01 ± 0.00          & 90.73 ± 0.00                & 84.17 ± 0.02 \\
    \hline
    \multicolumn{2}{l}{OpenML\#44135} \\
    \hline
    200   & {\ul \textbf{82.28 ± 0.01}} & 77.44 ± 0.02 & 75.67 ± 0.01          & 81.33 ± 0.01                & 75.92 ± 0.00 & \textbf{81.70 ± 0.01} & 76.56 ± 0.02 & 78.16 ± 0.02 & 78.33 ± 0.01 & 78.82 ± 0.01 \\
    400   & {\ul \textbf{92.65 ± 0.01}} & 86.21 ± 0.01 & 86.18 ± 0.02          & \textbf{92.10 ± 0.00}       & 84.92 ± 0.01 & 91.59 ± 0.01          & 86.56 ± 0.01 & 88.60 ± 0.00 & 88.60 ± 0.00 & 89.50 ± 0.01 \\
    600   & {\ul \textbf{94.51 ± 0.00}} & 89.18 ± 0.01 & 91.17 ± 0.01          & \textbf{94.12 ± 0.00}       & 89.48 ± 0.01 & 93.51 ± 0.00          & 90.63 ± 0.00 & 92.18 ± 0.00 & 92.06 ± 0.01 & 92.29 ± 0.00 \\
    800   & {\ul \textbf{95.34 ± 0.01}} & 91.05 ± 0.01 & 93.26 ± 0.00          & \textbf{95.30 ± 0.00}       & 91.29 ± 0.01 & 94.69 ± 0.00          & 92.65 ± 0.00 & 93.70 ± 0.01 & 93.42 ± 0.00 & 93.49 ± 0.01 \\
    1,000 & \textbf{95.86 ± 0.00}       & 91.90 ± 0.00 & 94.41 ± 0.00          & {\ul \textbf{96.03 ± 0.00}} & 92.47 ± 0.00 & 95.02 ± 0.01          & 93.78 ± 0.00 & 94.82 ± 0.01 & 94.28 ± 0.01 & 94.41 ± 0.01 \\
    1,200 & \textbf{96.13 ± 0.00}       & 92.35 ± 0.00 & 95.13 ± 0.00          & {\ul \textbf{96.28 ± 0.00}} & 93.03 ± 0.00 & 95.22 ± 0.00          & 94.57 ± 0.01 & 95.29 ± 0.01 & 95.12 ± 0.00 & 94.87 ± 0.00 \\
    1,400 & \textbf{96.27 ± 0.00}       & 92.81 ± 0.00 & 95.62 ± 0.00          & {\ul \textbf{96.38 ± 0.00}} & 93.61 ± 0.00 & 95.57 ± 0.00          & 95.26 ± 0.00 & 95.53 ± 0.01 & 95.53 ± 0.00 & 95.43 ± 0.00 \\
    1,600 & {\ul \textbf{96.51 ± 0.00}} & 93.29 ± 0.00 & 95.93 ± 0.00          & \textbf{96.44 ± 0.01}       & 94.18 ± 0.01 & 95.73 ± 0.00          & 95.48 ± 0.00 & 95.58 ± 0.00 & 95.88 ± 0.01 & 95.64 ± 0.01 \\
    1,800 & {\ul \textbf{96.60 ± 0.00}} & 93.42 ± 0.01 & 95.84 ± 0.01          & \textbf{96.38 ± 0.00}       & 94.41 ± 0.00 & 95.79 ± 0.00          & 95.63 ± 0.00 & 95.72 ± 0.01 & 95.98 ± 0.00 & 95.84 ± 0.00 \\
    2,000 & {\ul \textbf{96.62 ± 0.00}} & 93.73 ± 0.01 & \textbf{96.40 ± 0.00} & 96.21 ± 0.00                & 94.59 ± 0.00 & 95.96 ± 0.01          & 96.21 ± 0.00 & 96.13 ± 0.00 & 96.26 ± 0.00 & 95.65 ± 0.00 \\
    \hline
    \pagebreak
    \hline
    \multicolumn{2}{l}{MNIST} \\
    \hline
    120   & {\ul \textbf{87.08 ± 0.01}} & 81.32 ± 0.02 & 80.67 ± 0.04          & 84.86 ± 0.01 & 79.29 ± 0.03 & 86.61 ± 0.01 & 78.59 ± 0.01 & 85.34 ± 0.01                & 85.43 ± 0.02 & \textbf{87.07 ± 0.01}       \\
    220   & \textbf{92.92 ± 0.00}       & 88.08 ± 0.01 & 90.83 ± 0.01          & 92.17 ± 0.01 & 87.37 ± 0.02 & 92.44 ± 0.01 & 88.43 ± 0.01 & 91.70 ± 0.01                & 92.53 ± 0.01 & {\ul \textbf{92.99 ± 0.01}} \\
    320   & \textbf{94.88 ± 0.00}       & 90.99 ± 0.01 & 93.95 ± 0.00          & 94.67 ± 0.00 & 90.90 ± 0.02 & 94.30 ± 0.00 & 93.03 ± 0.00 & 94.30 ± 0.00                & 94.68 ± 0.00 & {\ul \textbf{94.93 ± 0.00}} \\
    420   & \textbf{96.01 ± 0.00}       & 92.45 ± 0.01 & 95.35 ± 0.00          & 95.87 ± 0.00 & 92.84 ± 0.02 & 95.43 ± 0.00 & 95.22 ± 0.00 & 95.82 ± 0.00                & 95.64 ± 0.01 & {\ul \textbf{96.09 ± 0.00}} \\
    520   & \textbf{96.50 ± 0.00}       & 93.51 ± 0.01 & 96.00 ± 0.00          & 96.34 ± 0.00 & 93.70 ± 0.01 & 96.13 ± 0.00 & 95.80 ± 0.00 & 96.46 ± 0.00                & 96.33 ± 0.00 & {\ul \textbf{96.64 ± 0.00}} \\
    620   & \textbf{96.96 ± 0.00}       & 94.22 ± 0.00 & 96.78 ± 0.00          & 96.93 ± 0.00 & 94.77 ± 0.01 & 96.63 ± 0.00 & 96.80 ± 0.00 & 96.94 ± 0.00                & 96.82 ± 0.00 & {\ul \textbf{97.11 ± 0.00}} \\
    720   & \textbf{97.33 ± 0.00}       & 94.78 ± 0.00 & 97.12 ± 0.00          & 97.15 ± 0.00 & 95.21 ± 0.01 & 96.98 ± 0.00 & 96.95 ± 0.00 & {\ul \textbf{97.36 ± 0.00}} & 97.02 ± 0.00 & 97.28 ± 0.00                \\
    820   & {\ul \textbf{97.58 ± 0.00}} & 95.18 ± 0.00 & 97.41 ± 0.00          & 97.46 ± 0.00 & 95.38 ± 0.01 & 97.06 ± 0.00 & 97.31 ± 0.00 & \textbf{97.56 ± 0.00}       & 97.46 ± 0.00 & 97.49 ± 0.00                \\
    920   & {\ul \textbf{97.77 ± 0.00}} & 95.44 ± 0.00 & \textbf{97.69 ± 0.00} & 97.68 ± 0.00 & 95.97 ± 0.00 & 97.44 ± 0.00 & 97.51 ± 0.00 & 97.63 ± 0.00                & 97.61 ± 0.00 & 97.65 ± 0.00                \\
    1,020 & \textbf{97.95 ± 0.00}       & 95.88 ± 0.00 & 97.88 ± 0.00          & 97.84 ± 0.00 & 95.84 ± 0.01 & 97.63 ± 0.00 & 97.77 ± 0.00 & 97.75 ± 0.00                & 97.78 ± 0.00 & {\ul \textbf{97.97 ± 0.00}} \\
    \hline
    \multicolumn{2}{l}{CIFAR10} \\
    \hline
    1,400 & {\ul \textbf{50.30 ± 0.00}} & 49.16 ± 0.01 & 48.96 ± 0.01 & 49.35 ± 0.01 & 47.44 ± 0.01 & 49.46 ± 0.01 & 49.51 ± 0.01 & 49.93 ± 0.01 & \textbf{50.23 ± 0.01} & 49.48 ± 0.00          \\
    2,600 & {\ul \textbf{57.01 ± 0.01}} & 56.04 ± 0.01 & 55.35 ± 0.01 & 56.51 ± 0.01 & 52.95 ± 0.01 & 56.09 ± 0.01 & 55.53 ± 0.01 & 56.39 ± 0.01 & \textbf{56.74 ± 0.01} & 56.35 ± 0.01          \\
    3,800 & {\ul \textbf{61.30 ± 0.01}} & 59.99 ± 0.00 & 59.68 ± 0.01 & 60.49 ± 0.00 & 56.30 ± 0.01 & 59.99 ± 0.00 & 59.43 ± 0.01 & 60.34 ± 0.01 & \textbf{60.54 ± 0.01} & 60.39 ± 0.01          \\
    5,000 & {\ul \textbf{64.09 ± 0.00}} & 62.69 ± 0.01 & 62.60 ± 0.00 & 62.81 ± 0.01 & 58.70 ± 0.00 & 62.73 ± 0.01 & 62.31 ± 0.00 & 63.32 ± 0.00 & \textbf{63.93 ± 0.01} & 63.00 ± 0.00          \\
    6,200 & {\ul \textbf{65.79 ± 0.00}} & 64.56 ± 0.01 & 64.42 ± 0.00 & 64.97 ± 0.00 & 61.08 ± 0.01 & 64.38 ± 0.01 & 64.00 ± 0.01 & 64.99 ± 0.01 & 65.30 ± 0.00          & \textbf{65.32 ± 0.00} \\
    7,400 & {\ul \textbf{67.28 ± 0.01}} & 65.87 ± 0.00 & 66.29 ± 0.00 & 66.75 ± 0.01 & 62.84 ± 0.01 & 66.28 ± 0.01 & 66.12 ± 0.01 & 66.78 ± 0.01 & \textbf{67.11 ± 0.01} & 66.82 ± 0.01          \\
    8,600 & {\ul \textbf{68.83 ± 0.00}} & 67.50 ± 0.01 & 67.77 ± 0.01 & 67.87 ± 0.01 & 63.50 ± 0.01 & 67.60 ± 0.00 & 67.94 ± 0.01 & 68.11 ± 0.01 & \textbf{68.42 ± 0.00} & 67.99 ± 0.00          \\
    9,800 & {\ul \textbf{70.10 ± 0.00}} & 68.51 ± 0.01 & 68.62 ± 0.00 & 69.13 ± 0.01 & 64.44 ± 0.01 & 68.45 ± 0.00 & 68.82 ± 0.01 & 69.23 ± 0.01 & 69.08 ± 0.00          & \textbf{69.42 ± 0.01} \\
    \hline
    \multicolumn{2}{l}{SVHN} \\
    \hline
    1,400 & {\ul \textbf{79.02 ± 0.01}} & 77.22 ± 0.01 & 76.25 ± 0.01 & 77.94 ± 0.01          & 75.84 ± 0.01 & 77.56 ± 0.01 & \textbf{78.42 ± 0.01}       & 77.49 ± 0.01 & 78.31 ± 0.01 & 77.91 ± 0.00          \\
    2,600 & {\ul \textbf{83.76 ± 0.00}} & 81.55 ± 0.01 & 82.33 ± 0.00 & \textbf{83.61 ± 0.01} & 80.62 ± 0.01 & 82.55 ± 0.01 & 83.27 ± 0.01                & 83.23 ± 0.01 & 83.49 ± 0.00 & 83.13 ± 0.00          \\
    3,800 & {\ul \textbf{86.08 ± 0.00}} & 83.61 ± 0.00 & 85.33 ± 0.00 & 85.28 ± 0.01          & 81.95 ± 0.01 & 84.59 ± 0.00 & \textbf{86.00 ± 0.00}       & 85.85 ± 0.01 & 85.68 ± 0.00 & 85.54 ± 0.00          \\
    5,000 & {\ul \textbf{87.58 ± 0.00}} & 85.05 ± 0.00 & 86.78 ± 0.00 & 87.11 ± 0.00          & 83.12 ± 0.00 & 85.92 ± 0.00 & \textbf{87.49 ± 0.00}       & 87.23 ± 0.00 & 87.32 ± 0.01 & 87.49 ± 0.00          \\
    6,200 & \textbf{88.64 ± 0.00}       & 85.98 ± 0.00 & 88.09 ± 0.00 & 88.11 ± 0.00          & 84.21 ± 0.00 & 86.89 ± 0.00 & {\ul \textbf{88.91 ± 0.00}} & 88.30 ± 0.00 & 88.38 ± 0.00 & 88.61 ± 0.01          \\
    7,400 & \textbf{89.56 ± 0.00}       & 86.59 ± 0.00 & 88.84 ± 0.00 & 89.03 ± 0.00          & 84.73 ± 0.00 & 87.57 ± 0.00 & {\ul \textbf{89.61 ± 0.00}} & 89.31 ± 0.01 & 89.20 ± 0.00 & 89.29 ± 0.00          \\
    8,600 & 90.04 ± 0.00                & 87.28 ± 0.00 & 89.65 ± 0.00 & 89.62 ± 0.00          & 85.25 ± 0.00 & 88.22 ± 0.00 & {\ul \textbf{90.34 ± 0.00}} & 90.04 ± 0.00 & 89.90 ± 0.00 & \textbf{90.07 ± 0.00} \\
    9,800 & \textbf{90.61 ± 0.00}       & 87.72 ± 0.00 & 90.16 ± 0.00 & 90.45 ± 0.00          & 86.13 ± 0.00 & 88.80 ± 0.00 & {\ul \textbf{90.77 ± 0.00}} & 90.32 ± 0.00 & 90.10 ± 0.00 & 90.52 ± 0.00  \\
    \hline
    \pagebreak
    \hline
    \multicolumn{2}{l}{CIFAR100} \\
    \hline
    7,000  & {\ul \textbf{31.85 ± 0.00}} & 31.28 ± 0.01 & 30.74 ± 0.01          & 31.18 ± 0.01 & 31.46 ± 0.01                & \textbf{31.76 ± 0.01}       & 30.80 ± 0.01          & 31.09 ± 0.01 & 30.89 ± 0.00 & \multirow{10}{*}{-} \\
    9,000  & \textbf{37.61 ± 0.00}       & 36.94 ± 0.01 & 36.30 ± 0.00          & 36.83 ± 0.02 & 37.60 ± 0.01                & {\ul \textbf{37.68 ± 0.01}} & 36.32 ± 0.01          & 36.42 ± 0.00 & 36.69 ± 0.01 \\
    11,000 & 40.88 ± 0.01                & 40.40 ± 0.02 & 39.91 ± 0.01          & 40.33 ± 0.01 & \textbf{41.33 ± 0.01}       & {\ul \textbf{41.43 ± 0.01}} & 40.24 ± 0.01          & 40.04 ± 0.00 & 40.34 ± 0.01 \\
    13,000 & 43.86 ± 0.01                & 43.13 ± 0.00 & 43.16 ± 0.01          & 43.51 ± 0.01 & {\ul \textbf{44.15 ± 0.00}} & \textbf{44.02 ± 0.01}       & 43.32 ± 0.01          & 42.80 ± 0.01 & 43.44 ± 0.01 \\
    15,000 & {\ul \textbf{46.59 ± 0.01}} & 45.77 ± 0.01 & 45.77 ± 0.01          & 46.10 ± 0.00 & \textbf{46.58 ± 0.01}       & 46.55 ± 0.01                & 46.11 ± 0.01          & 45.53 ± 0.01 & 46.20 ± 0.01 \\
    17,000 & {\ul \textbf{48.81 ± 0.01}} & 47.75 ± 0.01 & 47.86 ± 0.01          & 48.36 ± 0.00 & \textbf{48.69 ± 0.01}       & 48.24 ± 0.01                & 48.23 ± 0.01          & 47.58 ± 0.01 & 48.46 ± 0.00 \\
    19,000 & {\ul \textbf{49.41 ± 0.01}} & 48.35 ± 0.01 & 48.66 ± 0.00          & 48.96 ± 0.01 & \textbf{49.28 ± 0.01}       & 48.82 ± 0.01                & 48.99 ± 0.01          & 48.52 ± 0.01 & 49.20 ± 0.01 \\
    21,000 & {\ul \textbf{50.94 ± 0.00}} & 49.87 ± 0.01 & 50.49 ± 0.01          & 50.56 ± 0.01 & \textbf{50.71 ± 0.01}       & 50.34 ± 0.00                & 50.60 ± 0.01          & 50.10 ± 0.01 & 50.54 ± 0.01 \\
    23,000 & {\ul \textbf{52.48 ± 0.00}} & 51.29 ± 0.01 & 52.10 ± 0.01          & 52.04 ± 0.01 & 51.93 ± 0.00                & 52.02 ± 0.00                & \textbf{52.29 ± 0.00} & 51.74 ± 0.01 & 52.06 ± 0.01 \\
    25,000 & {\ul \textbf{56.00 ± 0.00}} & 54.51 ± 0.01 & \textbf{55.73 ± 0.01} & 55.52 ± 0.00 & 55.15 ± 0.01                & 55.70 ± 0.01                & 55.72 ± 0.01          & 55.03 ± 0.01 & 55.63 ± 0.01 \\
    \hline
    \multicolumn{2}{l}{Tiny ImageNet} \\
    \hline
    15,000 & \textbf{23.97 ± 0.00}       & 23.72 ± 0.01 & 23.13 ± 0.00 & 23.80 ± 0.01                & 23.87 ± 0.01 & {\ul \textbf{24.06 ± 0.00}} & 23.71 ± 0.01 & 23.23 ± 0.01          & 23.70 ± 0.01 & \multirow{8}{*}{-}    \\
    20,000 & \textbf{27.76 ± 0.00}       & 27.65 ± 0.00 & 26.72 ± 0.00 & 27.39 ± 0.01                & 27.67 ± 0.01 & {\ul \textbf{28.22 ± 0.01}} & 27.70 ± 0.01 & 27.29 ± 0.01          & 27.65 ± 0.00          \\
    25,000 & \textbf{31.51 ± 0.01}       & 31.22 ± 0.00 & 30.30 ± 0.01 & 30.90 ± 0.00                & 31.18 ± 0.00 & {\ul \textbf{31.64 ± 0.01}} & 31.30 ± 0.01 & 30.82 ± 0.01          & 31.21 ± 0.00          \\
    30,000 & \textbf{34.39 ± 0.01}       & 34.23 ± 0.01 & 33.38 ± 0.01 & 33.72 ± 0.00                & 33.85 ± 0.01 & {\ul \textbf{34.54 ± 0.01}} & 34.27 ± 0.01 & 33.73 ± 0.00          & 34.20 ± 0.00          \\
    35,000 & {\ul \textbf{37.43 ± 0.01}} & 36.77 ± 0.01 & 36.16 ± 0.00 & 36.78 ± 0.00                & 36.38 ± 0.01 & 36.79 ± 0.01                & 36.88 ± 0.00 & 36.11 ± 0.01          & \textbf{36.88 ± 0.01} \\
    40,000 & \textbf{39.39 ± 0.01}       & 38.59 ± 0.01 & 38.34 ± 0.00 & {\ul \textbf{39.51 ± 0.01}} & 38.36 ± 0.01 & 39.02 ± 0.02                & 39.08 ± 0.01 & 38.43 ± 0.01          & 38.90 ± 0.01          \\
    45,000 & \textbf{41.32 ± 0.01}       & 40.25 ± 0.02 & 40.04 ± 0.01 & {\ul \textbf{41.50 ± 0.00}} & 39.97 ± 0.01 & 40.73 ± 0.00                & 40.92 ± 0.02 & 40.73 ± 0.01          & 40.70 ± 0.01          \\
    50,000 & 42.65 ± 0.01                & 41.73 ± 0.01 & 41.40 ± 0.01 & {\ul \textbf{43.01 ± 0.01}} & 41.16 ± 0.02 & 42.70 ± 0.01                & 42.59 ± 0.02 & \textbf{42.90 ± 0.01} & 42.01 ± 0.01  \\
    \hline
    \multicolumn{2}{l}{FOOD101} \\
    \hline
    9,000  & \textbf{26.32 ± 0.00}       & 25.86 ± 0.01 & 25.42 ± 0.00 & 25.63 ± 0.01 & {\ul \textbf{26.49 ± 0.01}} & 25.91 ± 0.00 & 26.06 ± 0.01                & 25.54 ± 0.00 & 25.55 ± 0.01 & \multirow{8}{*}{-} \\
    12,000 & \textbf{30.83 ± 0.01}       & 29.89 ± 0.01 & 29.10 ± 0.00 & 29.77 ± 0.01 & {\ul \textbf{31.03 ± 0.00}} & 30.40 ± 0.00 & 30.44 ± 0.01                & 29.56 ± 0.01 & 29.83 ± 0.01 \\
    15,000 & 35.08 ± 0.01                & 34.19 ± 0.00 & 33.01 ± 0.01 & 34.31 ± 0.01 & {\ul \textbf{35.40 ± 0.01}} & 34.70 ± 0.01 & \textbf{35.12 ± 0.01}       & 33.71 ± 0.01 & 34.32 ± 0.01 \\
    18,000 & \textbf{38.50 ± 0.01}       & 37.15 ± 0.01 & 36.18 ± 0.01 & 37.68 ± 0.01 & {\ul \textbf{38.62 ± 0.01}} & 38.00 ± 0.01 & 38.48 ± 0.00                & 36.82 ± 0.01 & 37.88 ± 0.01 \\
    21,000 & \textbf{42.26 ± 0.01}       & 40.74 ± 0.00 & 39.71 ± 0.01 & 41.53 ± 0.01 & {\ul \textbf{42.37 ± 0.01}} & 41.60 ± 0.01 & 42.16 ± 0.01                & 40.66 ± 0.01 & 41.69 ± 0.01 \\
    24,000 & {\ul \textbf{45.46 ± 0.01}} & 43.81 ± 0.01 & 43.24 ± 0.01 & 44.48 ± 0.01 & \textbf{45.24 ± 0.01}       & 44.58 ± 0.01 & 45.20 ± 0.01                & 43.93 ± 0.01 & 45.17 ± 0.01 \\
    27,000 & {\ul \textbf{48.88 ± 0.00}} & 47.37 ± 0.01 & 46.64 ± 0.01 & 47.75 ± 0.01 & \textbf{48.70 ± 0.01}       & 47.67 ± 0.00 & 48.69 ± 0.01                & 47.49 ± 0.00 & 48.39 ± 0.01 \\
    30,000 & \textbf{51.40 ± 0.00}       & 50.24 ± 0.00 & 49.67 ± 0.01 & 50.20 ± 0.00 & 51.22 ± 0.00                & 49.72 ± 0.01 & {\ul \textbf{51.52 ± 0.01}} & 50.12 ± 0.01 & 50.93 ± 0.01 \\
    \hline
    \multicolumn{2}{l}{ImageNet} \\
    \hline
    192,180  & {\ul \textbf{41.99 ± 0.00}}       & 41.45 ± 0.01 & 41.25 ± 0.00 & 41.62 ± 0.01 & 41.27 ± 0.00 & 41.90 ± 0.01 & \multirow{4}{*}{-} & \textbf{41.92 ± 0.00}          & 41.90 ± 0.01 & \multirow{4}{*}{-} \\
    256,240 & {\ul \textbf{46.62 ± 0.00}}       & 45.90 ± 0.00 & 45.58 ± 0.01 & 46.01 ± 0.01 & 45.41 ± 0.00 & 46.33 ± 0.00  &    & 46.49 ± 0.01 & \textbf{46.52 ± 0.01} \\
    320,300 & {\ul \textbf{50.14 ± 0.00}}                & 49.08 ± 0.01 & 48.86 ± 0.01 & 49.26 ± 0.01 & 48.38 ± 0.01 & 49.33 ± 0.01 &     & 49.72 ± 0.01 & \textbf{49.88 ± 0.01} \\
    384,360 & {\ul \textbf{53.53 ± 0.00}}       & 52.11 ± 0.00 & 52.18 ± 0.00 & 52.16 ± 0.01 & 50.96 ± 0.00 & 51.86 ± 0.00 &   & 52.46 ± 0.00 & \textbf{52.95 ± 0.01} \\
    \hline 
\end{longtable}
\end{landscape}

\end{document}